\begin{document}

\title{Joint Multi-view Unsupervised Feature Selection and Graph Learning}
	
	\author{Si-Guo Fang,
		Dong Huang,
		Chang-Dong Wang,
		and~Yong Tang
		\thanks{S.-G. Fang and D. Huang are with the College of Mathematics and Informatics, South China Agricultural University, Guangzhou, China, and also with Key Laboratory of Smart Agricultural Technology in Tropical South China, Ministry of Agriculture and Rural Affairs, China. \protect\\
			E-mail: siguofang@hotmail.com, huangdonghere@gmail.com.}
			\thanks{C.-D. Wang is with the School of Computer Science and Engineering,
			Sun Yat-sen University, Guangzhou, China, and also with Guangdong Key Laboratory of Information Security Technology, Guangzhou, China.
			E-mail: changdongwang@hotmail.com.}
			\thanks{Y. Tang is with the School of Computer Science, South China Normal University, Guangzhou, China.
			E-mail: ytang@m.scnu.edu.cn.}
		\thanks{© 2023 IEEE.  Personal use of this material is permitted.  Permission from IEEE must be obtained for all other uses, in any current or future media, including reprinting/republishing this material for advertising or promotional purposes, creating new collective works, for resale or redistribution to servers or lists, or reuse of any copyrighted component of this work in other works.}
		}
	
	\markboth{}
	{Shell \MakeLowercase{\textit{et al.}}: A Sample Article Using IEEEtran.cls for IEEE Journals}
	
	\maketitle
	
	\begin{abstract}
		
		Despite significant progress, previous multi-view unsupervised feature selection methods mostly suffer from two limitations. First, they generally utilize \emph{either} cluster structure \emph{or} similarity structure to guide the feature selection, which neglect the possibility of a joint formulation with mutual benefits. Second, they often learn the similarity structure by \emph{either} global structure learning \emph{or} local structure learning, which lack the capability of graph learning with both global and local structural awareness. In light of this, this paper presents a \textit{j}oint \textit{m}ulti-\textit{v}iew unsupervised \textit{f}eature selection and \textit{g}raph learning (JMVFG) approach. Particularly, we formulate the multi-view feature selection  with orthogonal decomposition, where each target matrix is decomposed into a view-specific basis matrix and a view-consistent cluster indicator. The cross-space locality preservation is incorporated to bridge the cluster structure learning in the projected space and the similarity learning (i.e., graph learning) in the original space. Further, a unified objective function is presented to enable the simultaneous learning of the cluster structure, the global and local similarity structures, and the multi-view consistency and inconsistency, upon which an alternating optimization algorithm is developed with theoretically proved convergence. Extensive experiments on a variety of real-world  multi-view datasets demonstrate the superiority of our approach for both the multi-view feature selection and graph learning tasks. The code is available at \url{https://github.com/huangdonghere/JMVFG}.
		
	\end{abstract}
	
	\begin{IEEEkeywords}
		Multi-view data; Data clustering; Unsupervised feature selection; Graph fusion; Orthogonal decomposition
	\end{IEEEkeywords}

\section{Introduction}\label{sec:introduction}

\IEEEPARstart{T}{he} rapid development of information technology gives rise to the mass emergence of high-dimensional data, which can be collected from different sources (or views) and have brought significant challenges to the field of computational intelligence and machine learning. In particular, due to the scarcity of true labels in massive data, the unsupervised learning has recently emerged as a promising direction, where the unsupervised feature selection \cite{solori20_air} and  the clustering analysis \cite{luo21_tetci} may be two of the most popular topics. Although extensive studies have been conducted on each of them, yet surprisingly few efforts have been devoted to the simultaneous and unified modeling of these two research topics for multi-view high-dimensional data. In view of this, this paper  focuses on the intersection of multi-view unsupervised feature selection and multi-view clustering (especially via graph learning).

The clustering analysis is a fundamental yet still challenging research topic in computational intelligence \cite{FastMICE,Fang2023,9964104,9141397,9732521,zhong23_npl}. In single-view clustering, the graph-based methods have been a widely-studied category, where the sample-wise relationships are captured by some graph structure, and the final clustering is typically obtained via graph partitioning. Yet a common limitation to most graph-based clustering methods is that they often rely on some predefined affinity graph, which lack the ability to learn the affinity graph adaptively. To deal with this limitation, some graph learning methods have been developed, aiming to learn a better graph via optimization or some heuristics \cite{niesingle,kangsingle,Huang2021}.

Extending from single-view to multiple views, the multi-view graph learning technique has recently shown its promising advantage in adaptively learning a robust and unified graph from multiple graph structures built in multiple views \cite{SwMC,zhan666,Liang2022,tang21_ijcai}. For example, Nie et al. \cite{SwMC} fused multiple affinity graphs into a unified graph, where the graphs from multiple views are adaptively weighted. Zhan et al. \cite{zhan666} explored the correlation of multiple graph structures, and simultaneously learned a unified graph and the corresponding cluster indicator. Liang et al. \cite{Liang2022} modeled both the multi-view consistency and inconsistency into a graph learning framework for robust multi-view clustering. However, these multi-view graph learning works \cite{SwMC,zhan666,Liang2022} typically rely on the multiple single-view graphs built on the original features, which undermine their ability to deal with multi-view high-dimensional data, where redundant and noisy features may exist or even widely exist. As an early attempt, Xu et al. \cite{xu666} proposed a weighted multi-view clustering method with a feature weighting (or feature selection) strategy, where neither the view-wise relationship nor the structural information of multiple views has been considered in the feature weighting process.

More recently, some unsupervised feature selection methods have been developed for multi-view high-dimensional data, where a crucial issue lies in how the multi-view information can be modeled to guide the feature selection process. In terms of this issue, two types of guidance are often leveraged, that is, the cluster structure (via some cluster indicator) and the similarity structure (via graph learning). Specifically, Liu et al. \cite{RMFS} dealt with the multi-view unsupervised feature selection problem by utilizing the cluster indicator (via pseudo labels produced by multi-view $k$-means clustering) as the global guidance of feature selection. Dong et al. \cite{ACSL} guided the multi-view unsupervised feature selection by learning an adaptive similarity graph that is expected to be close to the weighted sum of the multiple single-view graphs, which typically resorts to the consistency of global structures. Zhang et al. \cite{MAMFS} learned a similarity graph from data representations (via projection matrices on multi-view features) for unsupervised feature selection. Wan et al. \cite{ASE-UMFS} exploited the projected features to learn the similarity graph with the sparsity constraint imposed on the projection matrices. The common intuition behind the similarity learning in \cite{MAMFS} and \cite{ASE-UMFS} is that two samples with a smaller distance in the projected space should be assigned a greater similarity, which typically resort to the consistency of local structures.

Despite the considerable progress in these multi-view unsupervised feature selection methods \cite{RMFS,ACSL,MAMFS,ASE-UMFS} , there are still three critical questions that remain to be addressed.

\begin{description}
	\item[Q1:] Regarding the types of guidance for feature selection, they mostly exploit \emph{either} the cluster structure \cite{RMFS} \emph{or} the similarity structure \cite{ACSL,MAMFS,ASE-UMFS}, where a unified formulation of both types of guidance is still absent. This leads to the first question: \textit{how to jointly exploit \textbf{cluster} structure learning and \textbf{similarity} structure learning, and enable them to promote each other mutually?}
	\item[Q2:] Regarding the similarity learning, they generally learn a similarity structure \emph{either} by global structure learning  \cite{ACSL}  \emph{or} by local structure learning \cite{MAMFS,ASE-UMFS}, which overlook the potential benefits of simultaneous global and local learning. This gives rise to the second question: \textit{how to adaptively leverage \textbf{global} and \textbf{local} structures to enhance the multi-view feature selection and graph learning performance?}
	\item[Q3:] Regarding the common and complementary information of multiple views, starting from Q1 and Q2, the third question that arises is \emph{how to simultaneously and sufficiently investigate multi-view \textbf{consistency} and \textbf{inconsistency} throughout the framework?}
\end{description}

In light of the above questions, this paper proposes a \textbf{j}oint \textbf{m}ulti-\textbf{v}iew unsupervised \textbf{f}eature selection and \textbf{g}raph learning (JMVFG) approach (as illustrated in Fig.~\ref{fig_model}). Specifically, the projection matrix (i.e., the feature selection matrix) is constrained via orthogonal decomposition, where the target matrix of each view is decomposed into a view-specific basis matrix and a view-consistent cluster indicator matrix.  The multi-view local structures \emph{in the projected space} is captured by a unified similarity matrix, which is adaptively and mutually collaborated with the global multi-graph fusion \emph{in the original space}, leading to the cross-space locality preservation mechanism. Throughout the multi-view formulation, the consistency and inconsistency of multiple views are sufficiently considered. To enable the joint learning of the cluster structure (via orthogonal decomposition), the local structures in the projected space, the global structure in the original space, and the multi-view consistency and inconsistency, we present a unified objective function, which can be optimized via an alternating minimization algorithm.
In particular, we theoretically prove the convergence of the proposed optimization algorithm, and validate the convergence property with further empirical study.
Experiments are conducted on a variety of multi-view high-dimensional datasets, which have confirmed the superior performance of the proposed approach for both the multi-view unsupervised feature selection and graph learning (for clustering) tasks.

For clarity, the main contributions of this work can be summarized as follows:
\begin{itemize}
  \item We formulate the multi-view unsupervised feature selection and graph learning into a joint learning framework, through which the consistency and inconsistency of multiple views as well as the global and local structures can be simultaneously exploited.

  \item We employ a cross-space locality preservation mechanism to bridge the gap between the cluster structure learning (via orthogonal decomposition strategy) in the projected space and the similarity structure learning (via graph fusion) in the original space.

  \item A joint multi-view unsupervised feature selection and graph learning approach termed JMVFG is proposed, whose convergence is theoretically proved. Extensive experimental results have confirmed its superiority over the state-of-the-art.

\end{itemize}

The rest of this paper is arranged as follows. Section \ref{sec:work} reviews the related works. Section \ref{sec:method} describes the proposed framework. Section \ref{sec:Opt} presents the optimization algorithm and its theoretical analysis. Section \ref{sec:experiments} reports the experimental results. Finally, we conclude this paper in Section \ref{sec:conclusion}.

\begin{figure*}[!th]\vskip -0.2 in
	\centering
	\includegraphics[width=6.6 in]{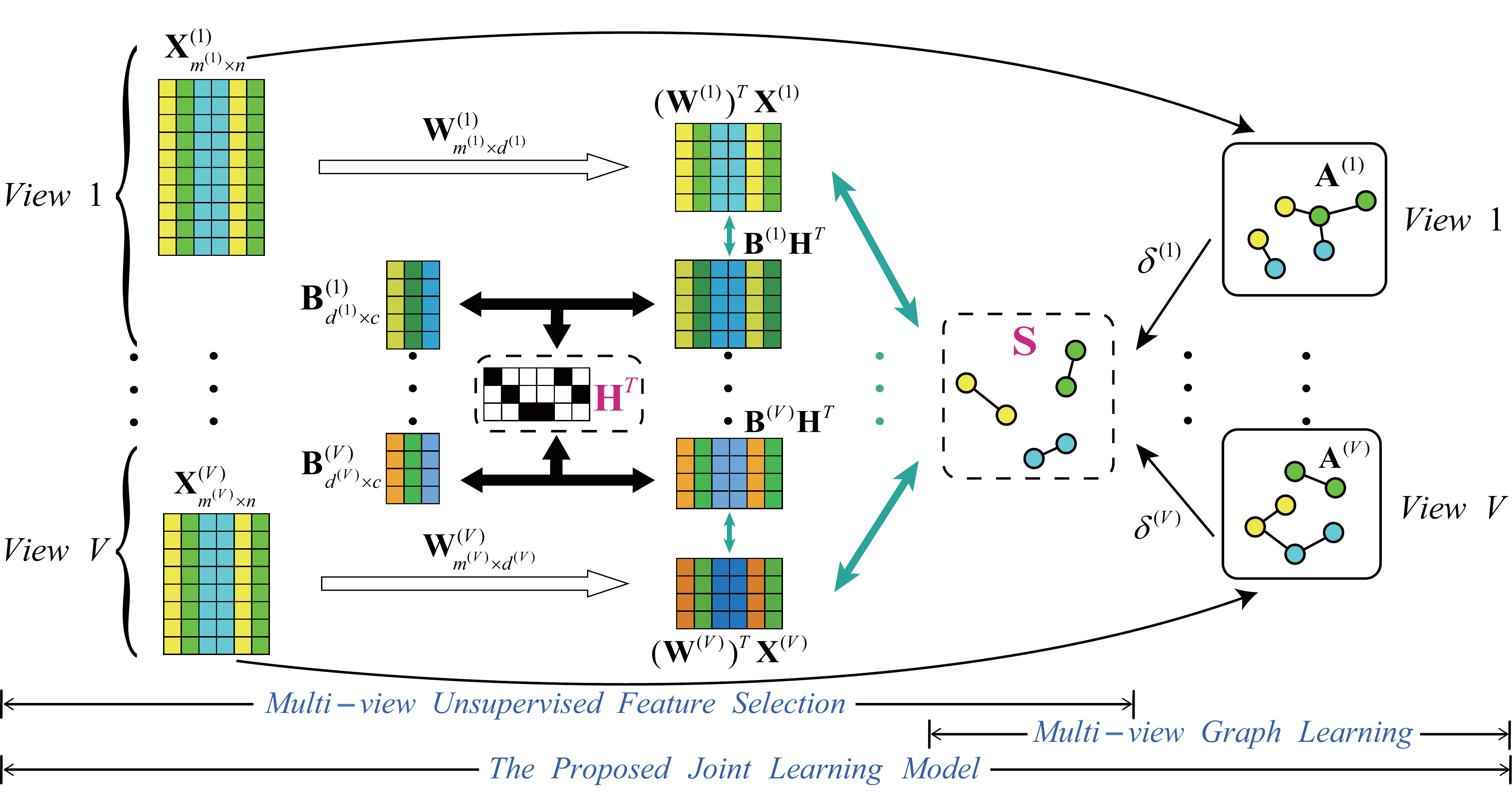}
	\caption{The overall framework of the proposed JMVFG approach.}
	\label{fig_model}
\end{figure*}

\section{Related Work}
\label{sec:work}
In the section, we review the related works on unsupervised feature selection and multi-view clustering (especially via graph learning) in Sections~\ref{sec:ufs} and \ref{sec:mvc}, respectively.

\subsection{Unsupervised Feature Selection}
\label{sec:ufs}
Unsupervised feature selection is an important technique for high-dimensional data analysis \cite{LS,SOCFS,SRCFS,PDPR}. It aims to select a subset of informative features while removing the redundant and noisy ones in an unsupervised manner.
For single-view unsupervised feature selection, a na\"ive method is to rank the feature importance by the feature variance \cite{MaxVar}, where a feature with greater variance is deemed to have higher importance. To exploit the information of local structure, He et al. \cite{LS} proposed the Laplacian score (LS) method, which performs unsupervised feature selection with the help of the Laplacian matrix of the $K$-nearest neighbor ($K$-NN) graph. To incorporate the local structures of multiple subspaces,
Han and Kim \cite{SOCFS} introduced the orthogonal basis clustering into unsupervised feature selection with a target matrix decomposed into an orthogonal basis matrix and an encoding matrix. To improve the orthogonal basis clustering based feature selection method in \cite{SOCFS},
Lim and Kim \cite{PDPR} incorporated a pairwise dependence term into the objective function. And  Lin et al. \cite{OCLSP} utilized a locality preserving term and a graph regularization term in the orthogonal decomposition based formulation. These methods \cite{MaxVar,LS,SOCFS,PDPR,OCLSP,xu22_tetci} are typically designed for single-view unsupervised feature section, but lack the ability to leverage the rich and complementary information in data with multiple views.

For multi-view unsupervised feature selection, quite a few attempts have been made, which typically resort to \emph{either} cluster structure (via pseudo-labels)\cite{MVFS,RMFS} \emph{or} similarity structure (via local or global relationships) \cite{AMFS,OMVFS,ACSL,CvLP-DCL,ASE-UMFS,NSGL,MAMFS}. Tang et al. \cite{MVFS} utilized a pseudo-class label matrix to guide the feature selection on multiple views. Liu et al. \cite{RMFS}  utilized the pseudo-labels generated by multi-view $k$-means to guide the feature selection.
Wang et al. \cite{AMFS} proposed a multi-view unsupervised feature selection method with a local linear regression model to automatically learn multiple view-based Laplacian graphs for locality preservation.
Shao et al. \cite{OMVFS} adopted the nonnegative matrix factorization (NMF) in multi-view feature selection, which processes streaming/large-scale multi-view data in a online fashion.
Dong et al. \cite{ACSL} imposed a rank constraint on the Laplacian matrix to learn a collaborative similarity structure for guiding the feature selection.
Tang et al. \cite{CvLP-DCL} utilized both diversity and consensus across different views to select features, with both local and global similarity structures considered.
Wan et al. \cite{ASE-UMFS} extended the common embedding projection to learn the similarity structure with a sparsity constraint. Bai et al. \cite{NSGL} adopted local structure preserving and rank constraint of Laplacian matrix for multi-view feature selection. Zhang et al. \cite{MAMFS} performed view-specific projection and view-joint projection on features, upon which a similarity graph is learned for unsupervised feature selection.
These methods tend to guide the multi-view unsupervised feature selection by either cluster structure \cite{MVFS,RMFS} or similarity structure \cite{AMFS,OMVFS,ACSL,CvLP-DCL,ASE-UMFS,NSGL,MAMFS}. It remains an open problem how to jointly exploit cluster structure learning and similarity structure learning (i.e., graph learning), coupled with local and global information as well as multi-view consistency and inconsistency, in a multi-view unsupervised feature selection framework.

\subsection{Multi-view Clustering}
\label{sec:mvc}

Multi-view clustering is another popular topic in unsupervised learning, which aims to partition a set of data samples (with multi-view features) into a certain number of clusters \cite{chao21_tai}. In this section, we review the related works on multi-view clustering, with emphasize on the graph learning based multi-view clustering methods.

Many multi-view clustering methods have been designed in recent years \cite{FastMICE,Cai2023,AMGL,DiMSC,liangICDM,LMVSC,FPMVS-CAG,Fang2022,JLMVC,MCGC}, which can be classified into several main categories, such as the subspace learning based methods \cite{DiMSC,LMVSC,FPMVS-CAG}, the multi-kernel learning based methods \cite{JLMVC,CWK2M}, the deep learning based methods \cite{yang2021deep,ke2022efficient}, and the graph learning based methods \cite{AMGL,SwMC,MCGC,Liang2022}.  Specifically, the subspace learning based methods \cite{DiMSC,LMVSC,FPMVS-CAG} aim to discover the cluster structure from a set of low-dimensional subspaces via self-expressive learning. Cao et al. \cite{DiMSC} designed a multi-view subspace clustering based on smoothness and diversity, with the complementarity of multi-view representations explored. Wang et al. \cite{FPMVS-CAG} learned a consistent self-expressive structure via a consensus anchor set and different projection matrices of multiple views.
Alternatively, the multi-kernel learning based methods \cite{JLMVC,CWK2M} seek to combine multiple kernels of different views to learn the cluster structure. Chen et al. \cite{JLMVC} jointly learned the kernel representation tensors and the affinity matrix for multi-view clustering. Liu et al. \cite{CWK2M} proposed a cluster-weighted kernel $k$-means method where the clusters of different views are adaptively weighted.
Additionally, the deep learning based methods \cite{yang2021deep,ke2022efficient} aim to learn clustering-friendly representations by utilizing the deep neural networks. Yang et al. \cite{yang2021deep} employed multiple autoencoder networks and a heterogeneous graph learning module to learn more discriminative latent representations, where intra-view and inter-view collaborative learning are designed to obtain more accurate clustering result. Ke et al. \cite{ke2022efficient} leveraged the pseudo labels generated by the $k$-means algorithm and designed a reconstruction process to learn a common representation for deep multi-view clustering.

Recently the graph learning based methods \cite{AMGL,SwMC,MCGC,Liang2022} have made significant advances, which aim to adaptively learn a robust unified graph from multiple graph structures constructed in multiple views for enhanced clustering. Nie et al. \cite{AMGL} proposed an auto-weighted framework to fuse multiple affinity graphs into a unified graph. To achieve the unified graph with a better cluster structure, Nie et al. \cite{SwMC} further incorporated the Laplacian rank constraint into a self-weighted graph learning framework.
Zhan et al. \cite{MCGC} employed the graph regularization and the Laplacian rank constraint to learn the consensus graph from multiple views. Liang et al. \cite{Liang2022} considered the multi-view consistency and inconsistency in the multi-view graph learning. These graph learning based multi-view clustering methods \cite{AMGL,SwMC,MCGC,Liang2022} typically rely on the multiple graphs built on the original feature space, which restrict their ability to deal with multi-view high-dimensional data with potentially redundant and noisy features.

\begin{table}[!t]
\centering
\caption{Notations}
\label{notations}
\begin{tabular}{m{2.5cm}m{5.5cm}}
\toprule
Notations &Descriptions\\
\midrule
$n$                                     &The number of samples\\
$c$                                     &The number of clusters\\
$V$                                     &The number of views\\
$m^{(v)}$                               &The number of features in the $v$-th view\\
$d^{(v)}$                               &The number of features after projection in the $v$-th view\\
$\textbf{X}^{(v)}\in\mathbb{R}^{m^{(v)}\times n}$       &Data matrix in the $v$-th view\\
$\textbf{W}^{(v)}\in\mathbb{R}^{m^{(v)}\times d^{(v)}}$ &The feature selection and projection matrix\\
$\textbf{B}^{(v)}\in\mathbb{R}^{d^{(v)}\times c}$       &The orthogonal bases matrix of the $v$-th view\\
$\textbf{A}^{(v)}\in\mathbb{R}^{n\times n}$             &The initial similarity matrix of the $v$-th view\\
$\textbf{S}\in\mathbb{R}^{n\times n}$                   &The learned similarity graph\\
$\textbf{H}\in\mathbb{R}^{n\times c}$                   &The cluster indicator matrix\\
\bottomrule
\end{tabular}
\end{table}

\section{Methodology}
\label{sec:method}

In this section, we describe the overall framework of JMVFG. Specifically, the notations are introduced in Section~\ref{sec:notations}. The multi-view feature selection with orthogonality is formulated in Section~\ref{sec:mvufs}. The multi-view graph learning is formulated in Section~\ref{sec:mvgl}. Then, Section~\ref{sec:joint_obj} bridges the gap between multi-view feature selection and graph learning, and presents our joint learning model.

\subsection{Notations}
\label{sec:notations}
In this paper, we denote the matrices by boldface capital letters, the vectors by boldface lower-case letters, and the scalar values by italic letters. For an arbitrary matrix $\textbf{A}\in\mathbb{R}^{n\times m}$, $a_{ij}$ denotes its $(i,j)$-th entry, $\textbf{a}_{i\cdot}$ denotes its $i$-th row vector, and $\textbf{a}_{\cdot j}$ denotes its $j$-th column vector. $Tr(\textbf{A})$ denotes the trace of a matrix $\textbf{A}$ when $\textbf{A}$ is square, and $\textbf{A}^T$ denotes the transpose of $\textbf{A}$. $||\textbf{b}||_2=\sqrt{\sum_{i=1}^nb_i^2}$ denotes the 2-norm of a vector \textbf{b}, where $b_i$ represents the $i$-th entry of $\textbf{b}$. $||\textbf{A}||_F=\sqrt{\sum_{i=1}^n\sum_{j=1}^ma_{ij}^2}=\sqrt{Tr(\textbf{A}\textbf{A}^T)}$ denotes the Frobenius norm of \textbf{A}. $||\textbf{A}||_{2,1}=\sum_{i=1}^n\sqrt{\sum_{j=1}^ma_{ij}^2}$ denotes the $\ell_{2,1}$-norm of $\textbf{A}$. $\textbf{a}\ge 0$ or $\textbf{A}\ge 0$ indicates all of their entries are larger than or equal to zero. $\textbf{I}$ represents the identity matrix. $\textbf{1}=[1,1,...,1]^T$ is a column vector with the corresponding dimension. For clarity, Table \ref{notations} shows some frequently-used notations and their descriptions.

\subsection{Multi-view Feature Selection with Orthogonality}
\label{sec:mvufs}

In this section, we present the model of multi-view unsupervised feature selection with orthogonal cluster structure learning, where the multi-view target matrices are decomposed into multiple view-specific basis matrices and a view-consistent cluster indicator matrix.

For a multi-view dataset, let $\textbf{X}^{(v)}\in\mathbb{R}^{m^{(v)}\times n}$ be the data matrix of its $v$-th view, where $n$ denotes the number of samples, and $m^{(v)}$ denotes the dimension (i.e., the number of features) in the $v$-th view. Let $\textbf{W}^{(v)}\in\mathbb{R}^{m^{(v)}\times d^{(v)}}$ be the projection matrix of the $v$-th view, where $d^{(v)}$ is the number of features after projection. Let $\textbf{T}^{(v)}\in\mathbb{R}^{d^{(v)}\times n}$ be the target matrix. The objective is to make the projected matrix close to the target matrix. Thus, a na\"ive version of the multi-view unsupervised feature selection problem can be written as
\begin{align}
	\label{eq:naive_obj}
\min\limits_{\textbf{W}^{(v)}}\ \sum\limits_{v=1}^V\{||(\textbf{W}^{(v)})^T\textbf{X}^{(v)}-\textbf{T}^{(v)}||_F^2+\eta||\textbf{W}^{(v)}||_{2,1}\},
\end{align}
where $\eta\ge0$ is a hyperparameter to control the regularization term, and $||\textbf{W}^{(v)}||_{2,1}$ is the $\ell_{2,1}$-norm of $\textbf{W}^{(v)}$. The reason for employing the $\ell_{2,1}$-norm are two-fold. First, the $\ell_{2,1}$-norm regularizer can be easily optimized. Second, the $\ell_{2,1}$-norm regularizer induces row sparsity of the feature selection matrix, which can filter out the unimportant features.

The target matrix $\textbf{T}^{(v)}$ is typically unknown in unsupervised scenarios. To enable the cluster structure learning, the orthogonal decomposition can be employed on the target matrix \cite{SOCFS,PDPR,OCLSP}. In a simple situation, we can treat each view \emph{separately} and decompose the target matrix as $\textbf{T}^{(v)}=\textbf{B}^{(v)}(\textbf{H}^{(v)})^T$, where $\textbf{B}^{(v)}\in\mathbb{R}^{d^{(v)}\times c}$ is the orthogonal basis matrix, and $\textbf{H}^{(v)}\in\mathbb{R}^{n\times c}$ is the cluster indicator matrix of the $v$-th view. Then the objective (\ref{eq:naive_obj}) can be written as
\begin{align}
	\label{eq:obj_decomp_separate}
	\min\limits_{\substack{\textbf{W}^{(v)},\textbf{B}^{(v)},\\\textbf{H}^{(v)}}}\ &\sum\limits_{v=1}^V\{||(\textbf{W}^{(v)})^T\textbf{X}^{(v)}-\textbf{B}^{(v)}(\textbf{H}^{(v)})^T||_F^2
	+\eta||\textbf{W}^{(v)}||_{2,1}\}\notag\\
	s.t.\ &\forall v,(\textbf{H}^{(v)})^T\textbf{H}^{(v)}=\textbf{I},\textbf{H}^{(v)}\ge0; (\textbf{B}^{(v)})^T\textbf{B}^{(v)}=\textbf{I},
\end{align}
The orthogonal constraint on $\textbf{B}^{(v)}$ ensures that each column of it is independent. Therefore, the matrix $\textbf{B}^{(v)}$ can be regarded as a set of orthogonal bases (or cluster centers) of the projected space $(\textbf{W}^{(v)})^T\textbf{X}^{(v)}$. The orthogonal and nonnegative constraints on $\textbf{H}^{(v)}$ ensure that each row of it has one non-zero entry, which imposes that each sample can only be associated with one cluster via the cluster indicator matrix $\textbf{H}^{(v)}$.

Note that the formulation of objective (\ref{eq:obj_decomp_separate}) treats each view separately, yet ignores the view-wise relationship. To capture the consistency and complementariness of multiple views, a key concern is how to design the information-sharing mechanism across views. Thereby, we incorporate a shared cluster structure among different views by a view-consistent cluster indicator matrix $\textbf{H}$ (instead of a separate cluster indicator $\textbf{H}^{(v)}$ for each view), while maintaining the orthogonal basis matrix $\textbf{B}^{(v)}$ to reflect the particularity of each view. On the one hand, since the information of samples may be intrinsically different across views, the cluster centers (or orthogonal bases) in different views may be different. Thus the orthogonal basis matrix $\textbf{B}^{(v)}$ is designed to be view-specific to capture the versatile information of multiple views. On the other hand, we seek to build a consistent cluster structure out of the diversity multiple views. Thus the unified cluster indicator matrix $\textbf{H}$ is incorporated to learn the consistent cluster structure from multiple views. By decomposing the target matrix as $\textbf{T}^{(v)}=\textbf{B}^{(v)}\textbf{H}^T$ with both consistency and inconsistency considered, the objective function for multi-view feature selection with orthogonal decomposition can be written as
\begin{align}
\min\limits_{\textbf{W}^{(v)},\textbf{B}^{(v)},\textbf{H}}\ &\sum\limits_{v=1}^V\{||(\textbf{W}^{(v)})^T\textbf{X}^{(v)}-\textbf{B}^{(v)}\textbf{H}^T||_F^2+\eta||\textbf{W}^{(v)}||_{2,1}\}\notag\\
s.t.\ &\textbf{H}^T\textbf{H}=\textbf{I},\textbf{H}\ge0; \forall v, (\textbf{B}^{(v)})^T\textbf{B}^{(v)}=\textbf{I}.
\end{align}
Here, besides ensuring the discriminative ability of the cluster centers (i.e., bases) and the cluster separability of the cluster indicator, the orthogonal constraints on $\textbf{B}^{(v)}$ and $\textbf{H}$ can also avert the model from obtaining the trivial solution with $\textbf{B}^{(v)}=\textbf{0}$ and $\textbf{W}^{(v)}=\textbf{0}$.

\subsection{Multi-view Graph Learning}
\label{sec:mvgl}
With the \emph{cluster structure} learned via orthogonal decomposition, this section proceeds to investigate the \emph{similarity structure} via multi-view graph learning.

Multi-view graph learning seeks to learn a unified graph by fusing the information of multiple graphs built in multiple views. Before delving into the multi-view case, we first consider a single-view data matrix $\textbf{X}\in\mathbb{R}^{m\times n}$. Let $\textbf{A}\in\mathbb{R}^{n\times n}$ be its similarity matrix, where $a_{ij}$ is the $(i,j)$-th entry, corresponding to the similarity between $\textbf{x}_{\cdot i}$ (i.e., the $i$-th sample) and $\textbf{x}_{\cdot j}$ (i.e., the $j$-th sample). The similarity can be computed via the Gaussian kernel function with $K$-nearest neighbors preserved, that is
\begin{align}
\label{eq:3-5}
a_{ij}=\begin{cases} e^{\frac{||\textbf{x}_{\cdot i}-\textbf{x}_{\cdot j}||_2^2}{-2\sigma^2}}, &\text{if~}\textbf{x}_{\cdot i}\in N_K(\textbf{x}_{\cdot j})\text{~or~}\textbf{x}_{\cdot j}\in N_K(\textbf{x}_{\cdot i}),\\
0,&\text{otherwise},\end{cases}
\end{align}
where $N_K(\textbf{x}_{\cdot i})$ denotes the set of $K$-nearest neighbors of sample $\textbf{x}_{\cdot i}$, and $\sigma$ is the Gaussian kernel width which can be set to the median distance between two samples.

For multi-view data, given a data matrix $\textbf{X}^{(v)}\in\mathbb{R}^{m^{(v)}\times n}$ of the $v$-th view, let $\textbf{A}^{(v)}$ denote its similarity matrix. Then the objective function of multi-view graph learning can be formulated as
\begin{align}
	\label{eq:obj_pure_gl}
	\min\limits_{\textbf{S},\bm\delta}\ &\sum\limits_{v=1}^V||\textbf{S}-\delta^{(v)}\textbf{A}^{(v)}||_F^2\notag\\
	s.t.\ &\textbf{S}\textbf{1}=\textbf{1},\textbf{S}\ge0;\bm\delta^T\textbf{1}=1,\bm\delta\ge0,
\end{align}
where $\bm\delta=[\delta^{(1)},\delta^{(2)},...,\delta^{(V)}]^T$ and $\textbf{1}=[1,1,...,1]^T$ are the column vectors with corresponding dimension. Parameter $\bm\delta$ is a learnable parameter, which measures the importance of each view and can be automatically learned during the optimization of the objective function. For different datasets, the learned value of $\bm\delta$ may be different. Since the multiple similarity graphs $\textbf{A}^{(v)}$ can capture the inconsistency or diversity among multiple views, the purpose of the multi-view graph learning is to learn a consistent graph $\textbf{S}$ from the multiple view-specific graphs. Note that $\textbf{S}\textbf{1}=\textbf{1}$ and $\textbf{S}\ge0$ guarantee $0\le s_{ij}\le 1$, and $\bm\delta^T\textbf{1}=1$ and $\bm\delta\ge0$ guarantee $0\le \delta^{(v)}\le 1$, where $\delta^{(v)}$ adjusts the influence of different views. When considering the construction of $\textbf{A}^{(v)}$, it is expected that $\textbf{S}$ and $\delta^{(v)}\textbf{A}^{(v)}$ should be at a similar scale. With $\bm\delta^T\textbf{1}=1$, the average value of each $\delta^{(v)}$ is $1/V$. Thus the $(i,j)$-th entry of $\textbf{A}^{(v)}$ can be defined as $a_{ij}^{(v)}=V\cdot \bar{a}_{ij}^{(v)}/\sum_j\bar{a}_{ij}^{(v)}$, where $\bar{a}_{ij}^{(v)}$ is calculated by \eqref{eq:3-5} with $\textbf{X}^{(v)}$ being the input data. Thereby, the sum of the entries of each row of $\textbf{A}^{(v)}$ is $V$, which on the one hand ensures that $\textbf{S}$ and $\delta^{(v)}\textbf{A}^{(v)}$ are at a similar scale, and on the other hand preserves $\delta^{(v)}$ to adjust the view influence.

With the global graph structures of multiple views fused into a unified graph, the next task is to connect the cluster structure in the previous section and the global similarity structure in this section, together with further graph regularization on the local structures, in a unified framework.

\subsection{Joint Learning Model}
\label{sec:joint_obj}
Note that the cluster structure learning in Section~\ref{sec:mvufs} is formulated in the projected space, while the similarity learning (or graph learning) in Section~\ref{sec:mvgl} in the original space. In this section, we proceed to bridge the gap between the cluster structure and the similarity structure, as well as the gap between the projected space and the original space, and thus present a joint learning model.

To enable the joint learning of the feature selection matrices and the global graph structure, we incorporate a multi-view graph regularization term with \emph{cross-space locality preservation}. The intuition is that if two data points are similar to each other in the original space, then they should also be close to each other in the projected space.
Let $\bar{\textbf{S}}=(\textbf{S}^T+\textbf{S})/2$. If $\textbf{S}$ is symmetric, then it holds that $\bar{\textbf{S}}=\textbf{S}$.  The graph regularization term imposed between the pair-wise distance of each view in the projected space and the fused similarity in the original space is defined as
\begin{align}
\label{eq:3-10}
&\sum\limits_{i=1}^n\sum\limits_{j=1}^n\frac{1}{2}||(\textbf{W}^{(v)})^T\textbf{x}_{\cdot i}^{(v)}-(\textbf{W}^{(v)})^T\textbf{x}_{\cdot j}^{(v)}||_2^2\bar{s}_{ij}\notag\\
=&Tr({\textbf{W}^{(v)}}^T\textbf{X}^{(v)}\textbf{L}{\textbf{X}^{(v)}}^T\textbf{W}^{(v)}).
\end{align}

The projection matrix $\textbf{W}^{(v)}$ can filter out the redundant or noisy features in the original data space to some extent and makes the information of locality preservation more accurate. In the meantime, it  bridges the gap between the cluster structure learning in the projected space and the similarity learning in the original space. Thus the feature selection matrices $\textbf{W}^{(v)}$ of multiple views (with inconsistency) and the unified similarity structure $\bar{\textbf{S}}$ (with multi-view consistency) can be mutually promoted by each other with cross-space locality preservation.

Finally, by unifying the multi-view unsupervised feature selection and the multi-view graph learning through the graph regularization with cross-space locality preservation, the overall objective function can be formulated as
\begin{align}
	\label{eq:3-11}
		&\min\limits_{  \substack{\textbf{W}^{(v)},\textbf{B}^{(v)},\bm\delta,\\\textbf{H},\textbf{S}}  }\sum_{v=1}^V\{\underbrace{||(\textbf{W}^{(v)})^T\textbf{X}^{(v)}-\textbf{B}^{(v)}\textbf{H}^T||_F^2}_{\text{Feature Selection with Orthogonality}}+\underbrace{\eta||\textbf{W}^{(v)}||_{2,1}}_{\text{Regularization}}\notag\\
		&+\underbrace{\gamma Tr[(\textbf{W}^{(v)})^T\textbf{X}^{(v)}\textbf{L}(\textbf{X}^{(v)})^T\textbf{W}^{(v)}]}_{\text{Cross-Space Locality Preservation}}+\underbrace{\beta||\textbf{S}-\delta^{(v)}\textbf{A}^{(v)}||_F^2}_{\text{Graph Learning}}\}\notag\\
		&~~~~s.t.\ \textbf{S1}=\textbf{1},\textbf{S}\ge0;\textbf{H}^T\textbf{H}=\textbf{I},\textbf{H}\ge0;\bm\delta^T\textbf{1}=1,\bm\delta\ge0;\notag\\
		&~~~~~~~~~~~\forall v,(\textbf{B}^{(v)})^T\textbf{B}^{(v)}=\textbf{I},
\end{align}
where $\gamma\ge0$ and $\beta\ge0$ are hyperparameters that control the influences of the cross-space locality preservation term and the graph learning term, respectively.

Therefore, the unified model can jointly utilize the cluster structure with orthogonality, the global structure in the original space, and the local structures in the projected space, where the multi-view consistency and inconsistency are also exploited. The unsupervised feature selection can find a more compact subset from the original features and filter out the redundant, irrelevant, and noisy features, which makes the similarity measurement between samples in graph learning more accurate. In the meantime, the multi-view graph learning can guide the feature selection process by fusing the local structure information of the original space to avoid selecting unreasonable features. By optimizing the objective \eqref{eq:3-11}, the feature selection matrix $\textbf{W}^{(v)}$ can be obtained for feature selection, and at the same time the graph structure $\textbf{S}$ can be learned for further spectral clustering. For clarity, we illustrate the overall framework of JMVFG in Fig.~\ref{fig_model}.

\newtheorem{theorem}{Theorem}
\newtheorem{lemma}{Lemma}
\newtheorem{definition}{Definition}

\section{Optimization and Theoretical Analysis}
\label{sec:Opt}

In this section, we present an alternating optimization algorithm to solve the problem~\eqref{eq:3-11} in Section~\ref{sec:optimize_algo}, and provide the theoretical convergence analysis and the computational complexity analysis in Sections~\ref{sec:theoretical_convergence} and \ref{sec:theoretical_complexity}, respectively.

\subsection{Optimization of Problem}
\label{sec:optimize_algo}
The objective function \eqref{eq:3-11} can be transformed into the following equivalent form:
\begin{align}
\label{eq:32-1}
\min\limits_{  \substack{\textbf{W}^{(v)},\textbf{B}^{(v)},\bm\delta,\\\textbf{H},\textbf{S},\textbf{Z}}  }\sum_{v=1}^V\{|&|(\textbf{W}^{(v)})^T\textbf{X}^{(v)}-\textbf{B}^{(v)}\textbf{H}^T||_F^2+\eta||\textbf{W}^{(v)}||_{2,1}\notag\\
+\gamma Tr[(\textbf{W}^{(v)})^T&\textbf{X}^{(v)}\textbf{L}(\textbf{X}^{(v)})^T\textbf{W}^{(v)}]+\beta||\textbf{S}-\delta^{(v)}\textbf{A}^{(v)}||_F^2\}\notag\\
s.t.\ \textbf{S1}=\textbf{1},\textbf{S}\ge&0;\textbf{H}^T\textbf{H}=\textbf{I},\textbf{H}=\textbf{Z},\textbf{Z}\ge0;\bm\delta^T\textbf{1}=1,\bm\delta\ge0;\notag\\
\forall v,(&\textbf{B}^{(v)})^T\textbf{B}^{(v)}=\textbf{I},
\end{align}
where $\textbf{Z}$ is an auxiliary matrix that satisfies $\textbf{H}=\textbf{Z}$ and $\textbf{Z}\ge0$. Through the auxiliary matrix $\textbf{Z}$, the nonnegative constraint on $\textbf{H}$ is transferred to $\textbf{Z}$. Although the decision variables and constraints are added, the equality constraint $\textbf{H}=\textbf{Z}$ can be integrated into the objective function through the penalty term, which is more conducive to the subsequent optimization solution. The objective function with the penalty term can be written as
\begin{align}
\label{eq:32-2}
\min\limits_{  \substack{\textbf{W}^{(v)},\textbf{B}^{(v)},\bm\delta,\\\textbf{H},\textbf{S},\textbf{Z}}  }\sum_{v=1}^V\{|&|(\textbf{W}^{(v)})^T\textbf{X}^{(v)}-\textbf{B}^{(v)}\textbf{H}^T||_F^2+\eta||\textbf{W}^{(v)}||_{2,1}\notag\\
+\gamma Tr[(\textbf{W}^{(v)})^T&\textbf{X}^{(v)}\textbf{L}(\textbf{X}^{(v)})^T\textbf{W}^{(v)}]+\beta||\textbf{S}-\delta^{(v)}\textbf{A}^{(v)}||_F^2\}\notag\\
+\alpha||\textbf{H}-\textbf{Z}||_F^2~\,&\notag\\
s.t.\ \textbf{S1}=\textbf{1},\textbf{S}&\ge0;\textbf{H}^T\textbf{H}=\textbf{I},\textbf{Z}\ge0;\bm\delta^T\textbf{1}=1,\bm\delta\ge0;\notag\\
\forall v&,(\textbf{B}^{(v)})^T\textbf{B}^{(v)}=\textbf{I};
\end{align}
where $\alpha>0$ is the penalty factor. Provided that the penalty factor $\alpha$ is large enough, which is not necessary to go to infinity, the local optimal solution of the objective \eqref{eq:32-1} can be obtained by minimizing the objective \eqref{eq:32-2}. In the following, we utilize the \emph{alternating direction method of multipliers} (ADMM) to solve the above problem. Particularly, the optimization procedure of JMVFG is given in Algorithm 1.

\subsubsection{Update $\bm\delta$}
With other variables fixed, the subproblem that only relates to $\bm\delta=[\delta^{(1)},\delta^{(2)},...,\delta^{(V)}]^T$ can be written as
\begin{align}
\label{eq:3-36}
&\min_{\bm\delta}~\mathcal{L}(\bm\delta)=\sum_{v=1}^V~||\textbf{S}-\delta^{(v)}\textbf{A}^{(v)}||_F^2\notag\\
&~s.t.~~\bm\delta^T\textbf{1}=1,\bm\delta\ge0.
\end{align}
The Lagrange function of the objective \eqref{eq:3-36} is
\begin{align}
L&=\frac{1}{2}\sum_v||\textbf{S}-\delta^{(v)}\textbf{A}^{(v)}||_F^2-\lambda(\bm\delta^T\textbf{1}-1)-\bm\mu^T\bm\delta\notag\\
&=\sum_v\{\frac{1}{2}[Tr(\textbf{S}\textbf{S}^T)+(\delta^{(v)})^2Tr(\textbf{A}^{(v)}(\textbf{A}^{(v)})^T)]\notag\\
&~~~~-\delta^{(v)}Tr(\textbf{A}^{(v)}\textbf{S}^T)\}-\lambda(\bm\delta^T\textbf{1}-1)-\bm\mu^T\bm\delta,
\end{align}
where $\lambda$ is a Lagrange multiplier scalar and $\bm\mu=[\mu^{(1)},\mu^{(2)},...,\mu^{(V)}]^T$ is a Lagrange multiplier vector. Then we have the partial derivative of the decision variable $\delta^{(v)}$ (for $v=1,\cdots,V$) as follows
\begin{align}
\frac{\partial L}{\partial\delta^{(v)}}&=\delta^{(v)}Tr(\textbf{A}^{(v)}(\textbf{A}^{(v)})^T)-Tr(\textbf{A}^{(v)}\textbf{S}^T)-\lambda-\mu^{(v)}.
\end{align}
Assume that the optimal solution of \eqref{eq:3-36} is $\bm\delta^\ast=[\delta^{(1)\ast},\delta^{(2)\ast},...,\delta^{(V)\ast}]^T$, and the corresponding Lagrange multipliers are $\lambda^\ast$ and $\bm\mu^\ast$. Let $Tr(\textbf{A}^{(v)}\textbf{S}^T)=p^{(v)}$ and $Tr[\textbf{A}^{(v)}(\textbf{A}^{(v)})^T]=q^{(v)}>0$. According to the \emph{Karush-Kuhn-Tucker} (KKT) conditions \cite{COPT} and the original constraints, we can have
\begin{numcases}{}
\forall v,~\delta^{(v)\ast}q^{(v)}-p^{(v)}-\lambda^\ast-\mu^{(v)\ast}=0; \label{eq:3-39a}\\
\forall v,~\mu^{(v)\ast}\ge0,~\mu^{(v)\ast}\delta^{(v)\ast}=0; \label{eq:3-39b}\\
(\bm\delta^\ast)^T\textbf{1}=1,~\bm\delta^\ast\ge0. \label{eq:3-39c}
\end{numcases}
From Eq.~\eqref{eq:3-39a}, we have $\delta^{(v)\ast}=(p^{(v)}+\lambda^\ast+\mu^{(v)\ast})/q^{(v)}$. With $(\bm\delta^\ast)^T\textbf{1}=1$, we know that
\begin{align}
\label{eq:3-42}
\lambda^\ast=\frac{1-\sum_v\frac{p^{(v)}+\mu^{(v)\ast}}{q^{(v)}}}{\sum_v\frac{1}{q^{(v)}}}.
\end{align}
According to Eqs.~\eqref{eq:3-39a} and \eqref{eq:3-42}, we have
\begin{align}
\delta^{(v)\ast}=\frac{p^{(v)}}{q^{(v)}}+\frac{1-\sum_v\frac{p^{(v)}}{q^{(v)}}}{q^{(v)}\sum_v\frac{1}{q^{(v)}}}-\frac{\sum_v\frac{\mu^{(v)\ast}}{q^{(v)}}}{q^{(v)}\sum_v\frac{1}{q^{(v)}}}+\frac{\mu^{(v)\ast}}{q^{(v)}}.
\end{align}
Let $k^{(v)}=\frac{p^{(v)}}{q^{(v)}}+\frac{1-\sum_v\frac{p^{(v)}}{q^{(v)}}}{q^{(v)}\sum_v\frac{1}{q^{(v)}}}$. From Eq.~\eqref{eq:3-39b}, we can obtain
\begin{align}
\label{eq:3-44}
\delta^{(v)\ast}&=k^{(v)}-\frac{\sum_v\frac{\mu^{(v)\ast}}{q^{(v)}}}{q^{(v)}\sum_v\frac{1}{q^{(v)}}}+\frac{\mu^{(v)\ast}}{q^{(v)}}\notag\\
&=(k^{(v)}-\frac{\sum_v\frac{\mu^{(v)\ast}}{q^{(v)}}}{q^{(v)}\sum_v\frac{1}{q^{(v)}}})_+,
\end{align}
where $x_+=max(x,0)$. Thereby, we can obtain the optimal solution $\bm\delta^\ast$ if we know $\sum_v\frac{\mu^{(v)\ast}}{q^{(v)}}$.

Let $\widetilde{\mu}^\ast=\sum_v\frac{\mu^{(v)\ast}}{q^{(v)}}$. From Eqs.~\eqref{eq:3-39b} and \eqref{eq:3-44}, we have
\begin{align}
\frac{\mu^{(v)\ast}}{q^{(v)}}&=-k^{(v)}+\frac{\widetilde{\mu}^\ast}{q^{(v)}\sum_v\frac{1}{q^{(v)}}}+\delta^{(v)\ast}\notag\\
&=(-k^{(v)}+\frac{\widetilde{\mu}^\ast}{q^{(v)}\sum_v\frac{1}{q^{(v)}}})_+.
\end{align}
Then we have $\widetilde{\mu}^\ast=\sum_v(-k^{(v)}+\frac{\widetilde{\mu}^\ast}{q^{(v)}\sum_v\frac{1}{q^{(v)}}})_+$. Further, we define a function as
\begin{align}
f(\widetilde{\mu})=\sum_v(-k^{(v)}+\frac{\widetilde{\mu}}{q^{(v)}\sum_v\frac{1}{q^{(v)}}})_+-\widetilde{\mu},
\end{align}
The root of $f(\widetilde{\mu})=0$ is $\widetilde{\mu}^\ast$. The root $\widetilde{\mu}^\ast$ can be obtained by the Newton method, that is, $\widetilde{\mu}_{(t+1)}=\widetilde{\mu}_{(t)}-f(\widetilde{\mu}_{(t)})/f^{\prime}(\widetilde{\mu}_{(t)})$. Here, the index $t$ represents the $t$-th iteration of the Newton method. Thus we have completed the update of $\bm\delta$.

Regarding the monotonicity, let $\bm\delta_{(t+1)}$ be the objective value obtained in the $(t+1)$-th iteration, we have Theorem~\ref{the:delta}.
\begin{theorem}
\label{the:delta}
In iteration $t+1$, $\mathcal{L}(\bm\delta_{(t+1)})\le\mathcal{L}(\bm\delta_{(t)})$ after solving the subproblem \eqref{eq:3-36}.
\end{theorem}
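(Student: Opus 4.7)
The plan is to argue that the $\bm\delta$-subproblem is a convex program whose exact minimizer is produced by the KKT-based update, so the new iterate is a global minimizer over the feasible set and hence beats (or ties) the previous iterate. I would organize the argument in four steps.

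First, I would verify convexity of $\mathcal{L}(\bm\delta)$ over the simplex $\{\bm\delta:\bm\delta^T\mathbf{1}=1,\bm\delta\ge 0\}$. Expanding $\|\mathbf{S}-\delta^{(v)}\mathbf{A}^{(v)}\|_F^2$ gives a separable sum of univariate quadratics with coefficient $q^{(v)}=Tr[\mathbf{A}^{(v)}(\mathbf{A}^{(v)})^T]>0$ on the quadratic term, so $\mathcal{L}$ is a strictly convex quadratic in $\bm\delta$. The feasible set is a closed convex simplex, so the problem admits a unique global minimizer $\bm\delta^\ast$, and Slater's condition holds (e.g. $\bm\delta=\mathbf{1}/V$ is strictly feasible).

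Second, I would appeal to the fact that for a convex program with linear constraints and Slater's condition, the KKT conditions are necessary and sufficient for global optimality. The derivation preceding the theorem shows that any feasible $\bm\delta^\ast$ satisfying the KKT system \eqref{eq:3-39a}–\eqref{eq:3-39c} can be rewritten in the closed form \eqref{eq:3-44}, which in turn reduces to finding the root $\widetilde\mu^\ast$ of the piecewise linear, monotone equation $f(\widetilde\mu)=0$. Hence, solving $f(\widetilde\mu)=0$ (via Newton's method) and substituting back yields exactly the global minimizer $\bm\delta^\ast$.

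Third, I would combine these pieces: since the update sets $\bm\delta_{(t+1)}=\bm\delta^\ast=\arg\min_{\bm\delta\in\text{simplex}}\mathcal{L}(\bm\delta)$, and since $\bm\delta_{(t)}$ is itself feasible (which can be established inductively from the initialization on the simplex), we get
\begin{equation*}
\mathcal{L}(\bm\delta_{(t+1)})=\min_{\bm\delta\in\text{simplex}}\mathcal{L}(\bm\delta)\le \mathcal{L}(\bm\delta_{(t)}),
\end{equation*}
which is the claim. I would also remark that the inequality is strict unless $\bm\delta_{(t)}$ already satisfies the KKT conditions, in which case the algorithm has reached a stationary point of this subproblem.

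The main obstacle I anticipate is the Newton-method step: strictly speaking, monotone descent of the outer scheme requires that the inner Newton iteration actually returns the root $\widetilde\mu^\ast$. I would handle this by noting that $f(\widetilde\mu)$ is a continuous, piecewise-affine, strictly decreasing function of $\widetilde\mu$ (each summand $(-k^{(v)}+\widetilde\mu/(q^{(v)}\sum_v 1/q^{(v)}))_+$ is monotone nondecreasing, while the $-\widetilde\mu$ term dominates negatively only after suitable rearrangement; more carefully, $f$ is nonincreasing with a unique root that can be bracketed), so Newton's method (or a safeguarded bisection fallback) converges to $\widetilde\mu^\ast$ to arbitrary precision. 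Under the standard assumption that this inner solve is carried out to convergence, the exact-minimizer argument above applies and the theorem follows.
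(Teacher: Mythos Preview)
Your proposal is correct and follows essentially the same approach as the paper: establish that the $\bm\delta$-subproblem is convex (via the separable quadratic expansion with $q^{(v)}>0$ and affine constraints), invoke sufficiency of the KKT conditions to conclude the update returns the global minimizer, and then compare against the feasible previous iterate. Your treatment is slightly more thorough than the paper's (you invoke Slater's condition and discuss convergence of the inner Newton solve, which the paper takes for granted), but the argument is the same.
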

Please see supplementary material for detailed proof.

\subsubsection{Update $\textbf{W}^{(v)}$}

With other variables fixed, the subproblem that only relates to $\textbf{W}^{(v)}$ can be written as
\begin{align}
\min\limits_{\textbf{W}^{(v)}}\ \mathcal{L}(\textbf{W}^{(v)})=&||(\textbf{W}^{(v)})^T\textbf{X}^{(v)}-\textbf{B}^{(v)}\textbf{H}^T||_F^2+\eta||\textbf{W}^{(v)}||_{2,1}\notag\\
&+\gamma Tr[(\textbf{W}^{(v)})^T\textbf{X}^{(v)}\textbf{L}(\textbf{X}^{(v)})^T\textbf{W}^{(v)}].
\end{align}

Let $\mathcal{L}(\textbf{W}^{(v)})=\mathcal{L}_1(\textbf{W}^{(v)})+\eta\mathcal{L}_2(\textbf{W}^{(v)})+\gamma\mathcal{L}_3(\textbf{W}^{(v)})$. The first term $\mathcal{L}_1(\textbf{W}^{(v)})$ can be written as
\begin{align}
&\mathcal{L}_{1}(\textbf{W}^{(v)})=||(\textbf{W}^{(v)})^T\textbf{X}^{(v)}-\textbf{B}^{(v)}\textbf{H}^{T}||_{F}^{2}\notag\\
&=Tr\{[(\textbf{W}^{(v)})^T\textbf{X}^{(v)}-\textbf{B}^{(v)}\textbf{H}^{T}]^{T}[(\textbf{W}^{(v)})^T\textbf{X}^{(v)}-\textbf{B}^{(v)}\textbf{H}^{T}]\}\notag\\ &=Tr\{[(\textbf{X}^{(v)})^T\textbf{W}^{(v)}-\textbf{H}(\textbf{B}^{(v)})^T][(\textbf{X}^{(v)})^T\textbf{W}^{(v)}-\textbf{H}(\textbf{B}^{(v)})^T]^T\}.
\end{align}
Then we have
\begin{align}
\frac{d\mathcal{L}_{1}(\textbf{W}^{(v)})}{d\textbf{W}^{(v)}}&=2\textbf{X}^{(v)}[(\textbf{X}^{(v)})^{T}\textbf{W}^{(v)}-\textbf{H}(\textbf{B}^{(v)})^{T}]\notag\\
&=2\textbf{X}^{(v)}(\textbf{X}^{(v)})^{T}\textbf{W}^{(v)}-2\textbf{X}^{(v)}\textbf{H}(\textbf{B}^{(v)})^{T}.
\end{align}
The second term $\mathcal{L}_2(\textbf{W}^{(v)})$ can be written as
\begin{align}
\mathcal{L}_2(\textbf{W}^{(v)})=||\textbf{W}^{(v)}||_{2,1}=\sum_{i}\sqrt{\sum_{j}(w_{ij}^{(v)})^2}.
\end{align}
Then we have
\begin{align}
&\frac{d\mathcal{L}_2(\textbf{W}^{(v)})}{d\textbf{W}^{(v)}}=(\frac{\partial\sum_{i}\sqrt{\sum_{j}(w_{ij}^{(v)})^2}}{\partial w_{ij}^{(v)}})_{m^{(v)}\times d^{(v)}}\notag\\
&=\begin{pmatrix}
\frac{2w_{11}^{(v)}}{2||\textbf{W}^{(v)}_{1\cdot}||_2}&\cdots&\frac{2w_{1d^{(v)}}^{(v)}}{2||\textbf{W}^{(v)}_{1\cdot}||_2}\\
\vdots&&\vdots\\
\frac{2w_{m^{(v)}1}^{(v)}}{2||\textbf{W}^{(v)}_{m^{(v)}\cdot}||_2}&\cdots&\frac{2w_{m^{(v)}d^{(v)}}^{(v)}}{2||\textbf{W}^{(v)}_{m^{(v)}\cdot}||_2}
\end{pmatrix}=2\textbf{D}^{(v)}\textbf{W}^{(v)},
\end{align}
where $\textbf{D}^{(v)}\in\mathbb{R}^{m^{(v)}\times m^{(v)}}$ is a diagonal matrix, whose diagonal entry is computed as $d_{ii}^{(v)}=1/(2||\textbf{W}^{(v)}_{i\cdot}||_2)$, for $i=1,\cdots,m^{(v)}$.

The derivative of the third term $\mathcal{L}_3(\textbf{W}^{(v)})$ can be obtained as
\begin{align}
\frac{d\mathcal{L}_3(\textbf{W}^{(v)})}{d\textbf{W}^{(v)}}&=[\textbf{X}^{(v)}\textbf{L}(\textbf{X}^{(v)})^T+\textbf{X}^{(v)}\textbf{L}^T(\textbf{X}^{(v)})^T]\textbf{W}^{(v)}\notag\\
&=2\textbf{X}^{(v)}\textbf{L}(\textbf{X}^{(v)})^T\textbf{W}^{(v)}.
\end{align}

By setting the derivative of the objective function $\mathcal{L}(\textbf{W}^{(v)})$ to zero, we can have
\begin{align}
&\textbf{X}^{(v)}(\textbf{X}^{(v)})^T\textbf{W}^{(v)}-\textbf{X}^{(v)}\textbf{H}(\textbf{B}^{(v)})^T+\eta\textbf{D}^{(v)}\textbf{W}^{(v)}\notag\\
&+\gamma\textbf{X}^{(v)}\textbf{L}(\textbf{X}^{(v)})^T\textbf{W}^{(v)}=0.
\end{align}

Then the variable $\textbf{W}^{(v)}$ can be updated as
\begin{align}
\label{eq:3-34}
\textbf{W}^{(v)}=&[\textbf{X}^{(v)}(\textbf{X}^{(v)})^T+\gamma\textbf{X}^{(v)}\textbf{L}(\textbf{X}^{(v)})^T\notag\\
&+\eta\textbf{D}^{(v)}]^{-1}\textbf{X}^{(v)}\textbf{H}(\textbf{B}^{(v)})^T.
\end{align}

Regarding the monotonicity, Theorem 2 is given below.
\begin{theorem}
\label{the:W}
In iteration $t+1$, $\mathcal{L}(\textbf{W}^{(v)}_{(t+1)})\le\mathcal{L}(\textbf{W}^{(v)}_{(t)})$ after updating with Eq. \eqref{eq:3-34}.
\end{theorem}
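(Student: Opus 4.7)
The plan is to recognize that (\ref{eq:3-34}) is a majorization--minimization (IRLS) step for the non-smooth $\ell_{2,1}$ penalty and to prove monotonicity by sandwiching $\mathcal{L}$ between a quadratic surrogate and the previous iterate. Specifically, I would define
\[
\mathcal{G}(\textbf{W};\textbf{D}_{(t)}^{(v)}) = \mathcal{L}_1(\textbf{W}) + \eta\,\mathrm{Tr}\big(\textbf{W}^T\textbf{D}_{(t)}^{(v)}\textbf{W}\big) + \gamma\,\mathcal{L}_3(\textbf{W}),
\]
where $\textbf{D}_{(t)}^{(v)}$ is the diagonal reweighting built from $\textbf{W}_{(t)}^{(v)}$. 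Since $\mathcal{L}_1$, $\mathcal{L}_3$ and the reweighted quadratic are all convex quadratics in $\textbf{W}$, the first-order optimality condition for $\mathcal{G}$ coincides with the linear system whose closed-form solution is exactly (\ref{eq:3-34}). Thus $\textbf{W}_{(t+1)}^{(v)}$ is the unique global minimizer of $\mathcal{G}(\cdot;\textbf{D}_{(t)}^{(v)})$, giving
\[
\mathcal{G}\big(\textbf{W}_{(t+1)}^{(v)};\textbf{D}_{(t)}^{(v)}\big) \le \mathcal{G}\big(\textbf{W}_{(t)}^{(v)};\textbf{D}_{(t)}^{(v)}\big).
\]

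The second ingredient is a row-wise majorization of the $\ell_{2,1}$ norm. From concavity of $\sqrt{x}$ one obtains, for any nonzero $\textbf{u}$, $\|\textbf{v}\|_2 \le \|\textbf{v}\|_2^2/(2\|\textbf{u}\|_2) + \|\textbf{u}\|_2/2$, with equality when $\|\textbf{v}\|_2 = \|\textbf{u}\|_2$. Applying this row by row to $\textbf{W}_{(t+1)}^{(v)}$ with $\textbf{W}_{(t)}^{(v)}$ as reference, and recalling the definition of $\textbf{D}_{(t)}^{(v)}$, yields
\[
\eta\|\textbf{W}_{(t+1)}^{(v)}\|_{2,1} \le \eta\,\mathrm{Tr}\big((\textbf{W}_{(t+1)}^{(v)})^T\textbf{D}_{(t)}^{(v)}\textbf{W}_{(t+1)}^{(v)}\big) + \tfrac{\eta}{2}\|\textbf{W}_{(t)}^{(v)}\|_{2,1},
\]
while the equality case at $\textbf{W}_{(t)}^{(v)}$ provides the identity $\eta\,\mathrm{Tr}((\textbf{W}_{(t)}^{(v)})^T\textbf{D}_{(t)}^{(v)}\textbf{W}_{(t)}^{(v)}) = (\eta/2)\|\textbf{W}_{(t)}^{(v)}\|_{2,1}$. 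Adding $\mathcal{L}_1(\textbf{W}_{(t+1)}^{(v)}) + \gamma\mathcal{L}_3(\textbf{W}_{(t+1)}^{(v)})$ to both sides of the inequality and invoking the surrogate-minimality established above, then substituting the identity on the right, collapses the chain to $\mathcal{L}(\textbf{W}_{(t+1)}^{(v)}) \le \mathcal{L}(\textbf{W}_{(t)}^{(v)})$.

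The main obstacle, and essentially the only subtle point, is twofold. First, one must justify that the matrix $\textbf{X}^{(v)}(\textbf{X}^{(v)})^T + \gamma\textbf{X}^{(v)}\textbf{L}(\textbf{X}^{(v)})^T + \eta\textbf{D}^{(v)}$ is invertible, so that (\ref{eq:3-34}) is well defined and $\mathcal{G}$ admits a genuine unique minimizer; this follows from $\textbf{L}\succeq\textbf{0}$ (the graph Laplacian) together with strict positivity of the diagonal of $\textbf{D}^{(v)}$, the latter requiring the standard IRLS safeguard of replacing $1/(2\|\textbf{w}_{i\cdot}^{(v)}\|_2)$ by $1/(2\sqrt{\|\textbf{w}_{i\cdot}^{(v)}\|_2^2+\varepsilon})$ whenever a row vanishes. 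Second, the factor of $1/2$ introduced by the concavity majorization must be tracked carefully so that the surrogate-tightness identity at $\textbf{W}_{(t)}^{(v)}$ exactly cancels the leftover $(\eta/2)\|\textbf{W}_{(t)}^{(v)}\|_{2,1}$ term; once this bookkeeping is in place, the rest of the argument is purely algebraic.
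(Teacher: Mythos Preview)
Your proposal is correct and follows essentially the same majorization--minimization (IRLS) argument as the paper: define the quadratic surrogate $\mathcal{G}$ with the reweighting matrix $\textbf{D}_{(t)}^{(v)}$, use that $\textbf{W}_{(t+1)}^{(v)}$ minimizes it, and combine with the concavity-of-$\sqrt{x}$ inequality applied row-wise. The only cosmetic difference is that the paper writes the scalar inequality in the equivalent subtracted form $\|\textbf{u}\|_2 - \|\textbf{u}\|_2^2/(2\|\textbf{u}_t\|_2) \le \|\textbf{u}_t\|_2 - \|\textbf{u}_t\|_2^2/(2\|\textbf{u}_t\|_2)$, which lets one add it directly to the surrogate inequality without tracking the $(\eta/2)\|\textbf{W}_{(t)}^{(v)}\|_{2,1}$ constant; your bookkeeping is slightly longer but reaches the same conclusion, and your explicit remarks on invertibility and the $\varepsilon$-safeguard are points the paper leaves implicit.
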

Please see supplementary material for detailed proof.

\subsubsection{Update $\textbf{B}^{(v)}$}
With other variables fixed, the subproblem that only relates to $\textbf{B}^{(v)}$ can be written as
\begin{align}
\label{eq:322-1}
\min\limits_{\textbf{B}^{(v)}}\ &\mathcal{L}(\textbf{B}^{(v)})=||(\textbf{W}^{(v)})^T\textbf{X}^{(v)}-\textbf{B}^{(v)}\textbf{H}^T||_F^2\notag\\
s.t.\ &(\textbf{B}^{(v)})^T\textbf{B}^{(v)}=\textbf{I}.
\end{align}
According to \cite{OMF}, the solution of \eqref{eq:322-1} is
\begin{align}
\label{eq:3-n36}
\textbf{B}^{(v)}=\textbf{V}_{\textbf{B}^{(v)}}\textbf{I}_{d^{(v)}\times c}\textbf{U}_{\textbf{B}^{(v)}}^T,
\end{align}
where $\textbf{U}_{\textbf{B}^{(v)}}$ and $\textbf{V}_{\textbf{B}^{(v)}}$ are obtained from the singular value decomposition (SVD) of $\textbf{H}^T(\textbf{X}^{(v)})^T\textbf{W}^{(v)}=\textbf{U}_{\textbf{B}^{(v)}}\bm\Sigma\textbf{V}_{\textbf{B}^{(v)}}^T$.

The subproblem \eqref{eq:322-1} is an \emph{orthogonal Procrustes problem} (OPP) \cite{OMF}, so the updating of $\textbf{B}^{(v)}$ in each iteration can make the objective value decrease monotonically. Please see supplementary material for further analysis.

\subsubsection{Update $\textbf{Z}$}
With other variables fixed, the subproblem that only relates to $\textbf{Z}$ can be written as
\begin{align}
\min\limits_\textbf{Z}~\mathcal{L}(\textbf{Z})=||\textbf{H}-\textbf{Z}||_F^2\quad s.t.~\textbf{Z}\ge0.
\end{align}
The solution to the subproblem is easily written as
\begin{align}
\label{eq:3-38}
\textbf{Z}=(z_{ij})_{n\times c},\quad z_{ij}=max(h_{ij},0),
\end{align}
where $h_{ij}$ represents $(i,j)$-th entry of the matrix $\textbf{H}$.
It is obvious that $\mathcal{L}(\textbf{Z}_{(t+1)})\le\mathcal{L}(\textbf{Z}_{(t)})$ in iteration $t+1$.

\subsubsection{Update $\textbf{H}$}
With other variables fixed, the subproblem that only relates to $\textbf{H}$ can be written as
\begin{align}
\label{eq:323-1}
\min\limits_{\textbf{H}}\ \mathcal{L}(\textbf{H})&=\sum\limits_{v=1}^V||(\textbf{W}^{(v)})^T\textbf{X}^{(v)}-\textbf{B}^{(v)}\textbf{H}^T||_F^2+\alpha||\textbf{H}-\textbf{Z}||_F^2\notag\\
&\ s.t.\ ~\textbf{H}^T\textbf{H}=\textbf{I}.
\end{align}
It can further be written as
\begin{align}
\mathcal{L}(\textbf{H})=&\sum\limits_{v=1}^VTr\{[(\textbf{W}^{(v)})^T\textbf{X}^{(v)}-\textbf{B}^{(v)}\textbf{H}^T][(\textbf{W}^{(v)})^T\textbf{X}^{(v)}\notag\\
&-\textbf{B}^{(v)}\textbf{H}^T]^T\}+\alpha Tr[(\textbf{H}-\textbf{Z})(\textbf{H}-\textbf{Z})^T]\notag\\
=&Tr\{\sum\limits_{v=1}^V[(\textbf{W}^{(v)})^T\textbf{X}^{(v)}(\textbf{X}^{(v)})^T\textbf{W}^{(v)}\notag\\
&-2(\textbf{X}^{(v)})^T\textbf{W}^{(v)}\textbf{B}^{(v)}\textbf{H}^T+\textbf{B}^{(v)}\textbf{H}^T\textbf{H}(\textbf{B}^{(v)})^T]\notag\\
&+\alpha(\textbf{H}^T\textbf{H}-2\textbf{Z}\textbf{H}^T+\textbf{Z}\textbf{Z}^T)\},
\end{align}
Then we have
\begin{align}
&\textbf{H}=\mathop{\arg\min}\limits_{\textbf{H}^T\textbf{H}=\textbf{I}}\mathcal{L}(\textbf{H})\notag\\
\Rightarrow\ &\textbf{H}=\mathop{\arg\min}\limits_{\textbf{H}^T\textbf{H}=\textbf{I}}Tr[-\sum\limits_{v=1}^V(\textbf{X}^{(v)})^T\textbf{W}^{(v)}\textbf{B}^{(v)}\textbf{H}^T-\alpha\textbf{Z}\textbf{H}^T]\notag\\
\Rightarrow\ &\textbf{H}=\mathop{\arg\min}\limits_{\textbf{H}^T\textbf{H}=\textbf{I}}-Tr\{[\sum\limits_{v=1}^V(\textbf{X}^{(v)})^T\textbf{W}^{(v)}\textbf{B}^{(v)}+\alpha\textbf{Z}]\textbf{H}^T\}\notag\\
\label{eq:40}
\Rightarrow\ &\textbf{H}=\mathop{\arg\min}\limits_{\textbf{H}^T\textbf{H}=\textbf{I}}||\textbf{H}-[\sum\limits_{v=1}^V(\textbf{X}^{(v)})^T\textbf{W}^{(v)}\textbf{B}^{(v)}+\alpha\textbf{Z}]||_F^2\\
\label{eq:323-3}
\Rightarrow\ &\textbf{H}=\textbf{V}_\textbf{H}\textbf{I}_{n\times c}\textbf{U}_\textbf{H}^T.
\end{align}
Similar to updating $\textbf{B}^{(v)}$, according to \cite{OMF}, Eq.~\eqref{eq:323-3} is the solution to \eqref{eq:323-1}, where $\textbf{U}_\textbf{H}$ and $\textbf{V}_\textbf{H}$ are obtained from the SVD of $[\sum\limits_{v=1}^V(\textbf{X}^{(v)})^T\textbf{W}^{(v)}\textbf{B}^{(v)}+\alpha\textbf{Z}]^T=\textbf{U}_\textbf{H}\bm\Sigma\textbf{V}_\textbf{H}^T$.

The subproblem \eqref{eq:40} is also an OPP. Therefore, we can have $\mathcal{L}(\textbf{H}_{(t+1)})\le\mathcal{L}(\textbf{H}_{(t)})$ for iteration $t+1$.

\subsubsection{Update $\textbf{S}$}
With other variables fixed, let $\textbf{Y}^{(v)}=(\textbf{W}^{(v)})^T\textbf{X}^{(v)}\in\mathbb{R}^{d^{(v)}\times n}$, then the subproblem that only relates to $\textbf{S}$ can be written as
\begin{align}
\min\limits_\textbf{S}\ \mathcal{L}(\textbf{S})&=\sum\limits_{v=1}^V[\gamma Tr(\textbf{Y}^{(v)}\textbf{L}(\textbf{Y}^{(v)})^T)+\beta||\textbf{S}-\delta^{(v)}\textbf{A}^{(v)}||_F^2]\notag\\
&s.t.\ ~\textbf{S1}=\textbf{1},\textbf{S}\ge0.
\end{align}
where $\textbf{L}=\textbf{P}-\bar{\textbf{S}}$ and $\bar{\textbf{S}}=(\textbf{S}^T+\textbf{S})/2$. For convenience, assuming that $\bar{\textbf{S}}=\textbf{S}$, the objective function can be written as
\begin{align}
&\min\limits_\textbf{S}~\sum_{v,i,j}[\frac{\gamma}{2}||\textbf{y}_{\cdot i}^{(v)}-\textbf{y}_{\cdot j}^{(v)}||_2^2s_{ij}+\beta(s_{ij}-\delta^{(v)}a_{ij}^{(v)})^2].
\end{align}
Let $g_{ij}^{(v)}=||\textbf{y}_{\cdot i}^{(v)}-\textbf{y}_{\cdot j}^{(v)}||_2^2$, then it is equivalent to
\begin{align}
&\min\limits_\textbf{S}~\sum_{i,j}\sum_{v}[\frac{\gamma}{2\beta}g_{ij}^{(v)}s_{ij}+s_{ij}^2-2s_{ij}\delta^{(v)}a_{ij}^{(v)}]\notag\\
\Leftrightarrow&\min\limits_\textbf{S}~\sum_{i,j}[s_{ij}(\frac{\gamma}{2\beta}\sum_{v}g_{ij}^{(v)}-2\sum_{v}\delta^{(v)}a_{ij}^{(v)})+Vs_{ij}^2]\notag\\
\Leftrightarrow&\min\limits_\textbf{S}~\sum_{i,j}(s_{ij}-\frac{2\sum_{v}\delta^{(v)}a_{ij}^{(v)}-\frac{\gamma}{2\beta}\sum_{v}g_{ij}^{(v)}}{2V})^2\notag\\
\Leftrightarrow&\min\limits_\textbf{S}~\sum_{i}||\textbf{s}_{i\cdot}-\frac{2\sum_{v}\delta^{(v)}\textbf{a}_{i\cdot}^{(v)}-\frac{\gamma}{2\beta}\sum_{v}\textbf{g}_{i\cdot}^{(v)}}{2V}||_2^2.\notag\\
\end{align}
Let $\textbf{r}_{i\cdot}=\frac{2\sum_{v}\delta^{(v)}\textbf{a}_{i\cdot}^{(v)}-\frac{\gamma}{2\beta}\sum_{v}\textbf{g}_{i\cdot}^{(v)}}{2V}$, the subproblem can be re-written as
\begin{equation}
\label{eq:3-nn46}
{\rm for~each~}i~\left\{
\begin{aligned}
&\min~||\textbf{s}_{i\cdot}-\textbf{r}_{i\cdot}||_2^2\\
&~s.t.~~\textbf{s}_{i\cdot}\textbf{1}=1,\textbf{s}_{i\cdot}\ge0.
\end{aligned}
\right.
\end{equation}
The subproblem \eqref{eq:3-nn46} can be solved with a closed form solution according to \cite{Anew}.

Regarding the monotonicity, Theorem 3 is given below. Specifically, if the affinity matrices $\textbf{A}^{(v)}$ (for $v=1,\cdots,V$) are initialized to be symmetric, then the objective function monotonically decreases when updating $\textbf{S}$.

\begin{theorem}
\label{the:S}
If $\textbf{A}^{(v)}$ is symmetric for each $v$, in iteration $t+1$, $\mathcal{L}(\textbf{S}_{(t+1)})\le\mathcal{L}(\textbf{S}_{(t)})$ after solving the subproblem \eqref{eq:3-nn46}.
\end{theorem}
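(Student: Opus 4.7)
The plan is to argue that the closed-form update arising from subproblem \eqref{eq:3-nn46} is a global minimizer of the true $\mathcal{L}(\textbf{S})$ over the feasible set, so monotonicity follows immediately. The apparent obstacle is that \eqref{eq:3-nn46} was derived under the paper's ``for convenience'' assumption $\bar{\textbf{S}}=\textbf{S}$, which the constraints $\textbf{S}\textbf{1}=\textbf{1},\textbf{S}\ge 0$ do not enforce; I would therefore need to bridge the gap between the simplified objective that \eqref{eq:3-nn46} literally minimizes and the true $\mathcal{L}(\textbf{S})$ whose Laplacian is built from $\bar{\textbf{S}}=(\textbf{S}+\textbf{S}^T)/2$.

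First, I would show that the two objectives coincide pointwise on the feasible set. Expanding $\gamma\,Tr(\textbf{Y}^{(v)}\textbf{L}(\textbf{Y}^{(v)})^T)=\tfrac{\gamma}{2}\sum_{i,j}g_{ij}^{(v)}\bar{s}_{ij}$ with $g_{ij}^{(v)}=\|\textbf{y}_{\cdot i}^{(v)}-\textbf{y}_{\cdot j}^{(v)}\|_2^2$, which is manifestly symmetric in $(i,j)$, a direct relabeling yields $\sum_{i,j}g_{ij}^{(v)}\bar{s}_{ij}=\tfrac{1}{2}\sum_{i,j}g_{ij}^{(v)}(s_{ij}+s_{ji})=\sum_{i,j}g_{ij}^{(v)}s_{ij}$, so the locality term agrees with its simplified counterpart without any extra assumption. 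Invoking the theorem's hypothesis that each $\textbf{A}^{(v)}$ is symmetric lets the same index-swap argument handle the Frobenius term cleanly and preserves the $i\leftrightarrow j$ symmetry of the entrywise expansion that the paper uses to complete the square. Putting these together, $\mathcal{L}(\textbf{S})$ can be written in precisely the simplified form appearing in the paragraph preceding \eqref{eq:3-nn46}.

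Second, the simplified objective is a strictly convex quadratic that separates across rows once the square is completed, reducing to $\beta V\sum_i\|\textbf{s}_{i\cdot}-\textbf{r}_{i\cdot}\|_2^2$ up to a constant, with the constraints also row-separable. Each row-wise problem is a Euclidean projection onto the probability simplex, for which the closed form cited via \cite{Anew} gives the unique global minimizer. Consequently $\textbf{S}_{(t+1)}$ is a global minimizer of $\mathcal{L}$ over the feasible set, and in particular $\mathcal{L}(\textbf{S}_{(t+1)})\le\mathcal{L}(\textbf{S}_{(t)})$.

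The main obstacle is the first step: aligning the symmetrized Laplacian (depending on $\bar{\textbf{S}}$) with the unsymmetric $\textbf{S}$ that actually appears in \eqref{eq:3-nn46}. The symmetry of $g^{(v)}$ handles the locality term automatically, so the essential role of the hypothesis ``$\textbf{A}^{(v)}$ symmetric'' is to keep the Frobenius term's row-wise structure undisturbed under the same index-swap manipulations; once this identification is secured, the monotonicity claim reduces to the textbook optimality of simplex projection.
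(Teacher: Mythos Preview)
Your proposal is correct and in fact somewhat cleaner than the paper's own argument. Both routes end at the same place---convexity of the row-separable quadratic and global optimality of the simplex projection via KKT---but they differ in how they justify replacing the symmetrized $\bar{\textbf{S}}$ in the Laplacian by $\textbf{S}$. The paper states (just before the theorem in the appendix) that symmetric $\textbf{A}^{(v)}$ forces the row-wise minimizer $\textbf{S}$ to be symmetric, so that $\bar{\textbf{S}}=\textbf{S}$ holds \emph{at the solution}, and then invokes the main-text derivation verbatim. You instead show the two objectives agree \emph{pointwise} on the feasible set via the symmetry of $g_{ij}^{(v)}$, which removes any circularity and makes the argument self-contained.

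One minor point: your attribution of the role of the hypothesis ``$\textbf{A}^{(v)}$ symmetric'' to the Frobenius term is off. That term is already $\sum_{i,j}(s_{ij}-\delta^{(v)}a_{ij}^{(v)})^2$ in the original objective---it involves $\textbf{S}$, not $\bar{\textbf{S}}$---so no index-swap is needed there, and completing the square works regardless. Your own locality-term computation actually shows the pointwise identification goes through without the symmetry of $\textbf{A}^{(v)}$ at all; the hypothesis is really only used in the paper's route (to make the \emph{solution} symmetric), not in yours. So your argument proves a slightly stronger statement than what is claimed.
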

Please see supplementary material for detailed proof.

\begin{table}[!h]
	\centering
	\begin{tabular}{m{8.4cm}}
		\toprule
		\textbf{Algorithm 1} Joint Multi-view Unsupervised Feature Selection and Graph Learning (JMVFG)\\
		\midrule
		\textbf{Input:} Multi-view data matrices $\{\textbf{X}^{(v)}\}_{v=1}^V$, the parameters $\eta$, $\beta$, and $\gamma$, and the number of clusters $c$.\\
		\textbf{Preparation:} The number of nearest neighbors $K=5$. The number of features after projection $d^{(1)}=d^{(2)}=...=d^{(V)}=c$. Maximum and minimum normalization of $\{\textbf{X}^{(v)}\}_{v=1}^V$. Construction of $\textbf{A}^{(v)}$.\\
		\textbf{Initialization:} $\bm\delta=\textbf{1}/V$. Initialize $\textbf{S}=\sum_v\delta^{(v)}\textbf{A}^{(v)}$. Use $k$-means to initialize $\textbf{H}$. The feature selection matrix $\textbf{W}^{(v)}=[\textbf{I}_{d^{(v)}\times d^{(v)}}~\textbf{0}]^T$. Initialize $\textbf{D}^{(v)}$ as an identity matrix. Initialize $\textbf{B}^{(v)}$ by Eq.~\eqref{eq:3-n36}.\\
		\textbf{repeat}\\
		\begin{enumerate}
			\item For each $v$, update $\delta^{(v)}$ by Eq.~\eqref{eq:3-44}.
			\item For each $v$, update $\textbf{W}^{(v)}$ by Eq.~\eqref{eq:3-34}, and update $\textbf{D}^{(v)}$ with the diagonal entry $d_{ii}^{(v)}=1/(2||\textbf{W}^{(v)}_{i\cdot}||_2)$.
			\item For each $v$, update $\textbf{B}^{(v)}$ by \eqref{eq:3-n36}.
			\item Update $\textbf{Z}$ by Eq.~\eqref{eq:3-38}.
			\item Update $\textbf{H}$ by Eq.~\eqref{eq:323-3}.
			\item Update $\textbf{S}$ by solving subproblem \eqref{eq:3-nn46}.
		\end{enumerate}
		\textbf{until} Convergence\\
		\textbf{Output:} For the feature selection task, calculate the feature scores $||\textbf{w}_{i\cdot}^{(v)}||_2^2$, $i=1,2,...,m^{(v)}$, for each $v$. The corresponding feature indexes are obtained after the feature scores are sorted from high to low. For the graph learning task, the similarity graph $\textbf{S}$ is obtained and can be used for spectral clustering.\\
		\bottomrule
	\end{tabular}
\end{table}

\subsection{Theoretical Convergence Analysis}
\label{sec:theoretical_convergence}
From the theoretical analysis of the monotonicity at the end of each subsection in Section~\ref{sec:optimize_algo}, we know that the overall objective function \eqref{eq:32-2} monotonically decreases in the updating of each variable under Algorithm 1, except that the monotonicity of updating $\textbf{S}$ requires the affinity matrix $\textbf{A}^{(v)}$ to be symmetric, which can be easily satisfied (e.g., by using a symmetric $K$-NN graph).  Despite this,  we empirically observe that even when $\textbf{A}^{(v)}$ is not initialized to be symmetric (e.g., by performing the row normalization on $\textbf{A}^{(v)}$), the proposed algorithm can still achieve fast and good convergence.
The convergence condition of JMVFG can be defined as $|(obj_{i-1}-obj_i)/obj_{i-1}|\le\epsilon$, where $obj_i$ is the objective function value of problem \eqref{eq:3-11} in the $i$-th iteration, and $\epsilon>0$ is a small constant.
Please see supplementary material for more theoretical details, and Section~\ref{sec:empirical_convergence} for empirical evaluations.

\subsection{Computational Complexity Analysis}
\label{sec:theoretical_complexity}
This section analyzes the computational complexity of the proposed JMVFG algorithm.

Let $T$ denote the number of iterations. For each iteration, the computational complexity of updating $\bm\delta$ is $\mathcal{O}(n^2V)$. Note that $d^{(1)}=d^{(2)}...=d^{(V)}=c$. The cost of updating $\textbf{W}^{(v)}$ (for all $v$) is $\mathcal{O}(({m^{(1)}}^3+...+{m^{(V)}}^3)+({m^{(1)}}^2+...+{m^{(V)}}^2)n+(m^{(1)}+...+m^{(V)})n^2)$. The cost of updating $\textbf{D}^{(v)}$ (for all $v$) is $\mathcal{O}((m^{(1)}+...+m^{(V)})c)$. The matrix manipulations and singular value decomposition for updating $\textbf{B}^{(v)}$ (for all $v$) take $\mathcal{O}((m^{(1)}+...+m^{(V)})(nc+c^2)+c^3V)$ time. It takes $\mathcal{O}(nc)$ time to update $\textbf{Z}$, and $\mathcal{O}((m^{(1)}+...+m^{(V)})nc+n^2c)$ time to update $\textbf{H}$.
The costs of updating $\textbf{S}$ and $\textbf{L}$ are $\mathcal{O}(n\log n)$ and $\mathcal{O}(n^2)$, respectively. With $c,V\ll n,m^{(1)},...,m^{(V)}$, the computational complexity of the JMVFG algorithm can be written as $\mathcal{O}(T(({m^{(1)}}^3+...+{m^{(V)}}^3)+({m^{(1)}}^2+...+{m^{(V)}}^2)n+(m^{(1)}+...+m^{(V)})n^2))$.

\section{Experiments}
\label{sec:experiments}
In this section, we conduct experiments to compare the proposed JMVFG approach against the other multi-view unsupervised feature selection approaches for \emph{the feature selection task}, and against the other multi-view clustering approaches for \emph{the clustering task} (as multi-view graph learning can naturally lead to multi-view clustering by performing spectral clustering on the learned graph).

\subsection{Datasets and Evaluation Measures}
In our experiments, eight real-world multi-view datasets are used. Specifically, the MSRC-v1 dataset \cite{CvLP-DCL} consists of $210$ images, which includes seven object classes (such as building, tree, cow, airplane, car and so forth) and is associated with four views, namely, CM(24D), GIST(512D), LBP(256D), and GENT(254D).
The ORL dataset \cite{MAMFS} consists of $400$ face images from $40$ people with varying facial expressions, angles, illuminations, taking times, and facial wears, which is associated with three views, namely,   Intensity(4096D), LBP(3304D), and Gabor(6750D). The WebKB-Texas dataset is a text dataset, consisting of 187 documents with two views (which are 187-dimensional and 1703-dimensional, respectively). The Caltech101 dataset \cite{li04_caltech101} consists of 9144 images captured for the object recognition problem. In the experiments, we use two widely-used subsets of Caltech101, i.e., Caltech101-7 and Caltech101-20, which include $7$ classes and $20$ classes, respectively, and are associated with three views, namely, GIST(512D), HOG(1984D), and LBP(928D)
The Handwritten dataset \cite{LMVSC} consists of $2000$ images of handwritten digits, which includes ten classes and is associated with six views, namely, PIX(240D), FOU(76D), FAC(216D), ZER(47D), KAR(64D), and MOR(6D). The Mfeat dataset \cite{lin18_dasfaa} is another widely-used version of the Handwritten dataset, consisting of three views, namely, FOU(76D), FAC(216D), and ZER(47D). The Outdoor-Scene dataset \cite{ACSL} consists of $2688$ color images, which includes eight outdoor scene categories and is associated with four views, namely, GIST(512D), HOG(432D), LBP(256D), and GABOR(48D).

Notably, our framework can simultaneously perform multi-view unsupervised feature selection and multi-view graph learning. Therefore, its performance will be evaluated for two corresponding tasks.
For the first task of multi-view unsupervised feature selection, a commonly-adopted protocol \cite{OMVFS,ACSL} is to perform $k$-means on the data matrix with the selected features (by different multi-view feature selection methods) and then evaluate the quality of the selected features by the clustering performance. For the second task of multi-view clustering via graph learning, a general practice \cite{MVGL,Liang2022} is to perform spectral clustering on the learned graph and then evaluate the clustering result.

For both tasks, to compare the clustering results, three widely-used evaluation measures are adopted, namely,  normalized mutual information (NMI) \cite{Huang2018}, accuracy (ACC) \cite{zhang20_pami}, and purity (PUR) \cite{ch20_tcyb}. For all the three evaluation measures, larger values indicate better performance.

\begin{table*}[!t]
\centering
\caption{Average performances ($mean\%_{\pm std\%}$) over 20 runs by different multi-view unsupervised feature selection algorithms.
On each dataset, the best score is highlighted in bold.}
\label{table:FS}

\begin{tabular}{m{0.75cm}<{\centering}m{0.93cm}<{\centering}m{1.25cm}<{\centering}m{1.15cm}<{\centering}m{1.25cm}<{\centering}m{1.55cm}<{\centering}m{1.66cm}<{\centering}m{1.15cm}<{\centering}m{1.3cm}<{\centering}m{1.15cm}<{\centering}m{1.12cm}<{\centering}}
\toprule
Metric &Method &MSRC-v1 &ORL &WebKB-Texas  &Caltech101-7  &Caltech101-20 &Mfeat &Handwritten  &Outdoor-Scene  &Avg. rank\\
\midrule
\multirow{5}{*}{NMI}
&ALLfea	&$56.74_{\pm5.69}^*$	&$75.17_{\pm2.50}^*$	&$14.55_{\pm10.90}^*$	&$47.37_{\pm2.94}^*$	&$57.27_{\pm1.79}^*$	&$72.83_{\pm4.66}^*$	&$72.12_{\pm4.01}^*$	&$53.65_{\pm1.59}^*$	&4.25\\
&OMVFS	&$41.79_{\pm2.71}^*$	&$75.26_{\pm2.02}^*$	&$28.03_{\pm4.64}^*$	&$50.55_{\pm3.28}^*$	&$56.98_{\pm1.95}^*$	&$68.91_{\pm4.48}^*$	&$66.53_{\pm2.65}^*$	&$47.30_{\pm1.82}^*$	&4.50\\
&ACSL	&$74.90_{\pm4.66}$	&$78.55_{\pm1.82}$	&$\textbf{32.93}_{\pm3.71}$	&$54.54_{\pm3.67}$	&$60.05_{\pm1.64}$	&$80.19_{\pm3.89}$	&$85.73_{\pm3.09}$	&$55.85_{\pm1.45}$	&1.88\\
&NSGL	&$59.93_{\pm5.04}^*$	&$76.42_{\pm1.90}^*$	&$32.06_{\pm3.45}$	&$49.22_{\pm3.14}^*$	&$59.31_{\pm1.62}^*$	&$74.84_{\pm3.47}^*$	&$74.40_{\pm3.84}^*$	&$52.06_{\pm0.56}^*$	&3.13\\
&\textbf{JMVFG}	&$\textbf{74.94}_{\pm3.59}$	&$\textbf{78.90}_{\pm2.43}$	&$31.55_{\pm5.03}$	&$\textbf{57.74}_{\pm6.77}$	&$\textbf{60.49}_{\pm1.62}$	&$\textbf{81.83}_{\pm3.58}$	&$\textbf{87.43}_{\pm5.13}$	&$\textbf{56.00}_{\pm1.61}$	&\textbf{1.25}\\
\midrule
\multirow{5}{*}{ACC}
&ALLfea	&$64.88_{\pm8.28}^*$	&$56.40_{\pm4.01}^*$	&$55.67_{\pm6.85}^*$	&$50.16_{\pm5.98}^*$	&$43.29_{\pm3.52}^*$	&$73.84_{\pm9.38}^*$	&$70.23_{\pm7.13}^*$	&$61.87_{\pm4.82}^*$	&4.25\\
&OMVFS	&$48.21_{\pm4.40}^*$	&$56.15_{\pm3.81}^*$	&$57.27_{\pm8.12}$	&$55.61_{\pm6.13}^*$	&$45.65_{\pm5.07}^*$	&$70.28_{\pm9.11}^*$	&$65.48_{\pm4.43}^*$	&$58.28_{\pm2.54}^*$	&4.50\\
&ACSL	&$\textbf{80.64}_{\pm6.46}$	&$\textbf{61.03}_{\pm3.18}$	&$62.33_{\pm4.05}$	&$58.32_{\pm5.24}$	&$49.46_{\pm4.38}$	&$81.96_{\pm6.47}$	&$\textbf{85.87}_{\pm7.76}$	&$67.53_{\pm5.59}$	&\textbf{1.63}\\
&NSGL	&$70.52_{\pm5.71}^*$	&$58.31_{\pm3.37}$	&$\textbf{62.57}_{\pm5.08}$	&$53.75_{\pm5.20}^*$	&$46.72_{\pm4.15}^*$	&$80.32_{\pm6.61}$	&$76.01_{\pm6.88}^*$	&$61.25_{\pm3.45}^*$	&3.00\\
&\textbf{JMVFG}	&$80.55_{\pm7.14}$	&$60.74_{\pm4.23}$	&$59.76_{\pm4.44}$	&$\textbf{62.54}_{\pm9.73}$	&$\textbf{50.78}_{\pm4.33}$	&$\textbf{83.45}_{\pm8.12}$	&$85.81_{\pm10.33}$	&$\textbf{68.37}_{\pm4.84}$	&\textbf{1.63}\\
\midrule
\multirow{5}{*}{PUR}
&ALLfea	&$68.38_{\pm7.41}^*$	&$61.59_{\pm3.46}^*$	&$61.28_{\pm5.68}^*$	&$85.79_{\pm1.36}^*$	&$76.67_{\pm1.33}^*$	&$77.30_{\pm7.03}^*$	&$73.38_{\pm5.93}^*$	&$64.97_{\pm2.92}^*$	&4.13\\
&OMVFS	&$51.71_{\pm3.92}^*$	&$60.75_{\pm3.23}^*$	&$68.34_{\pm3.13}^*$	&$87.70_{\pm1.80}^*$	&$76.30_{\pm1.68}^*$	&$73.90_{\pm6.50}^*$	&$70.07_{\pm4.13}^*$	&$59.09_{\pm1.28}^*$	&4.63\\
&ACSL	&$82.19_{\pm5.28}$	&$\textbf{65.40}_{\pm3.21}$	&$70.51_{\pm2.04}$	&$89.33_{\pm1.69}$	&$78.51_{\pm1.50}^*$	&$83.75_{\pm5.05}$	&$88.36_{\pm5.47}$	&$69.23_{\pm3.46}$	&2.00\\
&NSGL	&$72.57_{\pm5.44}^*$	&$63.01_{\pm2.94}^*$	&$70.61_{\pm1.89}$	&$86.85_{\pm2.24}^*$	&$78.22_{\pm1.24}^*$	&$81.29_{\pm5.82}^*$	&$78.04_{\pm6.41}^*$	&$62.50_{\pm1.52}^*$	&3.13\\
&\textbf{JMVFG}	&$\textbf{82.21}_{\pm5.56}$	&$65.34_{\pm3.52}$	&$\textbf{71.31}_{\pm3.75}$	&$\textbf{90.01}_{\pm1.21}$	&$\textbf{79.41}_{\pm1.26}$	&$\textbf{85.66}_{\pm5.90}$	&$\textbf{88.79}_{\pm7.61}$	&$\textbf{69.56}_{\pm3.04}$	&\textbf{1.13}\\
\bottomrule
\end{tabular}

\begin{tablenotes}
		\footnotesize
		\item[1] * The symbol ``*'' indicates statistically significant improvement (of JMVFG over a method on this dataset) w.r.t. Student's t-test with $p<0.05$.
	\end{tablenotes}

\end{table*}

\subsection{Baseline Methods and Experimental Settings}

For the multi-view unsupervised feature selection task, we compare our JMVFG method with four baseline methods, namely, $k$-means with all features (\textbf{ALLfea}), online unsupervised multi-view feature selection (\textbf{OMVFS}) \cite{OMVFS}, adaptive collaborative similarity learning for unsupervised multi-view feature selection (\textbf{ACSL}) \cite{ACSL},
and multi-view feature selection via nonnegative structured graph learning (\textbf{NSGL}) \cite{NSGL}.

For the multi-view clustering task, we compare our JMVFG method with thirteen baseline methods, namely,
multi-view spectral clustering (\textbf{MVSC}) \cite{MVSC},
diversity-induced multi-view subspace clustering (\textbf{DiMSC}) \cite{DiMSC},
auto-weighted multiple graph learning (\textbf{AMGL}) \cite{AMGL},
multi-view learning with adaptive neighbors (\textbf{MLAN}) \cite{MLAN},
self-weighted multi-view clustering (\textbf{SwMC}) \cite{SwMC},
highly-economized scalable image clustering (\textbf{HSIC}) \cite{HSIC},
multi-view clustering with graph learning (\textbf{MVGL}) \cite{MVGL},
multi-view consensus graph clustering (\textbf{MCGC}) \cite{MCGC},
similarity graph fusion (\textbf{SGF}) \cite{Liang2022},
binary multi-view clustering (\textbf{BMVC}) \cite{BMVC},
large-scale multi-view subspace clustering (\textbf{LMVSC}) \cite{LMVSC},
scalable multi-view subspace clustering (\textbf{SMVSC}) \cite{SMVSC},
and fast parameter-free multi-view subspace clustering with consensus anchor guidance (\textbf{FPMVS-CAG}) \cite{FPMVS-CAG}.

For all test methods, their parameters are tuned in the range of $\{10^{-3},10^{-2},...,10^{3}\}$, unless the value  (or range) of the parameter is specified by the corresponding paper. If the $K$-NN graph is involved in a method,  the number of nearest neighbors $K=5$ will be used. For each experiment, the average performance over 20 runs will be reported.

\begin{figure}[!t]
	\begin{center}
		{\subfigure[{\scriptsize MSRC-v1}]
			{\includegraphics[width=0.24\columnwidth]{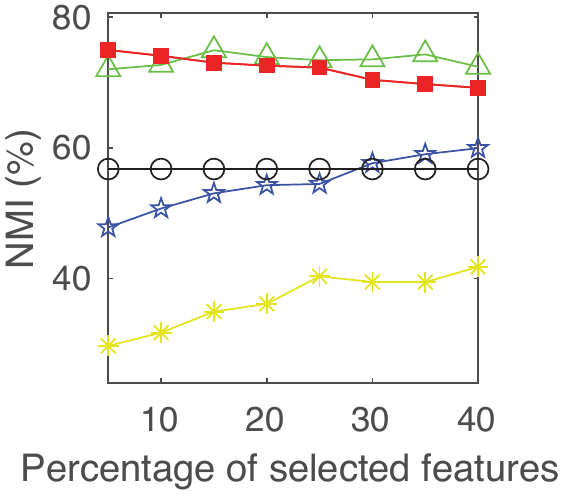}}}
		{\subfigure[{\scriptsize ORL}]
			{\includegraphics[width=0.24\columnwidth]{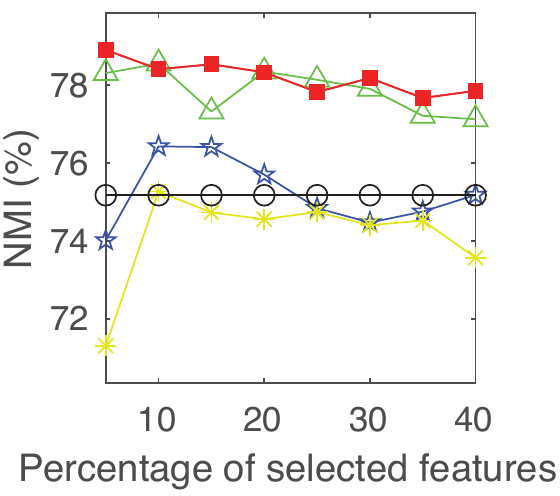}}}
		{\subfigure[{\scriptsize WebKB-Texas}]
			{\includegraphics[width=0.24\columnwidth]{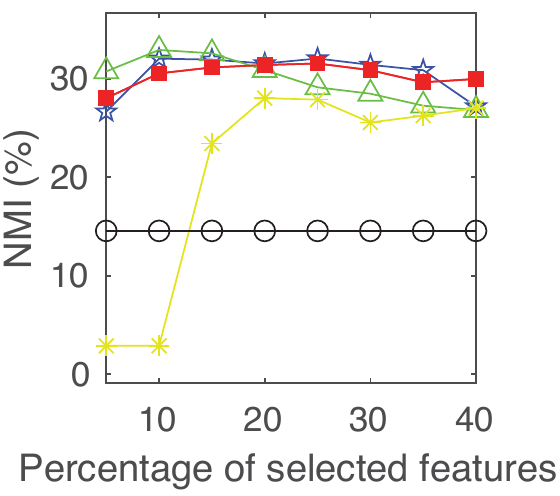}}}
		{\subfigure[{\scriptsize Caltech101-7}]
			{\includegraphics[width=0.24\columnwidth]{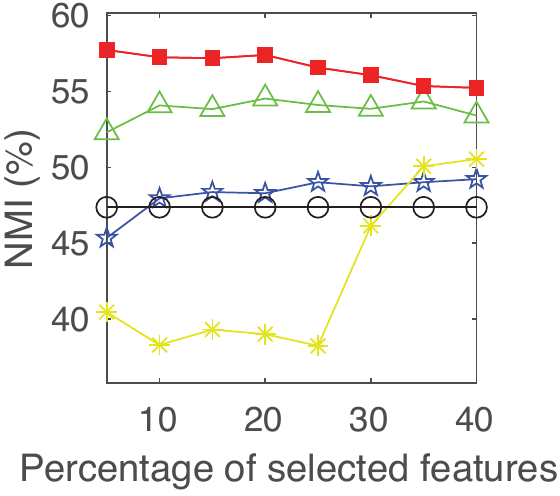}}}\\
		{\subfigure[{\scriptsize Caltech101-20}]
			{\includegraphics[width=0.24\columnwidth]{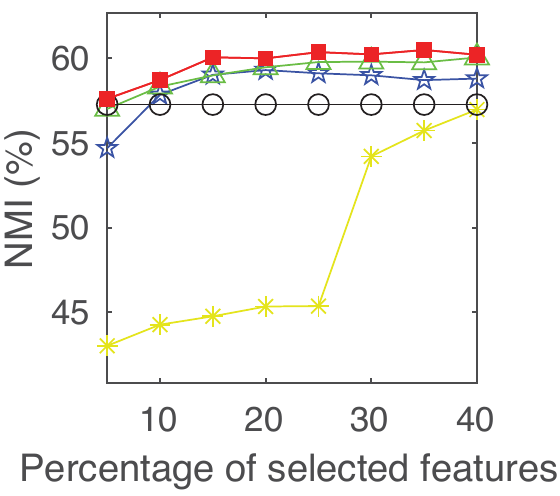}}}
		{\subfigure[{\scriptsize Mfeat}]
			{\includegraphics[width=0.24\columnwidth]{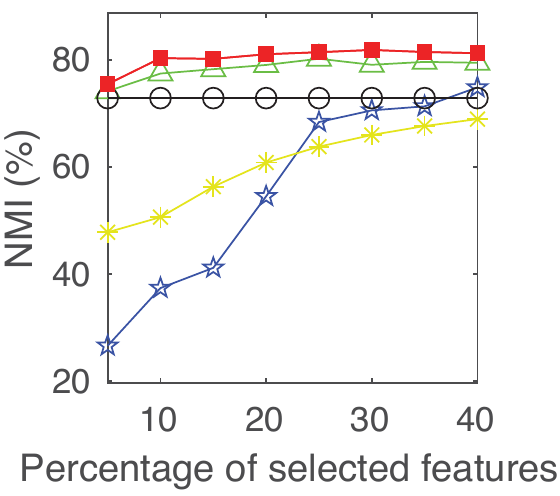}}}
		{\subfigure[{\scriptsize Handwritten}]
			{\includegraphics[width=0.24\columnwidth]{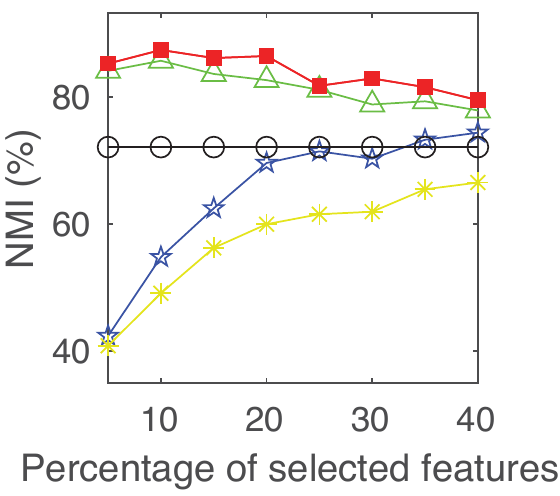}}}
		{\subfigure[{\scriptsize Outdoor-Scene}]
			{\includegraphics[width=0.24\columnwidth]{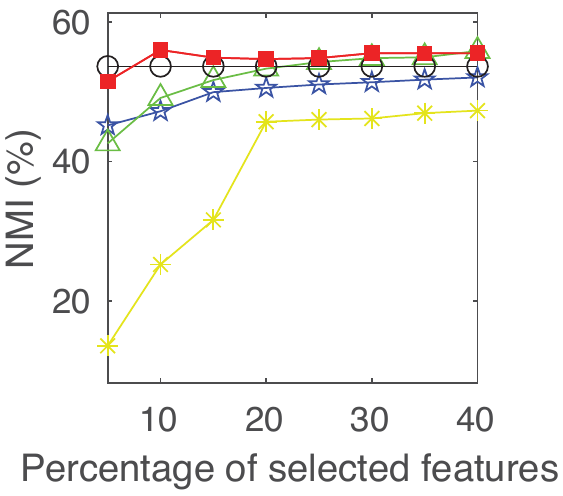}}}
		{\subfigure
			{\includegraphics[width=0.68\columnwidth]{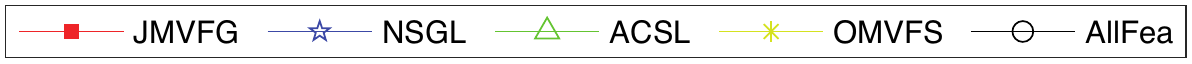}}}
		\caption{Average NMI(\%) scores of different multi-view unsupervised feature selection methods with different percentages of selected features.}
		\label{fig:NMI}
	\end{center}
\end{figure}

\begin{table*}[!th]%\scriptsize
	\centering
	\newcommand{\tabincell}[2]{\begin{tabular}{@{}#1@{}}#2\end{tabular}}
	\caption{Average performances ($mean\%_{\pm std\%}$) over 20 runs by different multi-view clustering algorithms.
		The best score in each row is highlighted in bold.}
	\label{t:CL}
	\begin{tabular}{m{0.55cm}<{\centering}m{1.7cm}<{\centering}m{0.6cm}<{\centering}m{0.6cm}<{\centering}m{0.6cm}<{\centering}m{0.6cm}<{\centering}m{0.6cm}<{\centering}m{0.6cm}<{\centering}m{0.6cm}<{\centering}m{0.6cm}<{\centering}m{0.6cm}<{\centering}m{0.6cm}<{\centering}m{0.7cm}<{\centering}m{0.7cm}<{\centering}m{0.8cm}<{\centering}m{0.8cm}<{\centering}}
		\toprule
		Metric	&Datasets	&MVSC	&DiMSC	&AMGL	&MLAN	&SwMC	&HSIC	&MVGL	&MCGC	&SGF	&BMVC	&LMVSC	&SMVSC	&FPMVS-CAG	&\textbf{JMVFG}\\
		\midrule
		\multirow{10}{*}{NMI}
		&MSRC-v1	&\tabincell{c}{$49.34^*$\\$^{\pm3.78}$}	&\tabincell{c}{$64.09^*$\\$^{\pm3.32}$}	&\tabincell{c}{$58.84^*$\\$^{\pm6.33}$}	&\tabincell{c}{$73.46^*$\\$^{\pm0.37}$}	&\tabincell{c}{$62.69^*$\\$^{\pm0.00}$}	&\tabincell{c}{$34.18^*$\\$^{\pm0.00}$}	&\tabincell{c}{$54.95^*$\\$^{\pm0.00}$}	&\tabincell{c}{$63.14^*$\\$^{\pm0.00}$}	&\tabincell{c}{$70.20^*$\\$^{\pm0.01}$}	&\tabincell{c}{$36.73^*$\\$^{\pm0.00}$}	&\tabincell{c}{$24.60^*$\\$^{\pm0.00}$}	&\tabincell{c}{$61.09^*$\\$^{\pm0.00}$}	&\tabincell{c}{$56.62^*$\\$^{\pm0.00}$}	&\tabincell{c}{$\textbf{76.87}$\\$^{\pm0.00}$}\\
& ORL	&\tabincell{c}{$84.69^*$\\$^{\pm2.06}$}	&\tabincell{c}{$90.63^*$\\$^{\pm1.73}$}	&\tabincell{c}{$85.29^*$\\$^{\pm1.78}$}	&\tabincell{c}{$78.58^*$\\$^{\pm0.16}$}	&\tabincell{c}{$83.31^*$\\$^{\pm0.00}$}	&\tabincell{c}{$36.74^*$\\$^{\pm0.00}$}	&\tabincell{c}{$85.59^*$\\$^{\pm0.00}$}	&\tabincell{c}{$89.41^*$\\$^{\pm0.00}$}	&\tabincell{c}{$91.28^*$\\$^{\pm0.37}$}	&\tabincell{c}{$36.54^*$\\$^{\pm0.00}$}	&\tabincell{c}{$76.26^*$\\$^{\pm0.00}$}	&\tabincell{c}{$75.26^*$\\$^{\pm0.00}$}	&\tabincell{c}{$74.30^*$\\$^{\pm0.00}$}	&\tabincell{c}{$\textbf{92.46}$\\$^{\pm0.00}$}\\
&WebKB-Texas	&\tabincell{c}{$1.98^*$\\$^{\pm0.25}$}	&\tabincell{c}{$17.46^*$\\$^{\pm1.13}$}	&\tabincell{c}{$9.89^*$\\$^{\pm2.61}$}	&\tabincell{c}{$7.11^*$\\$^{\pm1.72}$}	&\tabincell{c}{$9.37^*$\\$^{\pm0.00}$}	&\tabincell{c}{$19.33^*$\\$^{\pm0.00}$}	&\tabincell{c}{$6.55^*$\\$^{\pm0.00}$}	&\tabincell{c}{$19.85^*$\\$^{\pm0.00}$}	&\tabincell{c}{$24.97^*$\\$^{\pm0.05}$}	&\tabincell{c}{$23.71^*$\\$^{\pm0.00}$}	&\tabincell{c}{$20.86^*$\\$^{\pm0.00}$}	&\tabincell{c}{$21.42^*$\\$^{\pm0.00}$}	&\tabincell{c}{$22.76^*$\\$^{\pm0.00}$}	&\tabincell{c}{$\textbf{25.69}$\\$^{\pm0.00}$}\\
&Caltech101-7	&\tabincell{c}{$54.46^*$\\$^{\pm11.37}$}	&\tabincell{c}{$50.83^*$\\$^{\pm1.68}$}	&\tabincell{c}{$50.56^*$\\$^{\pm8.41}$}	&\tabincell{c}{$63.58^*$\\$^{\pm0.00}$}	&\tabincell{c}{$52.97^*$\\$^{\pm0.00}$}	&\tabincell{c}{$37.48^*$\\$^{\pm0.00}$}	&\tabincell{c}{$55.98^*$\\$^{\pm0.00}$}	&\tabincell{c}{$50.28^*$\\$^{\pm0.00}$}	&\tabincell{c}{$58.81^*$\\$^{\pm0.00}$}	&\tabincell{c}{$45.51^*$\\$^{\pm0.00}$}	&\tabincell{c}{$43.53^*$\\$^{\pm0.00}$}	&\tabincell{c}{$47.79^*$\\$^{\pm0.00}$}	&\tabincell{c}{$47.32^*$\\$^{\pm0.00}$}	&\tabincell{c}{$\textbf{77.37}$\\$^{\pm0.00}$}\\
&Caltech101-20	&\tabincell{c}{$52.30^*$\\$^{\pm5.02}$}	&\tabincell{c}{$56.00^*$\\$^{\pm1.00}$}	&\tabincell{c}{$53.56^*$\\$^{\pm3.28}$}	&\tabincell{c}{$41.62^*$\\$^{\pm2.65}$}	&\tabincell{c}{$49.88^*$\\$^{\pm0.00}$}	&\tabincell{c}{$58.01^*$\\$^{\pm0.00}$}	&\tabincell{c}{$43.90^*$\\$^{\pm0.00}$}	&\tabincell{c}{$61.04^*$\\$^{\pm0.00}$}	&\tabincell{c}{$64.73^*$\\$^{\pm0.65}$}	&\tabincell{c}{$59.73^*$\\$^{\pm0.00}$}	&\tabincell{c}{$55.40^*$\\$^{\pm0.00}$}	&\tabincell{c}{$55.89^*$\\$^{\pm0.00}$}	&\tabincell{c}{$58.49^*$\\$^{\pm0.00}$}	&\tabincell{c}{$\textbf{69.26}$\\$^{\pm0.00}$}\\
&Mfeat	&\tabincell{c}{$70.06^*$\\$^{\pm6.30}$}	&\tabincell{c}{$69.14^*$\\$^{\pm0.13}$}	&\tabincell{c}{$81.19^*$\\$^{\pm3.40}$}	&\tabincell{c}{$82.65^*$\\$^{\pm2.48}$}	&\tabincell{c}{$87.78^*$\\$^{\pm0.00}$}	&\tabincell{c}{$72.36^*$\\$^{\pm0.00}$}	&\tabincell{c}{$85.02^*$\\$^{\pm0.00}$}	&\tabincell{c}{$78.04^*$\\$^{\pm0.00}$}	&\tabincell{c}{$88.53^*$\\$^{\pm0.00}$}	&\tabincell{c}{$68.30^*$\\$^{\pm0.00}$}	&\tabincell{c}{$63.78^*$\\$^{\pm0.00}$}	&\tabincell{c}{$74.82^*$\\$^{\pm0.00}$}	&\tabincell{c}{$71.63^*$\\$^{\pm0.00}$}	&\tabincell{c}{$\textbf{95.47}$\\$^{\pm0.00}$}\\
&handwritten	&\tabincell{c}{$67.02^*$\\$^{\pm7.72}$}	&\tabincell{c}{$46.64^*$\\$^{\pm0.22}$}	&\tabincell{c}{$84.20^*$\\$^{\pm4.76}$}	&\tabincell{c}{$93.88^*$\\$^{\pm0.09}$}	&\tabincell{c}{$84.16^*$\\$^{\pm0.00}$}	&\tabincell{c}{$23.97^*$\\$^{\pm0.00}$}	&\tabincell{c}{$89.16^*$\\$^{\pm0.00}$}	&\tabincell{c}{$83.12^*$\\$^{\pm0.00}$}	&\tabincell{c}{$88.74^*$\\$^{\pm0.00}$}	&\tabincell{c}{$60.19^*$\\$^{\pm0.00}$}	&\tabincell{c}{$75.60^*$\\$^{\pm0.00}$}	&\tabincell{c}{$77.80^*$\\$^{\pm0.00}$}	&\tabincell{c}{$78.14^*$\\$^{\pm0.00}$}	&\tabincell{c}{$\textbf{96.34}$\\$^{\pm0.00}$}\\
&Outdoor-Scene	&\tabincell{c}{$23.39^*$\\$^{\pm14.73}$}	&\tabincell{c}{$36.19^*$\\$^{\pm0.51}$}	&\tabincell{c}{$44.19^*$\\$^{\pm3.96}$}	&\tabincell{c}{$37.38^*$\\$^{\pm0.00}$}	&\tabincell{c}{$43.69^*$\\$^{\pm0.00}$}	&\tabincell{c}{$38.12^*$\\$^{\pm0.00}$}	&\tabincell{c}{$41.63^*$\\$^{\pm0.00}$}	&\tabincell{c}{$39.93^*$\\$^{\pm0.00}$}	&\tabincell{c}{$52.06^*$\\$^{\pm0.07}$}	&\tabincell{c}{$21.06^*$\\$^{\pm0.00}$}	&\tabincell{c}{$45.97^*$\\$^{\pm0.00}$}	&\tabincell{c}{$51.67^*$\\$^{\pm0.00}$}	&\tabincell{c}{$53.04^*$\\$^{\pm0.00}$}	&\tabincell{c}{$\textbf{63.69}$\\$^{\pm0.00}$}\\
\cline{2-16}
		&Avg. rank	&10.38	&8.38	&7.38	&7.13	&7.38	&10.88	&7.50	&6.38	&2.63	&10.50	&10.13	&7.75	&7.63	&\textbf{1.00}\\
		\midrule
		\multirow{10}{*}{ACC}
		&MSRC-v1	&\tabincell{c}{$56.98^*$\\$^{\pm5.33}$}	&\tabincell{c}{$73.62^*$\\$^{\pm2.91}$}	&\tabincell{c}{$64.90^*$\\$^{\pm8.10}$}	&\tabincell{c}{$73.81^*$\\$^{\pm0.00}$}	&\tabincell{c}{$70.48^*$\\$^{\pm0.00}$}	&\tabincell{c}{$43.33^*$\\$^{\pm0.00}$}	&\tabincell{c}{$60.95^*$\\$^{\pm0.00}$}	&\tabincell{c}{$76.67^*$\\$^{\pm0.00}$}	&\tabincell{c}{$75.36^*$\\$^{\pm0.52}$}	&\tabincell{c}{$49.52^*$\\$^{\pm0.00}$}	&\tabincell{c}{$34.29^*$\\$^{\pm0.00}$}	&\tabincell{c}{$67.14^*$\\$^{\pm0.00}$}	&\tabincell{c}{$60.00^*$\\$^{\pm0.00}$}	&\tabincell{c}{$\textbf{81.90}$\\$^{\pm0.00}$}\\
& ORL	&\tabincell{c}{$72.36^*$\\$^{\pm3.81}$}	&\tabincell{c}{$80.66^*$\\$^{\pm3.33}$}	&\tabincell{c}{$72.61^*$\\$^{\pm2.25}$}	&\tabincell{c}{$68.40^*$\\$^{\pm0.13}$}	&\tabincell{c}{$70.75^*$\\$^{\pm0.00}$}	&\tabincell{c}{$15.50^*$\\$^{\pm0.00}$}	&\tabincell{c}{$72.50^*$\\$^{\pm0.00}$}	&\tabincell{c}{$79.50^*$\\$^{\pm0.00}$}	&\tabincell{c}{$82.59^*$\\$^{\pm0.81}$}	&\tabincell{c}{$16.00^*$\\$^{\pm0.00}$}	&\tabincell{c}{$55.50^*$\\$^{\pm0.00}$}	&\tabincell{c}{$56.25^*$\\$^{\pm0.00}$}	&\tabincell{c}{$56.00^*$\\$^{\pm0.00}$}	&\tabincell{c}{$\textbf{86.25}$\\$^{\pm0.00}$}\\
&WebKB-Texas	&\tabincell{c}{$55.13^*$\\$^{\pm0.30}$}	&\tabincell{c}{$49.97^*$\\$^{\pm0.80}$}	&\tabincell{c}{$51.79^*$\\$^{\pm5.36}$}	&\tabincell{c}{$49.63^*$\\$^{\pm0.59}$}	&\tabincell{c}{$54.55^*$\\$^{\pm0.00}$}	&\tabincell{c}{$56.15^*$\\$^{\pm0.00}$}	&\tabincell{c}{$51.87^*$\\$^{\pm0.00}$}	&\tabincell{c}{$55.61^*$\\$^{\pm0.00}$}	&\tabincell{c}{$59.63^*$\\$^{\pm0.27}$}	&\tabincell{c}{$55.61^*$\\$^{\pm0.00}$}	&\tabincell{c}{$\textbf{60.96}$\\$^{\pm0.00}$}	&\tabincell{c}{$56.68^*$\\$^{\pm0.00}$}	&\tabincell{c}{$57.75^*$\\$^{\pm0.00}$}	&\tabincell{c}{$52.94$\\$^{\pm0.00}$}\\
&Caltech101-7	&\tabincell{c}{$63.68^*$\\$^{\pm6.14}$}	&\tabincell{c}{$57.48^*$\\$^{\pm2.23}$}	&\tabincell{c}{$70.72^*$\\$^{\pm6.97}$}	&\tabincell{c}{$78.09^*$\\$^{\pm0.00}$}	&\tabincell{c}{$64.79^*$\\$^{\pm0.00}$}	&\tabincell{c}{$70.15^*$\\$^{\pm0.00}$}	&\tabincell{c}{$66.28^*$\\$^{\pm0.00}$}	&\tabincell{c}{$84.87^*$\\$^{\pm0.00}$}	&\tabincell{c}{$67.23^*$\\$^{\pm0.00}$}	&\tabincell{c}{$45.52^*$\\$^{\pm0.00}$}	&\tabincell{c}{$45.05^*$\\$^{\pm0.00}$}	&\tabincell{c}{$46.68^*$\\$^{\pm0.00}$}	&\tabincell{c}{$54.55^*$\\$^{\pm0.00}$}	&\tabincell{c}{$\textbf{88.87}$\\$^{\pm0.00}$}\\
&Caltech101-20	&\tabincell{c}{$51.23^*$\\$^{\pm6.24}$}	&\tabincell{c}{$45.32^*$\\$^{\pm1.84}$}	&\tabincell{c}{$55.19^*$\\$^{\pm2.39}$}	&\tabincell{c}{$42.32^*$\\$^{\pm1.91}$}	&\tabincell{c}{$57.21^*$\\$^{\pm0.00}$}	&\tabincell{c}{$49.79^*$\\$^{\pm0.00}$}	&\tabincell{c}{$53.10^*$\\$^{\pm0.00}$}	&\tabincell{c}{$54.78^*$\\$^{\pm0.00}$}	&\tabincell{c}{$\textbf{59.44}$\\$^{\pm0.92}$}	&\tabincell{c}{$49.54^*$\\$^{\pm0.00}$}	&\tabincell{c}{$50.67^*$\\$^{\pm0.00}$}	&\tabincell{c}{$52.05^*$\\$^{\pm0.00}$}	&\tabincell{c}{$53.94^*$\\$^{\pm0.00}$}	&\tabincell{c}{$52.56$\\$^{\pm0.00}$}\\
&Mfeat	&\tabincell{c}{$68.68^*$\\$^{\pm7.90}$}	&\tabincell{c}{$76.63^*$\\$^{\pm0.09}$}	&\tabincell{c}{$77.01^*$\\$^{\pm5.87}$}	&\tabincell{c}{$77.61^*$\\$^{\pm4.09}$}	&\tabincell{c}{$87.10^*$\\$^{\pm0.00}$}	&\tabincell{c}{$70.75^*$\\$^{\pm0.00}$}	&\tabincell{c}{$83.65^*$\\$^{\pm0.00}$}	&\tabincell{c}{$82.40^*$\\$^{\pm0.00}$}	&\tabincell{c}{$83.10^*$\\$^{\pm0.00}$}	&\tabincell{c}{$63.35^*$\\$^{\pm0.00}$}	&\tabincell{c}{$65.50^*$\\$^{\pm0.00}$}	&\tabincell{c}{$82.20^*$\\$^{\pm0.00}$}	&\tabincell{c}{$73.05^*$\\$^{\pm0.00}$}	&\tabincell{c}{$\textbf{98.00}$\\$^{\pm0.00}$}\\
&handwritten	&\tabincell{c}{$65.49^*$\\$^{\pm7.78}$}	&\tabincell{c}{$57.09^*$\\$^{\pm0.18}$}	&\tabincell{c}{$79.81^*$\\$^{\pm8.26}$}	&\tabincell{c}{$97.29^*$\\$^{\pm0.05}$}	&\tabincell{c}{$77.20^*$\\$^{\pm0.00}$}	&\tabincell{c}{$24.55^*$\\$^{\pm0.00}$}	&\tabincell{c}{$87.55^*$\\$^{\pm0.00}$}	&\tabincell{c}{$82.60^*$\\$^{\pm0.00}$}	&\tabincell{c}{$85.35^*$\\$^{\pm0.00}$}	&\tabincell{c}{$63.95^*$\\$^{\pm0.00}$}	&\tabincell{c}{$80.30^*$\\$^{\pm0.00}$}	&\tabincell{c}{$82.05^*$\\$^{\pm0.00}$}	&\tabincell{c}{$82.25^*$\\$^{\pm0.00}$}	&\tabincell{c}{$\textbf{98.55}$\\$^{\pm0.00}$}\\
&Outdoor-Scene	&\tabincell{c}{$31.82^*$\\$^{\pm10.36}$}	&\tabincell{c}{$46.74^*$\\$^{\pm1.66}$}	&\tabincell{c}{$49.91^*$\\$^{\pm5.12}$}	&\tabincell{c}{$40.55^*$\\$^{\pm0.00}$}	&\tabincell{c}{$45.46^*$\\$^{\pm0.00}$}	&\tabincell{c}{$46.09^*$\\$^{\pm0.00}$}	&\tabincell{c}{$43.30^*$\\$^{\pm0.00}$}	&\tabincell{c}{$48.03^*$\\$^{\pm0.00}$}	&\tabincell{c}{$62.41^*$\\$^{\pm0.07}$}	&\tabincell{c}{$34.52^*$\\$^{\pm0.00}$}	&\tabincell{c}{$59.86^*$\\$^{\pm0.00}$}	&\tabincell{c}{$63.32^*$\\$^{\pm0.00}$}	&\tabincell{c}{$63.58^*$\\$^{\pm0.00}$}	&\tabincell{c}{$\textbf{74.63}$\\$^{\pm0.00}$}\\
\cline{2-16}
		&Avg. rank	&10.13	&9.25	&6.88	&8.13	&6.88	&10.25	&7.00	&4.38	&3.25	&11.88	&9.63	&7.13	&7.25	&\textbf{2.88}\\
		\midrule
		\multirow{10}{*}{PUR}
		&MSRC-v1	&\tabincell{c}{$59.31^*$\\$^{\pm5.06}$}	&\tabincell{c}{$75.29^*$\\$^{\pm2.79}$}	&\tabincell{c}{$68.10^*$\\$^{\pm7.17}$}	&\tabincell{c}{$80.48^*$\\$^{\pm0.00}$}	&\tabincell{c}{$73.33^*$\\$^{\pm0.00}$}	&\tabincell{c}{$43.33^*$\\$^{\pm0.00}$}	&\tabincell{c}{$64.76^*$\\$^{\pm0.00}$}	&\tabincell{c}{$77.14^*$\\$^{\pm0.00}$}	&\tabincell{c}{$79.05^*$\\$^{\pm0.00}$}	&\tabincell{c}{$50.48^*$\\$^{\pm0.00}$}	&\tabincell{c}{$40.00^*$\\$^{\pm0.00}$}	&\tabincell{c}{$69.05^*$\\$^{\pm0.00}$}	&\tabincell{c}{$62.86^*$\\$^{\pm0.00}$}	&\tabincell{c}{$\textbf{82.38}$\\$^{\pm0.00}$}\\
& ORL	&\tabincell{c}{$77.05^*$\\$^{\pm2.47}$}	&\tabincell{c}{$83.31^*$\\$^{\pm2.97}$}	&\tabincell{c}{$78.08^*$\\$^{\pm1.87}$}	&\tabincell{c}{$73.41^*$\\$^{\pm0.15}$}	&\tabincell{c}{$76.75^*$\\$^{\pm0.00}$}	&\tabincell{c}{$16.50^*$\\$^{\pm0.00}$}	&\tabincell{c}{$78.50^*$\\$^{\pm0.00}$}	&\tabincell{c}{$83.00^*$\\$^{\pm0.00}$}	&\tabincell{c}{$85.98^*$\\$^{\pm0.62}$}	&\tabincell{c}{$16.25^*$\\$^{\pm0.00}$}	&\tabincell{c}{$62.00^*$\\$^{\pm0.00}$}	&\tabincell{c}{$60.25^*$\\$^{\pm0.00}$}	&\tabincell{c}{$60.00^*$\\$^{\pm0.00}$}	&\tabincell{c}{$\textbf{88.75}$\\$^{\pm0.00}$}\\
&WebKB-Texas	&\tabincell{c}{$56.10^*$\\$^{\pm0.16}$}	&\tabincell{c}{$66.58^*$\\$^{\pm0.41}$}	&\tabincell{c}{$60.00^*$\\$^{\pm1.98}$}	&\tabincell{c}{$56.58^*$\\$^{\pm0.88}$}	&\tabincell{c}{$59.89^*$\\$^{\pm0.00}$}	&\tabincell{c}{$56.68^*$\\$^{\pm0.00}$}	&\tabincell{c}{$57.22^*$\\$^{\pm0.00}$}	&\tabincell{c}{$62.57^*$\\$^{\pm0.00}$}	&\tabincell{c}{$67.38^*$\\$^{\pm0.00}$}	&\tabincell{c}{$\textbf{68.98}$\\$^{\pm0.00}$}	&\tabincell{c}{$62.57^*$\\$^{\pm0.00}$}	&\tabincell{c}{$67.91^*$\\$^{\pm0.00}$}	&\tabincell{c}{$65.78^*$\\$^{\pm0.00}$}	&\tabincell{c}{$67.91$\\$^{\pm0.00}$}\\
&Caltech101-7	&\tabincell{c}{$82.91^*$\\$^{\pm7.85}$}	&\tabincell{c}{$88.32^*$\\$^{\pm0.62}$}	&\tabincell{c}{$83.24^*$\\$^{\pm6.41}$}	&\tabincell{c}{$89.08^*$\\$^{\pm0.00}$}	&\tabincell{c}{$83.65^*$\\$^{\pm0.00}$}	&\tabincell{c}{$75.85^*$\\$^{\pm0.00}$}	&\tabincell{c}{$85.35^*$\\$^{\pm0.00}$}	&\tabincell{c}{$85.21^*$\\$^{\pm0.00}$}	&\tabincell{c}{$89.08^*$\\$^{\pm0.00}$}	&\tabincell{c}{$84.19^*$\\$^{\pm0.00}$}	&\tabincell{c}{$86.16^*$\\$^{\pm0.00}$}	&\tabincell{c}{$87.31^*$\\$^{\pm0.00}$}	&\tabincell{c}{$84.74^*$\\$^{\pm0.00}$}	&\tabincell{c}{$\textbf{89.35}$\\$^{\pm0.00}$}\\
&Caltech101-20	&\tabincell{c}{$67.00^*$\\$^{\pm4.76}$}	&\tabincell{c}{$78.61^*$\\$^{\pm0.84}$}	&\tabincell{c}{$68.36^*$\\$^{\pm2.35}$}	&\tabincell{c}{$62.09^*$\\$^{\pm1.89}$}	&\tabincell{c}{$69.87^*$\\$^{\pm0.00}$}	&\tabincell{c}{$73.97^*$\\$^{\pm0.00}$}	&\tabincell{c}{$64.59^*$\\$^{\pm0.00}$}	&\tabincell{c}{$74.48^*$\\$^{\pm0.00}$}	&\tabincell{c}{$\textbf{80.25}$\\$^{\pm0.97}$}	&\tabincell{c}{$80.09^*$\\$^{\pm0.00}$}	&\tabincell{c}{$76.70^*$\\$^{\pm0.00}$}	&\tabincell{c}{$70.49^*$\\$^{\pm0.00}$}	&\tabincell{c}{$74.98^*$\\$^{\pm0.00}$}	&\tabincell{c}{$77.03$\\$^{\pm0.00}$}\\
&Mfeat	&\tabincell{c}{$69.72^*$\\$^{\pm7.25}$}	&\tabincell{c}{$76.63^*$\\$^{\pm0.09}$}	&\tabincell{c}{$80.27^*$\\$^{\pm4.65}$}	&\tabincell{c}{$80.36^*$\\$^{\pm4.09}$}	&\tabincell{c}{$87.15^*$\\$^{\pm0.00}$}	&\tabincell{c}{$72.50^*$\\$^{\pm0.00}$}	&\tabincell{c}{$86.05^*$\\$^{\pm0.00}$}	&\tabincell{c}{$83.00^*$\\$^{\pm0.00}$}	&\tabincell{c}{$88.00^*$\\$^{\pm0.00}$}	&\tabincell{c}{$65.55^*$\\$^{\pm0.00}$}	&\tabincell{c}{$69.30^*$\\$^{\pm0.00}$}	&\tabincell{c}{$82.20^*$\\$^{\pm0.00}$}	&\tabincell{c}{$73.10^*$\\$^{\pm0.00}$}	&\tabincell{c}{$\textbf{98.00}$\\$^{\pm0.00}$}\\
&handwritten	&\tabincell{c}{$65.50^*$\\$^{\pm7.78}$}	&\tabincell{c}{$59.20^*$\\$^{\pm0.14}$}	&\tabincell{c}{$82.74^*$\\$^{\pm6.50}$}	&\tabincell{c}{$97.29^*$\\$^{\pm0.05}$}	&\tabincell{c}{$78.50^*$\\$^{\pm0.00}$}	&\tabincell{c}{$26.20^*$\\$^{\pm0.00}$}	&\tabincell{c}{$88.10^*$\\$^{\pm0.00}$}	&\tabincell{c}{$84.70^*$\\$^{\pm0.00}$}	&\tabincell{c}{$87.70^*$\\$^{\pm0.00}$}	&\tabincell{c}{$63.95^*$\\$^{\pm0.00}$}	&\tabincell{c}{$80.30^*$\\$^{\pm0.00}$}	&\tabincell{c}{$82.05^*$\\$^{\pm0.00}$}	&\tabincell{c}{$82.25^*$\\$^{\pm0.00}$}	&\tabincell{c}{$\textbf{98.55}$\\$^{\pm0.00}$}\\
&Outdoor-Scene	&\tabincell{c}{$32.03^*$\\$^{\pm10.40}$}	&\tabincell{c}{$48.85^*$\\$^{\pm1.16}$}	&\tabincell{c}{$50.47^*$\\$^{\pm5.32}$}	&\tabincell{c}{$40.85^*$\\$^{\pm0.00}$}	&\tabincell{c}{$45.57^*$\\$^{\pm0.00}$}	&\tabincell{c}{$48.03^*$\\$^{\pm0.00}$}	&\tabincell{c}{$44.01^*$\\$^{\pm0.00}$}	&\tabincell{c}{$51.00^*$\\$^{\pm0.00}$}	&\tabincell{c}{$64.31^*$\\$^{\pm0.08}$}	&\tabincell{c}{$35.23^*$\\$^{\pm0.00}$}	&\tabincell{c}{$59.86^*$\\$^{\pm0.00}$}	&\tabincell{c}{$66.00^*$\\$^{\pm0.00}$}	&\tabincell{c}{$67.08^*$\\$^{\pm0.00}$}	&\tabincell{c}{$\textbf{74.63}$\\$^{\pm0.00}$}\\
\cline{2-16}
		&Avg. rank	&11.75	&6.25	&8.38	&7.63	&8.50	&11.75	&7.88	&5.75	&2.75	&9.75	&8.63	&6.38	&7.75	&\textbf{1.50}\\
		\bottomrule
	\end{tabular}

\begin{tablenotes}
		\footnotesize
		\item[1] * The symbol ``*'' indicates statistically significant improvement (of JMVFG over a method on this dataset) w.r.t. Student's t-test with $p<0.05$.
	\end{tablenotes}

\end{table*}

\subsection{Comparison Against Other Multi-view Unsupervised Feature Selection Methods}

In this section, we evaluate the proposed JMVFG method against the other multi-view unsupervised feature selection methods with the percentage of selected features varying from 5\% to 40\% at intervals of 5\%. As shown in Fig. \ref{fig:NMI}, the proposed method achieves highly competitive performance (w.r.t. NMI) with varying percentages of selected features, when compared with the state-of-the-art methods on the benchmark datasets.

Further, we compare different multi-view unsupervised feature selection methods by considering the optimal feature selection percentage. As shown in Table \ref{table:FS}, the proposed method yields the best NMI scores on seven datasets, the best ACC scores on four datasets, and the best PUR scores on seven datasets, out of the totally eight datasets. For the proposed method and the baseline methods, we conduct the Student's t-test to evaluate the statistical significance of the results, where the symbol ``*'' indicates significant improvement (of JMVFG over a method on this dataset) with $p<0.05$. The experimental results in  Fig. \ref{fig:NMI} and Table \ref{table:FS} demonstrate the robust performance of JMVFG for the multi-view unsupervised feature selection task.

\begin{figure*}[!t]\vskip -0.2 in
	\begin{center}
		{\subfigure[{\scriptsize MSRC-v1}]
			{\includegraphics[width=0.45\columnwidth]{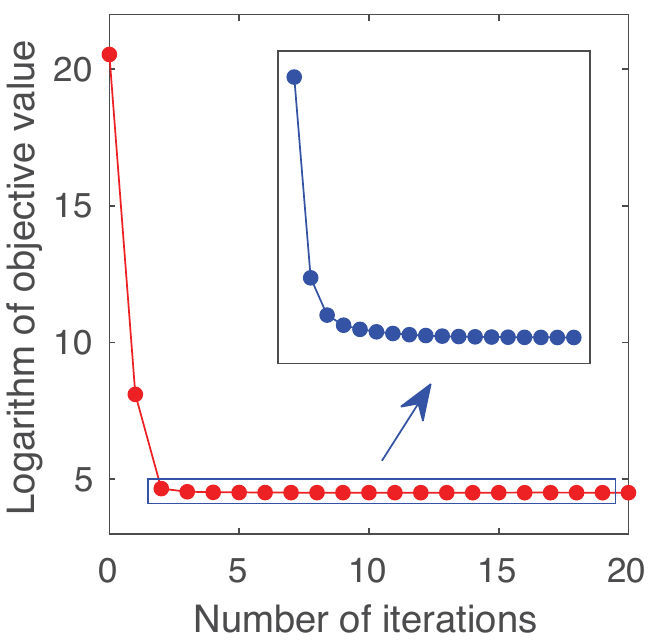}}}
		{\subfigure[{\scriptsize ORL}]
			{\includegraphics[width=0.45\columnwidth]{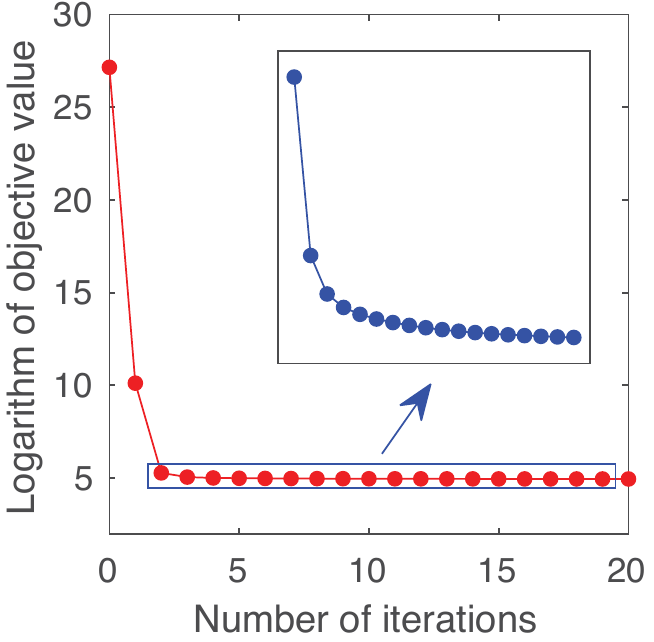}}}
		{\subfigure[{\scriptsize WebKB-Texas}]
			{\includegraphics[width=0.45\columnwidth]{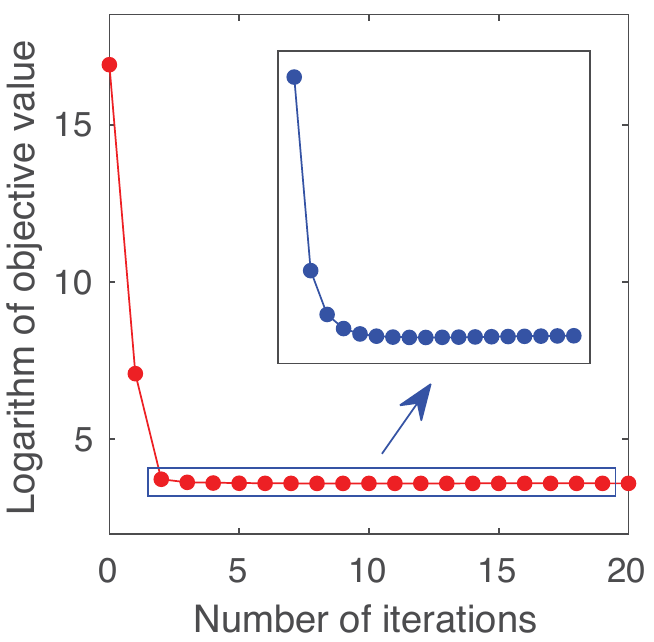}}}
		{\subfigure[{\scriptsize Caltech101-7}]
			{\includegraphics[width=0.45\columnwidth]{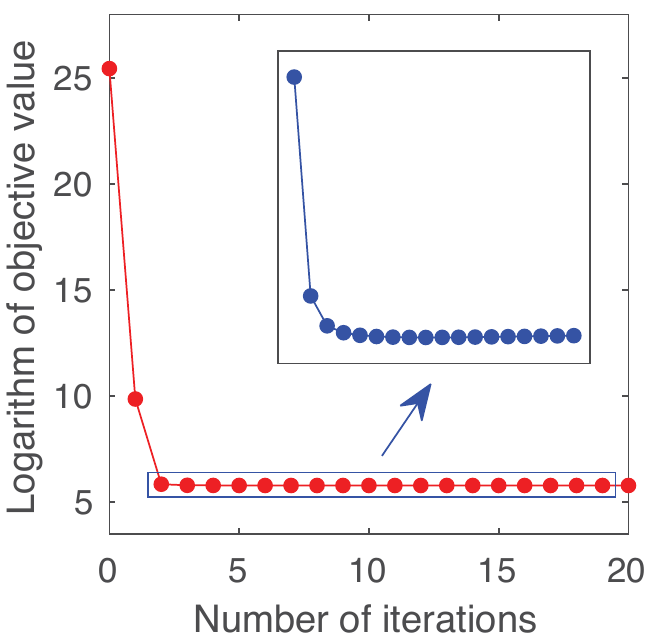}}}\\\vskip -0.03 in
		{\subfigure[{\scriptsize Caltech101-20}]
			{\includegraphics[width=0.45\columnwidth]{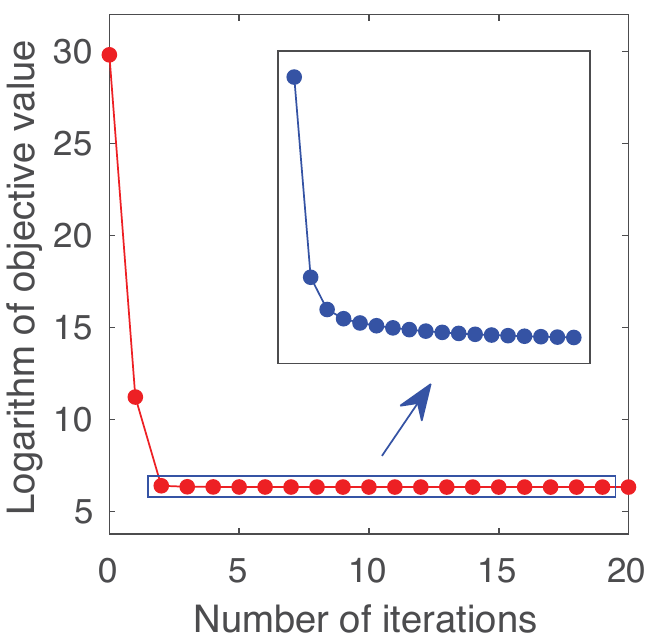}}}
		{\subfigure[{\scriptsize Mfeat}]
			{\includegraphics[width=0.45\columnwidth]{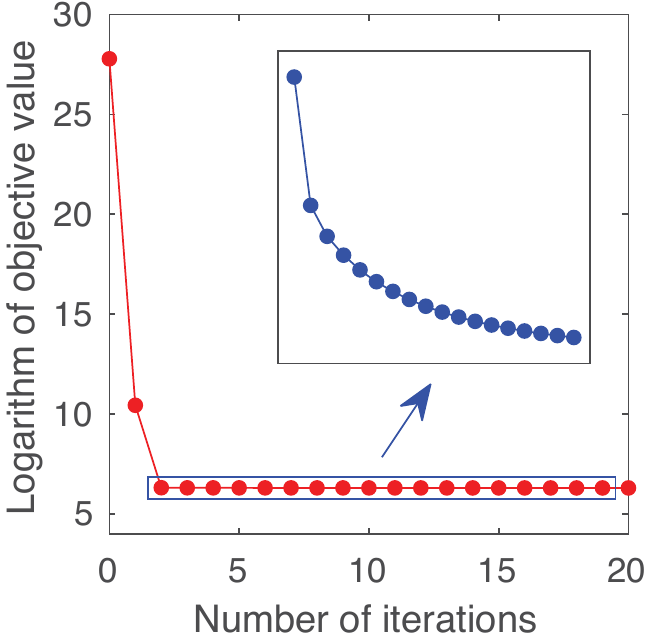}}}
		{\subfigure[{\scriptsize Handwritten}]
			{\includegraphics[width=0.45\columnwidth]{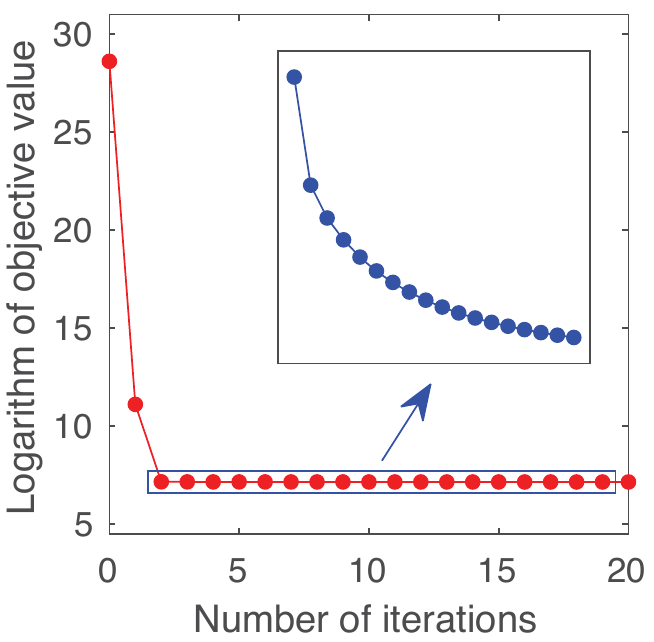}}}
		{\subfigure[{\scriptsize Outdoor-Scene}]
			{\includegraphics[width=0.45\columnwidth]{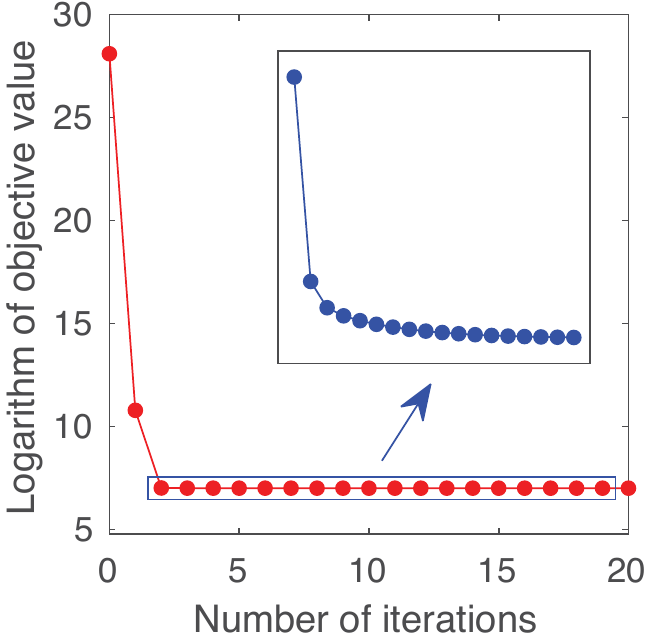}}}\vskip -0.05 in
		\caption{Empirical convergence analysis. The logarithm of the objective function value of JMVFG is illustrated (as the number of iterations increases).}
		\label{fig:conver}\vskip -0.2 in
	\end{center}
\end{figure*}

\subsection{Comparison Against Other Multi-view Clustering Methods}

In the section, we evaluate the proposed JMVFG method for the multi-view clustering task, in comparison with the other thirteen multi-view clustering methods.

As shown in Table \ref{t:CL}, in terms of NMI, JMVFG achieves the best performance on all of the eight datasets.
In terms of ACC and PUR, similar advantages of our JMVFG method can also be observed. Specifically, JMVFG achieves the best ACC scores on six datasets, and the best PUR scores on six datasets, out of the totally eight datasets.
Note that with the same starting status, our optimization algorithm will converge to the same solution (over multiple runs), which means that the same similarity matrix $\textbf{S}$ will be learned over multiple runs. For the multi-view clustering task, the learned similarity matrix $\textbf{S}$ is partitioned via spectral clustering for evaluation, where the stability of the spectral clustering (as well as the optimization process) contributes to the zero standard deviations of our JMVFG method (as shown in Tables~\ref{t:CL}).
	
We also provide the average ranks (across eight datasets) of different methods.  As shown in Table~\ref{t:CL}, the average ranks of JMVFG (w.r.t. NMI, ACC, and PUR) are 1.00, 2.88, and 1.50, respectively, while the second best method obtains the average ranks (w.r.t. NMI, ACC, and PUR) of 2.63, 3.25, and 2.75, respectively, which confirm the effectiveness of JMVFG for the multi-view clustering task.

\subsection{Empirical Convergence Analysis}
\label{sec:empirical_convergence}
In this section, we conduct empirical convergence analysis on the benchmark datasets.
According to the theoretical conclusion in Section \ref{sec:theoretical_convergence}, the monotonicity of each step of our optimization algorithm can be guaranteed provided that the affinity matrix $\textbf{A}^{(v)}$ is initialized to be symmetric, which can be easily satisfied (e.g., by defining it as a symmetric $K$-NN graph). In fact, we empirically find that the convergence property of JMVFG is fast and good even when we remove the symmetric constraint (e.g., by normalizing each row of $\textbf{A}^{(v)}$). Specifically, we illustrate the convergence curves of JMVFG in Fig.~\ref{fig:conver}. As shown in Fig. \ref{fig:conver}, the objective value rapidly and monotonically decreases as the number of iterations grows, where the convergence is generally reached within twenty iterations, which show that the empirical convergence of JMVFG is well consistent with our theoretical analysis.

\subsection{Parameter Analysis}
In this section, we conduct experiments to analyze the influence of the parameters $\beta$, $\gamma$,  and $\eta$, as well as the percentage of selected features. Specifically, the performance of our method with varying parameters $\beta$ and $\gamma$ is illustrated in Fig.~\ref{fig:sen_s}, while that with varying parameter $\eta$ and percentage of selected features is illustrated in Fig.~\ref{fig:sen_p}. As shown in Figs.~\ref{fig:sen_s} and \ref{fig:sen_p}, the performance of the proposed method is consistent and robust with different parameter settings, which suggests that they can be safely set to some moderate values while maintaining high-quality performance on most of the datasets.

\subsection{Ablation Study}

In this section, we provide the ablation study w.r.t. the multi-view unsupervised feature selection task and the multi-view graph learning (for clustering) task in Sections~\ref{sec:ablation_ufs} and \ref{sec:ablation_gl}, respectively.

\subsubsection{Ablation w.r.t. Multi-view Feature Selection}
\label{sec:ablation_ufs}

This section analyzes the influences of different terms in the overall objective function~\eqref{eq:3-11} for the multi-view feature selection task. There are three removable terms for this task, namely, the regularization term of $\textbf{W}^{(v)}$, the cross-space locality preservation term, and the graph learning term. Since the cross-space locality preservation term bridges the gap between feature selection and graph learning, if this \emph{bridge} is removed, then the influence of graph learning also disappears. Therefore, the experimental setting of removing the \emph{bridge} while preserving the graph learning term is not necessary to be tested.
As shown in Table~\ref{t:Ab_fs}, using all the three terms leads to average NMI(\%), ACC(\%), and PUR(\%) scores (across eight datasets) of 66.11, 69.00, and 79.04, respectively, while removing the graph learning term leads to average NMI(\%), ACC(\%), and PUR(\%) scores of 64.02, 65.68, and 76.79, respectively, which confirm the contribution of the graph learning term for the feature selection task. Further, in an extreme case, if all three terms are removed, the average NMI(\%), ACC(\%), and PUR(\%) scores (across eight datasets) would drop to 48.62, 52.06, and 64.76, respectively. The ablation results in Table~\ref{t:Ab_fs} have shown the substantial contribution of different components/terms in the JMVFG method for the feature selection task.

\begin{figure}[!t]
	\begin{center}
		{\subfigure[{\scriptsize MSRC-v1}]
			{\includegraphics[width=0.24\columnwidth]{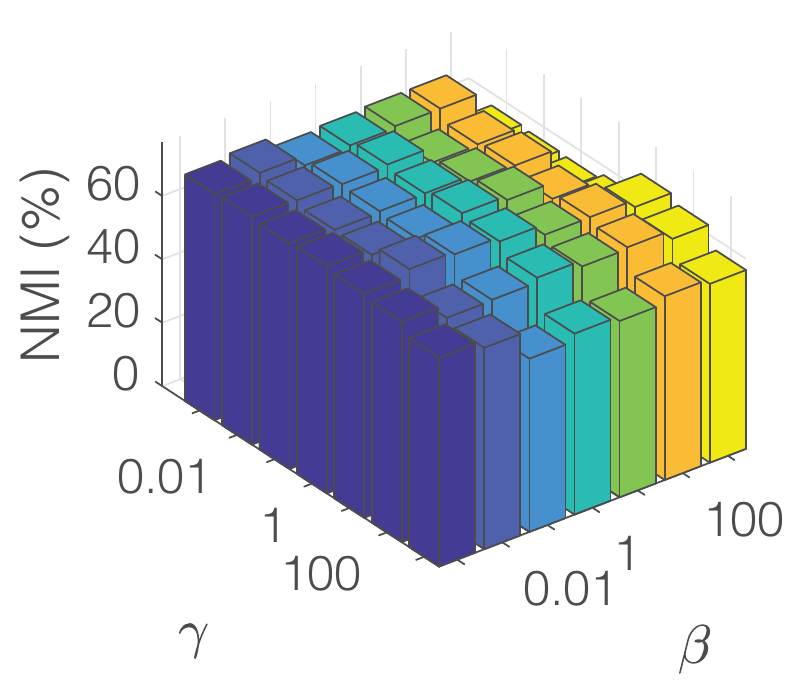}}}
		{\subfigure[{\scriptsize ORL}]
			{\includegraphics[width=0.24\columnwidth]{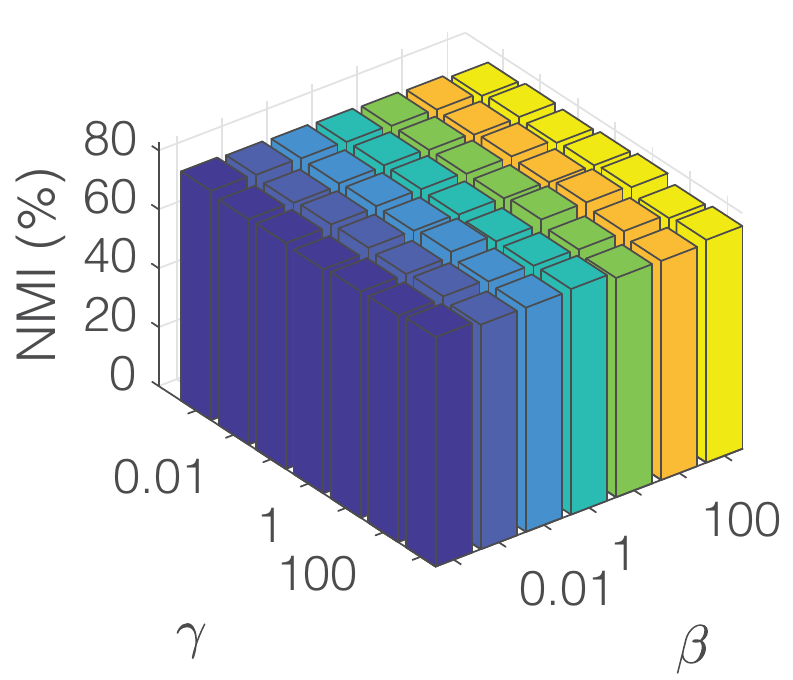}}}
		{\subfigure[{\scriptsize WebKB-Texas}]
			{\includegraphics[width=0.24\columnwidth]{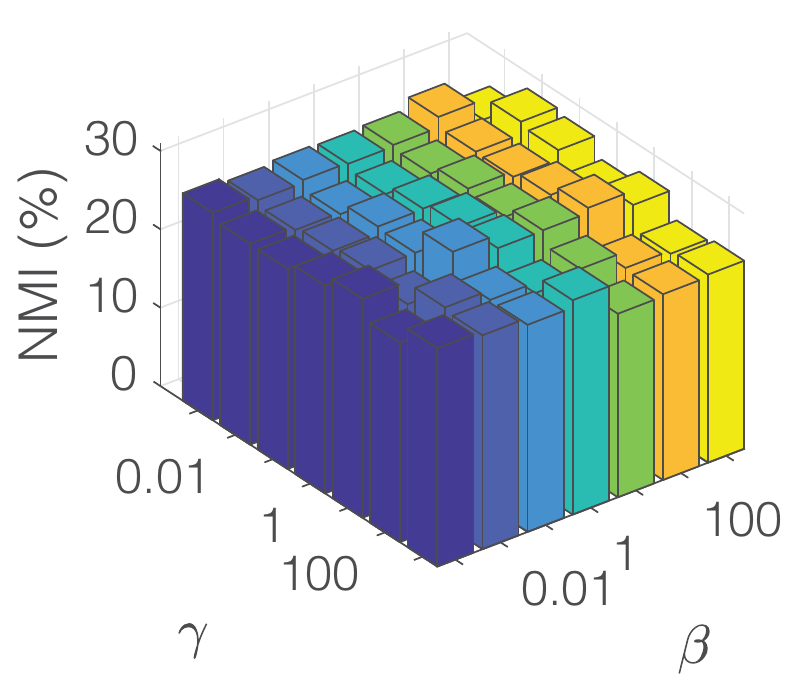}}}
		{\subfigure[{\scriptsize Caltech101-7}]
			{\includegraphics[width=0.24\columnwidth]{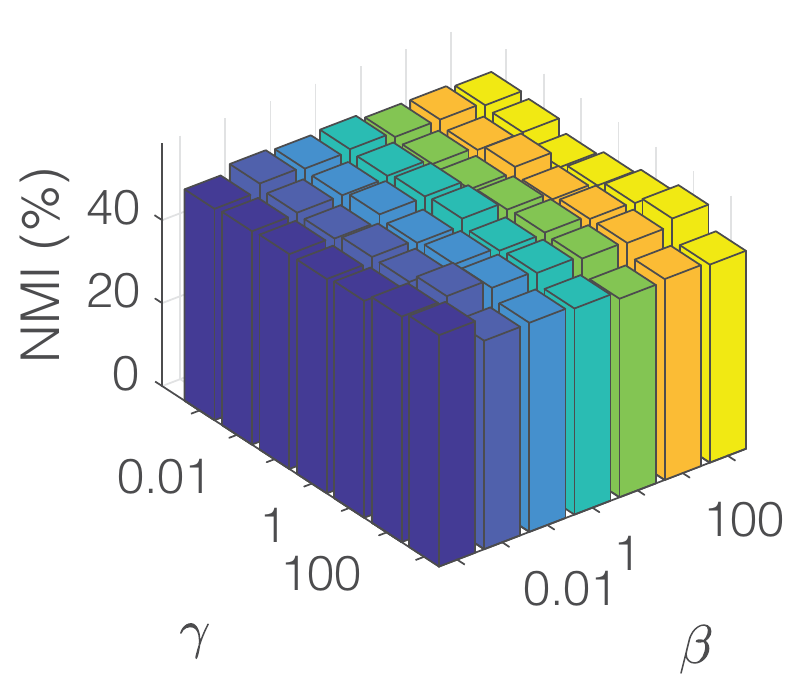}}}\\
		{\subfigure[{\scriptsize Caltech101-20}]
			{\includegraphics[width=0.24\columnwidth]{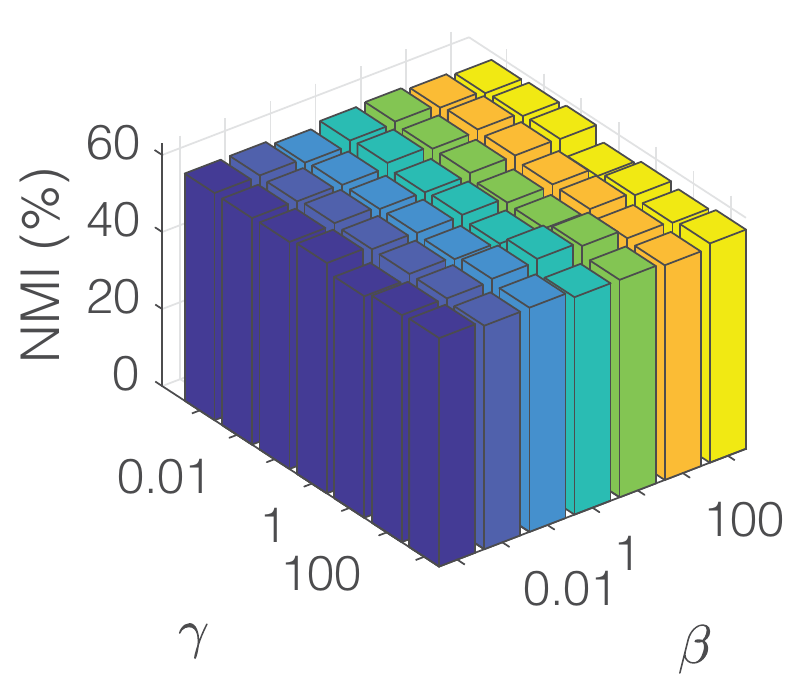}}}
		{\subfigure[{\scriptsize Mfeat}]
			{\includegraphics[width=0.24\columnwidth]{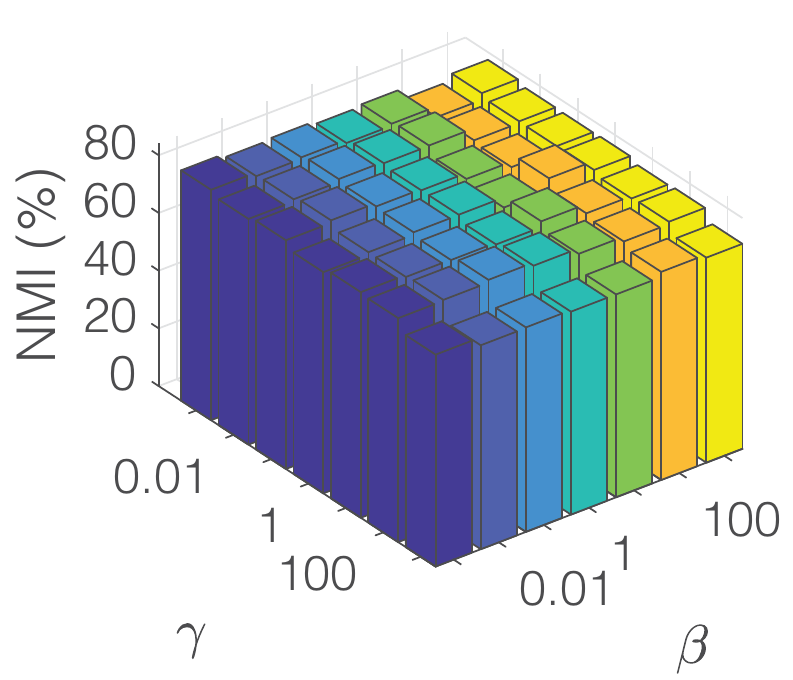}}}
		{\subfigure[{\scriptsize Handwritten}]
			{\includegraphics[width=0.24\columnwidth]{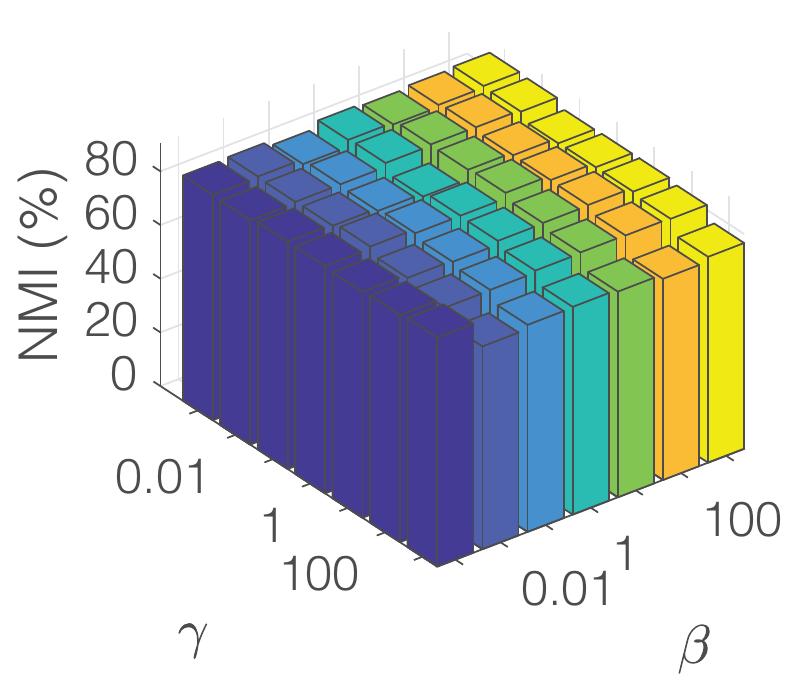}}}
		{\subfigure[{\scriptsize Outdoor-Scene}]
			{\includegraphics[width=0.24\columnwidth]{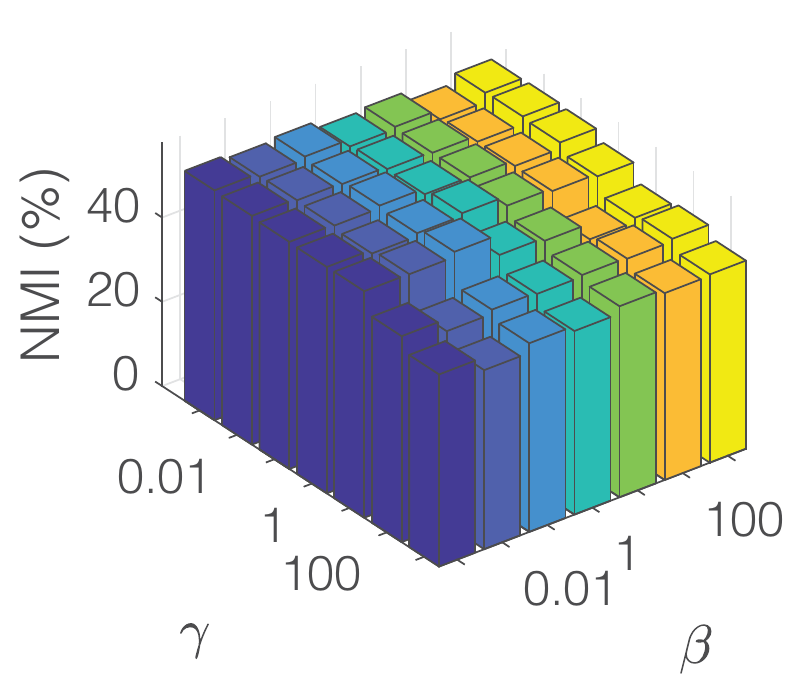}}}
		\caption{Average performance (w.r.t. NMI(\%)) of JMVFG with varying $\beta$ and $\gamma$ while fixing $\eta=1$.}
		\label{fig:sen_s}\vskip -0.1 in
	\end{center}
\end{figure}

\begin{figure}[!t]
	\begin{center}
		{\subfigure[{\scriptsize MSRC-v1}]
			{\includegraphics[width=0.24\columnwidth]{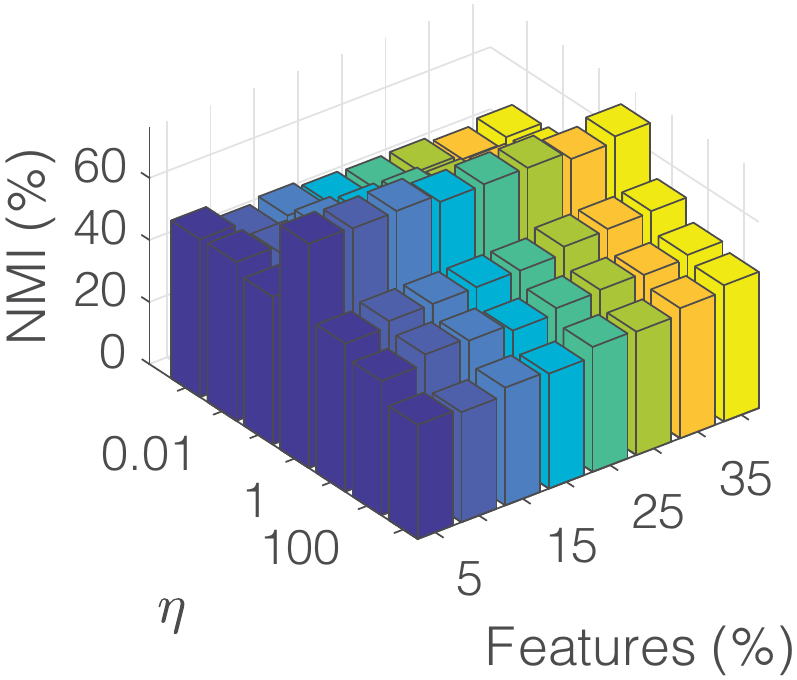}}}
		{\subfigure[{\scriptsize ORL}]
			{\includegraphics[width=0.24\columnwidth]{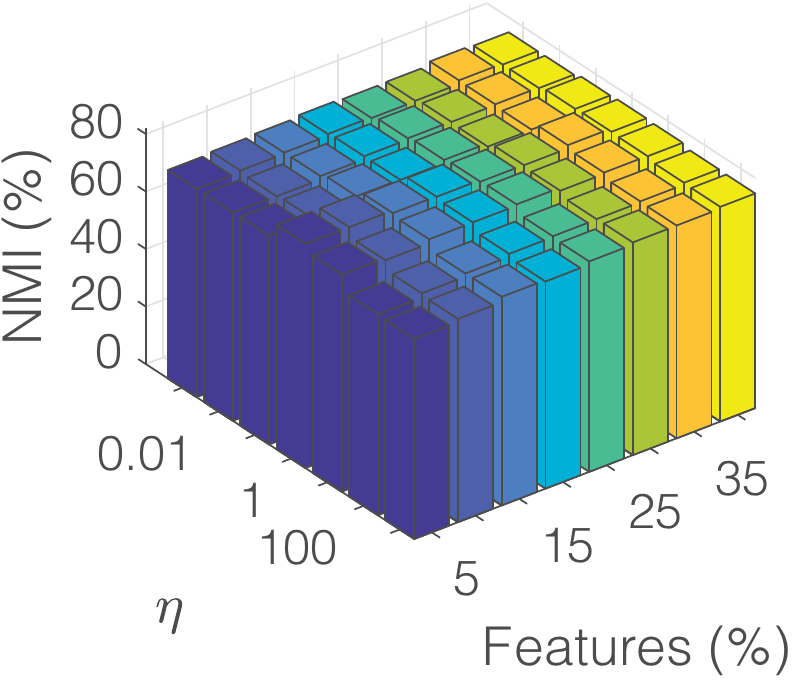}}}
		{\subfigure[{\scriptsize WebKB-Texas}]
			{\includegraphics[width=0.24\columnwidth]{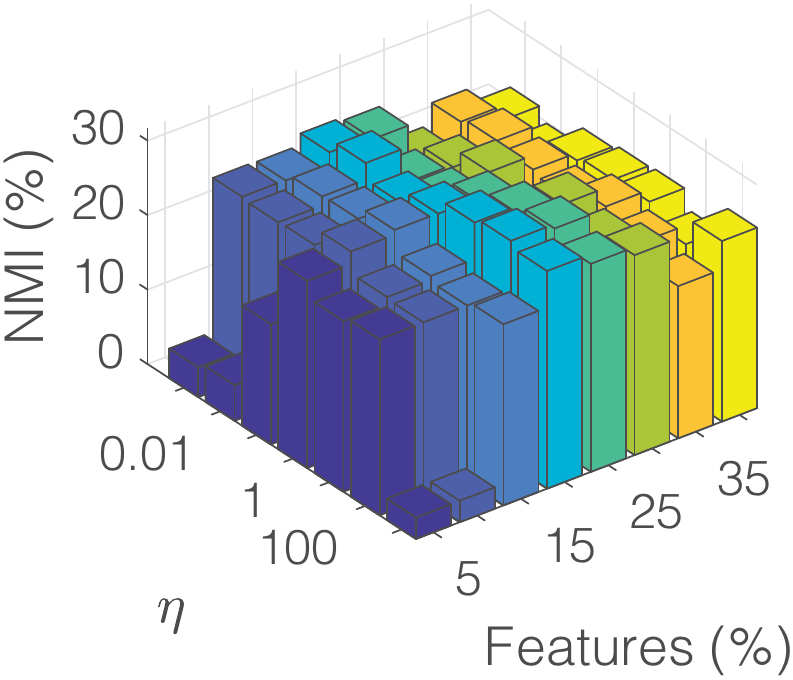}}}
		{\subfigure[{\scriptsize Caltech101-7}]
			{\includegraphics[width=0.24\columnwidth]{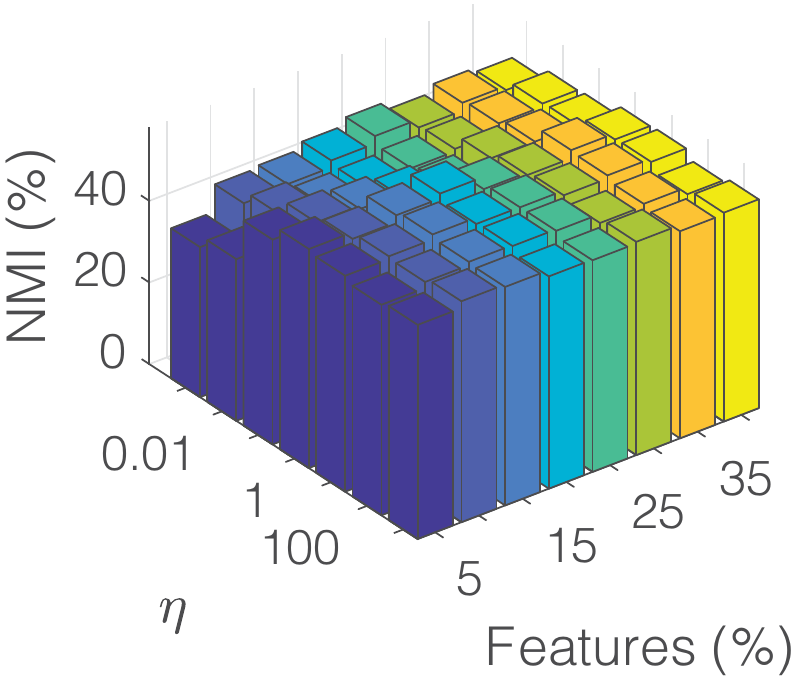}}}\\
		{\subfigure[{\scriptsize Caltech101-20}]
			{\includegraphics[width=0.24\columnwidth]{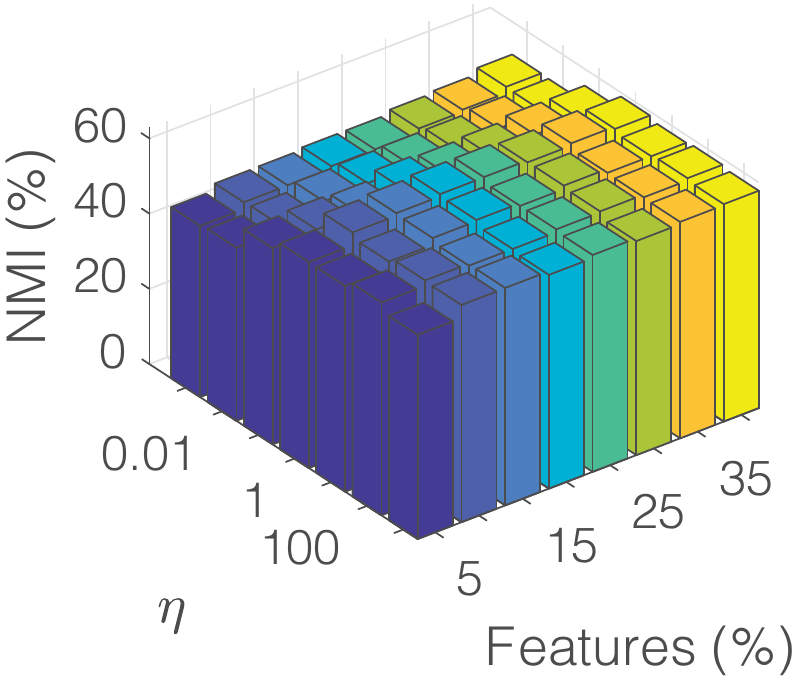}}}
		{\subfigure[{\scriptsize Mfeat}]
			{\includegraphics[width=0.24\columnwidth]{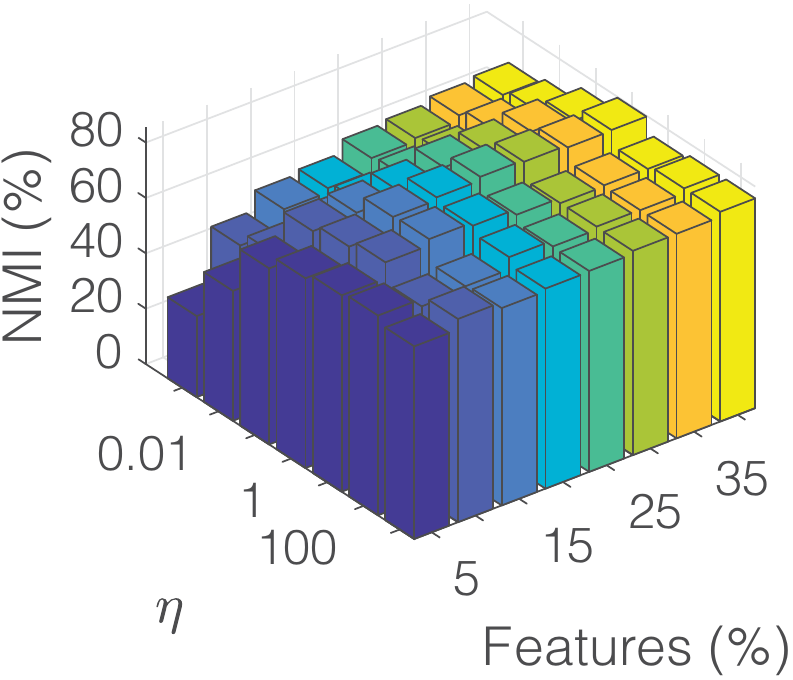}}}
		{\subfigure[{\scriptsize Handwritten}]
			{\includegraphics[width=0.24\columnwidth]{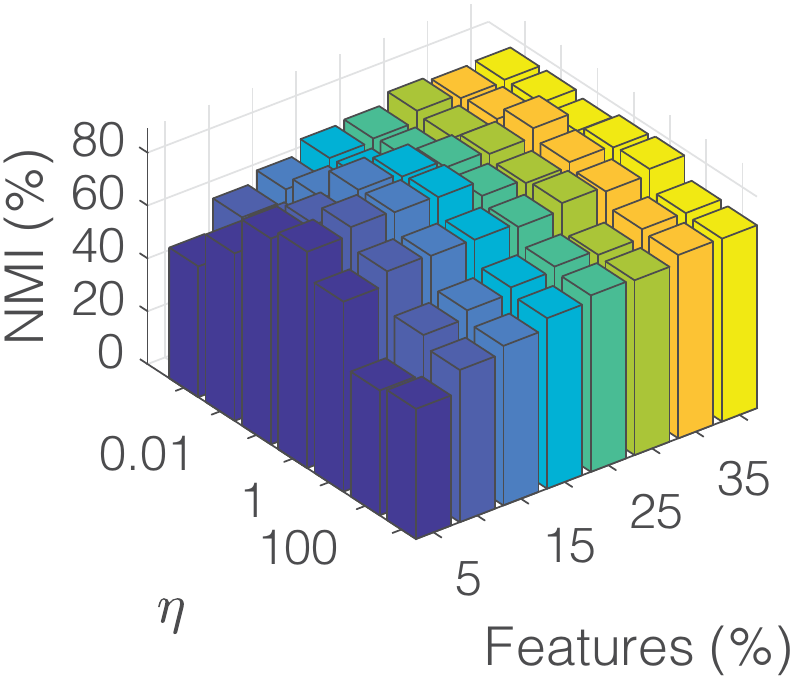}}}
		{\subfigure[{\scriptsize Outdoor-Scene}]
			{\includegraphics[width=0.24\columnwidth]{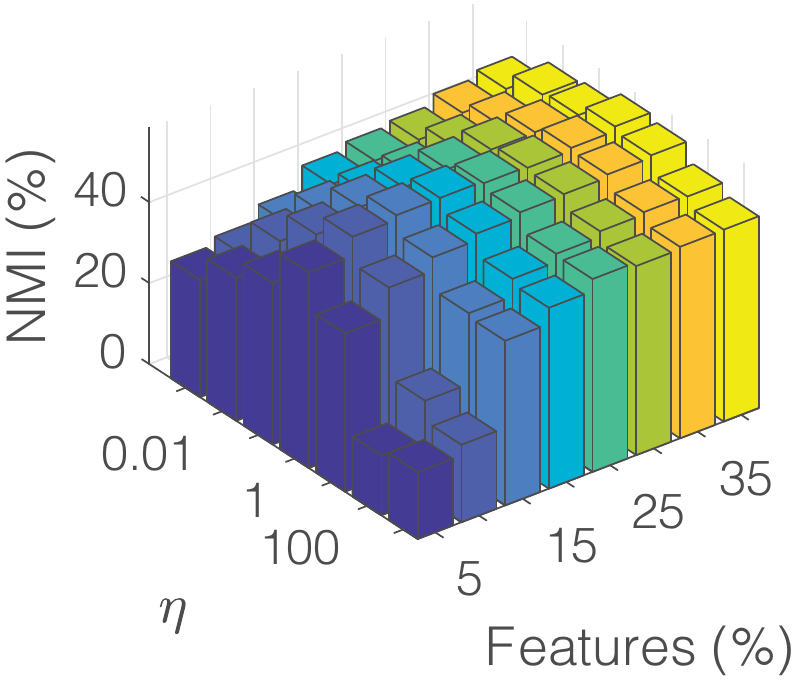}}}
		\caption{Average performance (w.r.t. NMI(\%)) of JMVFG with varying $\eta$ and percentage of selected features while fixing $\beta=1$ and $\gamma=1$.}
		\label{fig:sen_p}
	\end{center}\vskip -0.1 in
\end{figure}

\begin{table*}[!t]
	\caption{Ablation analysis w.r.t. the \emph{unsupervised feature selection} performance ($mean\%_{\pm std\%}$) over 20 runs on the benchmark datasets. The best score in each row is highlighted in bold.}
	\label{t:Ab_fs}
	\centering \begin{tabular}{|m{0.9cm}<{\centering}|m{4.24cm}<{\centering}|m{1.6cm}<{\centering}|m{1.5cm}<{\centering}m{1.5cm}<{\centering}|m{1.5cm}<{\centering}m{1.5cm}<{\centering}|m{1.5cm}<{\centering}|}
		\hline
		\multirow{4}{*}{Metric}
		&Method        &JMVFG     &\multicolumn{5}{c|}{Leave one/two/three components out}\\
		\cline{2-8}
		&Regularization of $\textbf{W}^{(v)}$    &\checkmark     &\checkmark         &\tiny{\XSolidBrush}                &\checkmark    &\tiny{\XSolidBrush}                   &\tiny{\XSolidBrush}      \\
		&Cross-Space Locality Preservation                     &\checkmark     &\checkmark         &\checkmark      &\tiny{\XSolidBrush}               &\checkmark         &\tiny{\XSolidBrush}         \\
		&Graph Learning                          &\checkmark     &\tiny{\XSolidBrush}                    &\checkmark      &\tiny{\XSolidBrush}               &\tiny{\XSolidBrush}                    &\tiny{\XSolidBrush}    \\
		\hline
		\multirow{9}{*}{NMI}
		&MSRC-v1	&$\textbf{74.94}_{\pm3.59}$	&$72.47_{\pm5.11}$	&$46.57_{\pm4.03}$	&$72.01_{\pm3.86}$	&$45.12_{\pm3.33}$	&$49.52_{\pm4.26}$	\\
		& ORL	&$\textbf{78.90}_{\pm2.43}$	&$77.77_{\pm1.95}$	&$71.90_{\pm1.98}$	&$77.95_{\pm2.63}$	&$73.28_{\pm1.93}$	&$69.40_{\pm1.69}$	\\
		&WebKB-Texas	&$\textbf{31.55}_{\pm5.03}$	&$28.96_{\pm5.57}$	&$25.26_{\pm7.97}$	&$27.46_{\pm6.48}$	&$24.85_{\pm5.68}$	&$27.16_{\pm3.73}$	\\
		&Caltech101-7	&$\textbf{57.74}_{\pm6.77}$	&$53.25_{\pm3.88}$	&$36.56_{\pm2.43}$	&$51.51_{\pm3.31}$	&$39.01_{\pm3.15}$	&$39.39_{\pm2.65}$	\\
		&Caltech101-20	&$\textbf{60.49}_{\pm1.62}$	&$58.83_{\pm1.35}$	&$49.94_{\pm1.77}$	&$59.57_{\pm2.17}$	&$49.84_{\pm1.33}$	&$49.69_{\pm2.06}$	\\
		&Mfeat	&$\textbf{81.83}_{\pm3.58}$	&$80.48_{\pm3.24}$	&$62.10_{\pm2.22}$	&$80.41_{\pm2.26}$	&$61.71_{\pm2.28}$	&$61.33_{\pm1.96}$	\\
		&handwritten	&$\textbf{87.43}_{\pm5.13}$	&$86.07_{\pm3.42}$	&$59.21_{\pm1.49}$	&$85.45_{\pm4.12}$	&$64.67_{\pm2.20}$	&$61.77_{\pm1.34}$	\\
		&Outdoor-Scene	&$\textbf{56.00}_{\pm1.61}$	&$54.38_{\pm1.97}$	&$30.73_{\pm1.47}$	&$54.79_{\pm0.95}$	&$30.44_{\pm1.26}$	&$30.70_{\pm1.50}$	\\
		\cline{2-8}
		&Avg. score	&$\textbf{66.11}$	&$64.02$	&$47.79$	&$63.64$	&$48.61$	&$48.62$	\\
		\hline
		\multirow{9}{*}{ACC}
		&MSRC-v1	&$\textbf{80.55}_{\pm7.14}$	&$77.02_{\pm6.74}$	&$48.19_{\pm3.47}$	&$76.95_{\pm7.08}$	&$54.98_{\pm4.90}$	&$53.86_{\pm5.87}$	\\
		& ORL	&$\textbf{60.74}_{\pm4.23}$	&$58.24_{\pm3.11}$	&$55.31_{\pm3.69}$	&$57.54_{\pm2.89}$	&$53.94_{\pm2.89}$	&$53.49_{\pm3.13}$	\\
		&WebKB-Texas	&$\textbf{59.76}_{\pm4.44}$	&$55.27_{\pm6.58}$	&$58.90_{\pm4.02}$	&$54.41_{\pm7.06}$	&$48.77_{\pm9.38}$	&$54.89_{\pm3.56}$	\\
		&Caltech101-7	&$\textbf{62.54}_{\pm9.73}$	&$58.15_{\pm6.81}$	&$42.62_{\pm2.43}$	&$55.04_{\pm5.81}$	&$47.79_{\pm3.91}$	&$44.46_{\pm2.79}$	\\
		&Caltech101-20	&$\textbf{50.78}_{\pm4.33}$	&$47.19_{\pm5.39}$	&$40.23_{\pm4.29}$	&$45.16_{\pm4.32}$	&$38.54_{\pm3.58}$	&$39.20_{\pm3.43}$	\\
		&Mfeat	&$\textbf{83.45}_{\pm8.12}$	&$81.03_{\pm6.57}$	&$65.35_{\pm4.36}$	&$79.96_{\pm5.26}$	&$64.81_{\pm4.99}$	&$63.67_{\pm4.13}$	\\
		&handwritten	&$\textbf{85.81}_{\pm10.33}$	&$84.08_{\pm7.42}$	&$63.75_{\pm4.32}$	&$82.46_{\pm7.54}$	&$65.48_{\pm3.68}$	&$62.85_{\pm2.67}$	\\
		&Outdoor-Scene	&$\textbf{68.37}_{\pm4.84}$	&$64.48_{\pm4.97}$	&$44.06_{\pm2.22}$	&$64.77_{\pm4.94}$	&$43.87_{\pm2.51}$	&$44.10_{\pm2.80}$	\\
		\cline{2-8}
		&Avg. score	&$\textbf{69.00}$	&$65.68$	&$52.30$	&$64.54$	&$52.27$	&$52.06$	\\
		\hline
		\multirow{9}{*}{PUR}
		&MSRC-v1	&$\textbf{82.21}_{\pm5.56}$	&$79.33_{\pm5.50}$	&$50.98_{\pm2.99}$	&$79.69_{\pm5.31}$	&$58.29_{\pm5.07}$	&$57.40_{\pm4.57}$	\\
		& ORL	&$\textbf{65.34}_{\pm3.52}$	&$63.38_{\pm3.20}$	&$58.79_{\pm3.19}$	&$62.86_{\pm3.57}$	&$56.56_{\pm2.58}$	&$57.11_{\pm2.66}$	\\
		&WebKB-Texas	&$\textbf{71.31}_{\pm3.75}$	&$67.99_{\pm5.22}$	&$67.83_{\pm2.23}$	&$67.75_{\pm5.60}$	&$68.07_{\pm2.02}$	&$68.02_{\pm2.57}$	\\
		&Caltech101-7	&$\textbf{90.01}_{\pm1.21}$	&$89.09_{\pm1.24}$	&$85.60_{\pm2.04}$	&$88.88_{\pm1.49}$	&$84.97_{\pm1.78}$	&$84.74_{\pm1.78}$	\\
		&Caltech101-20	&$\textbf{79.41}_{\pm1.26}$	&$77.68_{\pm1.61}$	&$72.54_{\pm1.26}$	&$77.76_{\pm1.57}$	&$72.40_{\pm1.61}$	&$72.32_{\pm1.53}$	\\
		&Mfeat	&$\textbf{85.66}_{\pm5.90}$	&$83.04_{\pm5.01}$	&$68.57_{\pm3.65}$	&$82.67_{\pm3.58}$	&$68.20_{\pm3.76}$	&$67.12_{\pm3.36}$	\\
		&handwritten	&$\textbf{88.79}_{\pm7.61}$	&$87.36_{\pm5.36}$	&$66.38_{\pm3.45}$	&$86.11_{\pm5.96}$	&$68.57_{\pm3.25}$	&$65.75_{\pm2.13}$	\\
		&Outdoor-Scene	&$\textbf{69.56}_{\pm3.04}$	&$66.44_{\pm3.34}$	&$45.81_{\pm1.54}$	&$66.51_{\pm3.29}$	&$45.76_{\pm2.06}$	&$45.58_{\pm1.70}$	\\
		\cline{2-8}
		&Avg. score	&$\textbf{79.04}$	&$76.79$	&$64.56$	&$76.53$	&$65.35$	&$64.76$	\\
		\hline
	\end{tabular}
\end{table*}

\begin{table*}[!t]
	\caption{Ablation analysis w.r.t. the \emph{graph learning} (for multi-view clustering) performance ($mean\%_{\pm std\%}$) over 20 runs on the benchmark datasets. }
	\label{t:Ab_gl}
	\centering \begin{tabular}{|m{2cm}<{\centering}|m{1.8cm}<{\centering}m{2.3cm}<{\centering}|m{1.8cm}<{\centering}m{2.3cm}<{\centering}|m{1.8cm}<{\centering}m{2.3cm}<{\centering}|}
		\hline
		\multirow{2}{*}{Datasets}
		&\multicolumn{2}{c|}{NMI}    &\multicolumn{2}{c|}{ACC}  &\multicolumn{2}{c|}{PUR}    \\
		\cline{2-7}
		&JMVFG  &Graph learning with objective~(\ref{eq:obj_pure_gl}) &JMVFG  &Graph learning with objective~(\ref{eq:obj_pure_gl})  &JMVFG  &Graph learning with objective~(\ref{eq:obj_pure_gl}) \\
		\hline
		MSRC-v1	&$76.87_{\pm0.00}$	&$72.46_{\pm0.00}$	&$81.90_{\pm0.00}$	&$77.14_{\pm0.00}$	&$82.38_{\pm0.00}$	&$80.95_{\pm0.00}$	\\
		ORL	&$92.46_{\pm0.00}$	&$91.31_{\pm0.00}$	&$86.25_{\pm0.00}$	&$84.50_{\pm0.00}$	&$88.75_{\pm0.00}$	&$87.00_{\pm0.00}$	\\
		WebKB-Texas	&$25.69_{\pm0.00}$	&$17.04_{\pm0.00}$	&$52.94_{\pm0.00}$	&$55.08_{\pm0.00}$	&$67.91_{\pm0.00}$	&$62.03_{\pm0.00}$	\\
		Caltech101-7	&$77.37_{\pm0.00}$	&$49.76_{\pm0.00}$	&$88.87_{\pm0.00}$	&$63.30_{\pm0.00}$	&$89.35_{\pm0.00}$	&$86.30_{\pm0.00}$	\\
		Caltech101-20	&$69.26_{\pm0.00}$	&$62.70_{\pm0.00}$	&$52.56_{\pm0.00}$	&$58.84_{\pm0.00}$	&$77.03_{\pm0.00}$	&$75.23_{\pm0.00}$	\\
		Mfeat	&$95.47_{\pm0.00}$	&$84.98_{\pm0.00}$	&$98.00_{\pm0.00}$	&$86.40_{\pm0.00}$	&$98.00_{\pm0.00}$	&$86.40_{\pm0.00}$	\\
		handwritten	&$96.34_{\pm0.00}$	&$86.83_{\pm0.00}$	&$98.55_{\pm0.00}$	&$86.75_{\pm0.00}$	&$98.55_{\pm0.00}$	&$86.75_{\pm0.00}$	\\
		Outdoor-Scene	&$63.69_{\pm0.00}$	&$59.02_{\pm0.00}$	&$74.63_{\pm0.00}$	&$66.33_{\pm0.00}$	&$74.63_{\pm0.00}$	&$68.30_{\pm0.00}$	\\
		\hline
		Avg.score	&$74.64$	&$65.51$	&$79.21$	&$72.29$	&$84.58$	&$79.12$	\\
		\hline
	\end{tabular}
\end{table*}

\subsubsection{Ablation w.r.t. Multi-view Graph Learning}
\label{sec:ablation_gl}
This section conducts ablation analysis w.r.t. the multi-view graph learning (for clustering) task. Note that \emph{the first two terms} in objective \eqref{eq:3-11} are designed for feature selection, \emph{the fourth term} is designed for graph learning, and \emph{the third term} bridges the gap between feature selection and graph learning. To verify how the feature selection part (via the first three terms) affects the graph learning performance, we compare the proposed JMVFG method with all terms, corresponding to the objective \eqref{eq:3-11}, against the variant with only the graph learning term, corresponding to the objective \eqref{eq:obj_pure_gl}. As shown in Table~\ref{t:Ab_gl}, the incorporation of the feature selection terms leads to significant improvements on the graph learning performance, where the average NMI(\%), ACC(\%), and PUR(\%) scores (across eight datasets) go from 65.51, 72.29, and 79.12 to 74.64, 79.21, 84.58, respectively.

\section{Conclusion and Future Work}
\label{sec:conclusion}
In this paper, we formulate the multi-view feature selection and the multi-view graph learning in a unified framework termed JMVFG, which is capable of simultaneously leveraging the cluster structure learning (via orthogonal decomposition), the cross-space locality preservation, and the global multi-graph fusion. Specifically, to learn the discriminant cluster structure, the projection matrix is constrained via orthogonal decomposition, where the target matrix is decomposed into a view-specific basis matrix and a view-consistent cluster indicator matrix. A multi-view graph learning term is incorporated to learn the global graph structure in the original space, which is further bridged to the multi-view feature selection through a cross-space locality preservation term. By seamlessly modeling multi-view feature selection and multi-view graph learning in a joint objective function, we then design an efficient optimization algorithm with theoretically-proved convergence. Extensive experiments on eight multi-view datasets have confirmed the superiority of our approach.

In the future work, we plan to extend our framework from the general graph structure to the bipartite graph structure \cite{huang19_tkde,Fang2023} in order to enhance its scalability for large-scale datasets. Besides, as the deep clustering has been a popular research topic in recent years, the integration of our joint learning methodology into the deep clustering framework \cite{zhong23_npl,Deng2023} may be another interesting research direction in the future work.

\section*{Acknowledgments}

This work was supported by the NSFC (61976097, 62276277, \& U1811263) and the Natural Science Foundation of Guangdong Province (2021A1515012203).

\bibliographystyle{IEEEtran}
\bibliography{myref}

\begin{IEEEbiography}[{\includegraphics[width=1in,height=1.25in,clip,keepaspectratio]{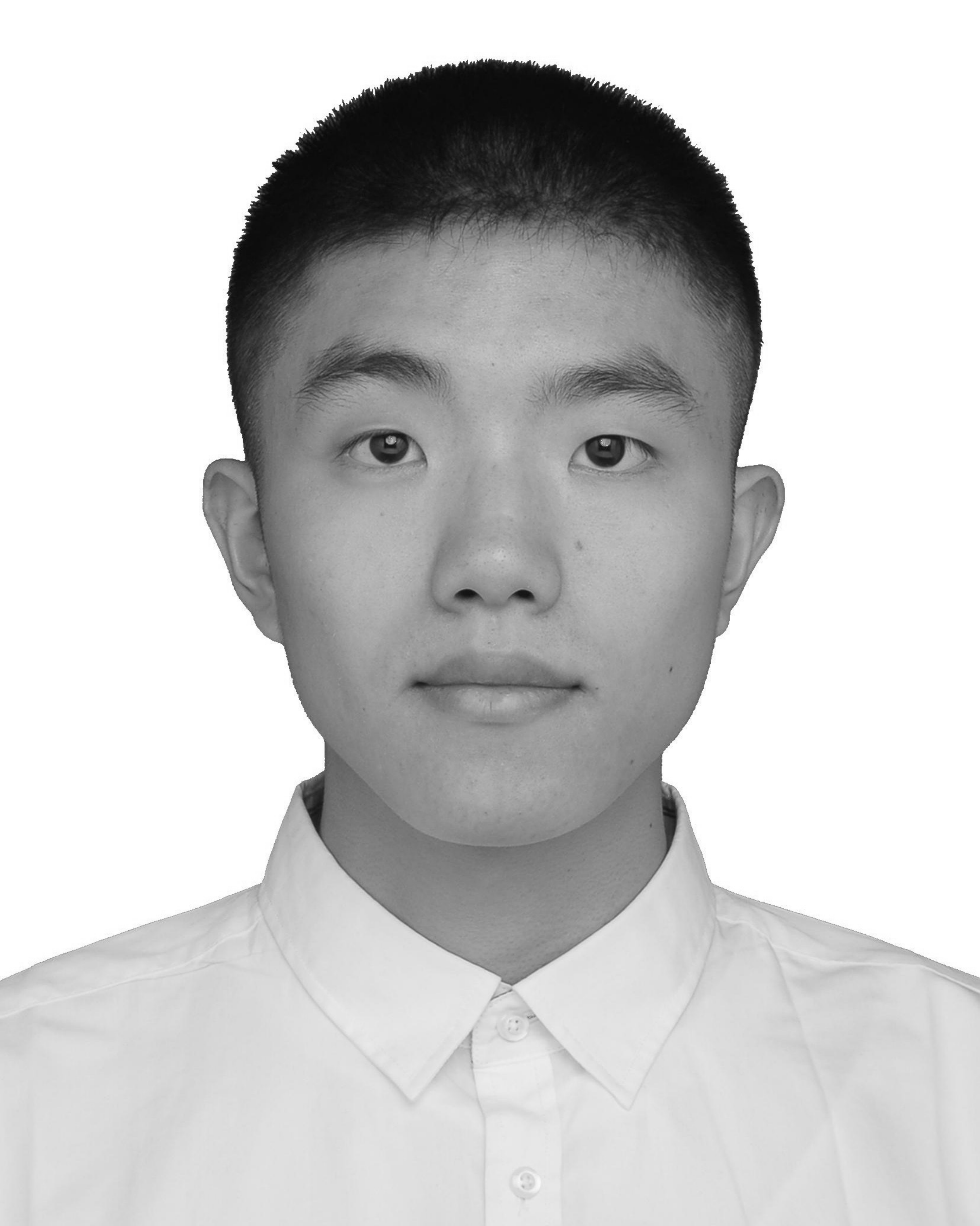}}]{Siguo Fang}
	received the B.S. degree in information and computing sciences from the China Jiliang University, Hangzhou, China, in 2021. He is currently pursuing the master degree in computer science with the College of Mathematics and Informatics, South China Agricultural University, Guangzhou, China. His research interests include data mining and machine learning. His work has appeared in IEEE TNNLS and IEEE TETCI.
\end{IEEEbiography}

\begin{IEEEbiography}[{\includegraphics[width=1in,height=1.25in,clip,keepaspectratio]{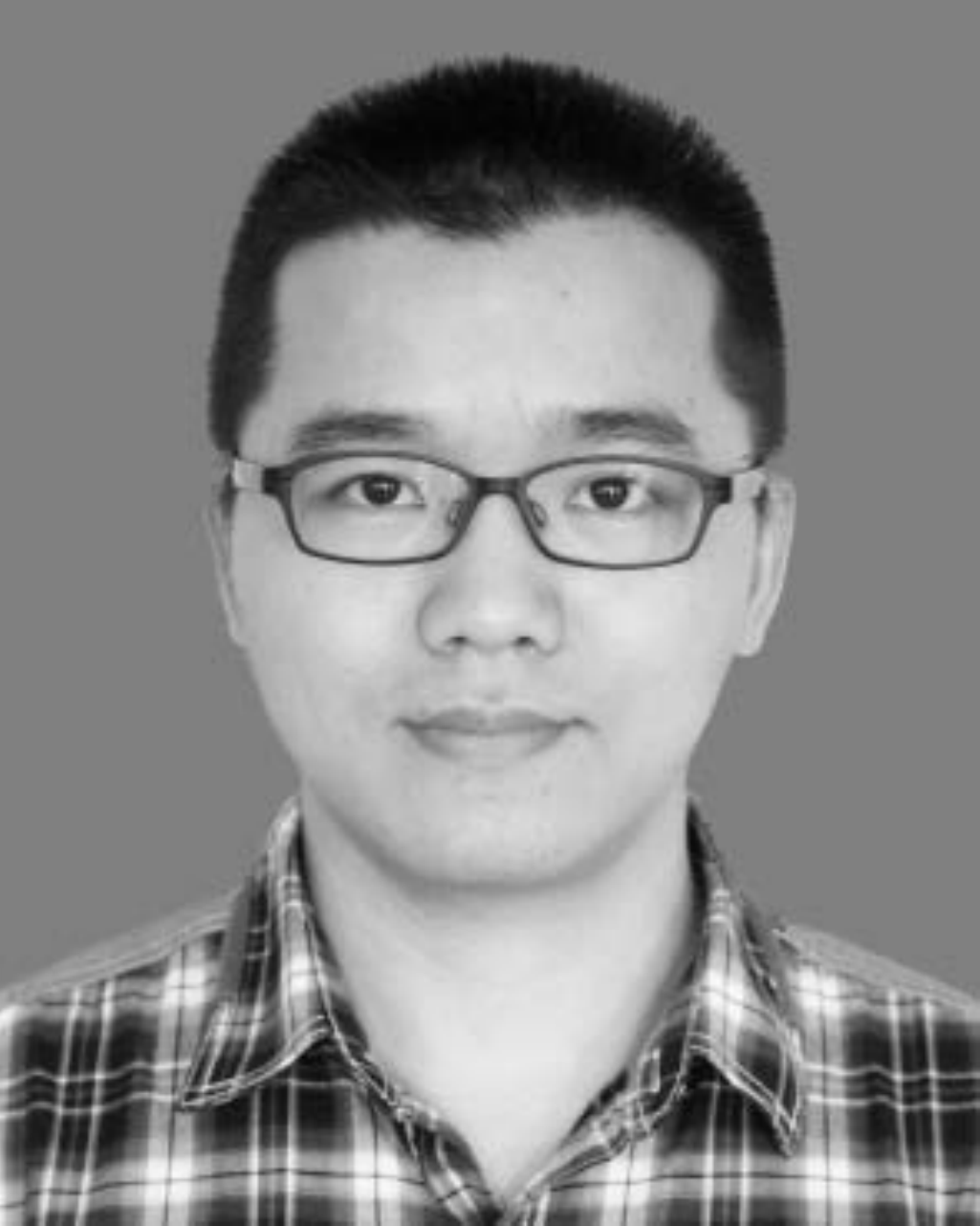}}]{Dong Huang}
	received the B.S. degree in computer science in 2009 from South China University of Technology, Guangzhou, China. He received the M.Sc. degree in computer science in 2011 and the Ph.D. degree in computer science in 2015, both from Sun Yat-sen University, Guangzhou, China. He joined South China Agricultural University in 2015, where he is currently an Associate Professor with the College of Mathematics and Informatics. From July 2017 to July 2018, he was a visiting fellow with the School of Computer Science and Engineering, Nanyang Technological University, Singapore. His research interests include data mining and machine learning. He has published more than 70 papers in refereed journals and conferences, such as IEEE TKDE, IEEE TNNLS, IEEE TCYB, IEEE TSMC-S, IEEE TETCI, ACM TKDD, SIGKDD, AAAI, and ICDM. He was the recipient of the 2020 ACM Guangzhou Rising Star Award.
\end{IEEEbiography}

\begin{IEEEbiography}[{\includegraphics[width=1in,height=1.25in,clip,keepaspectratio]{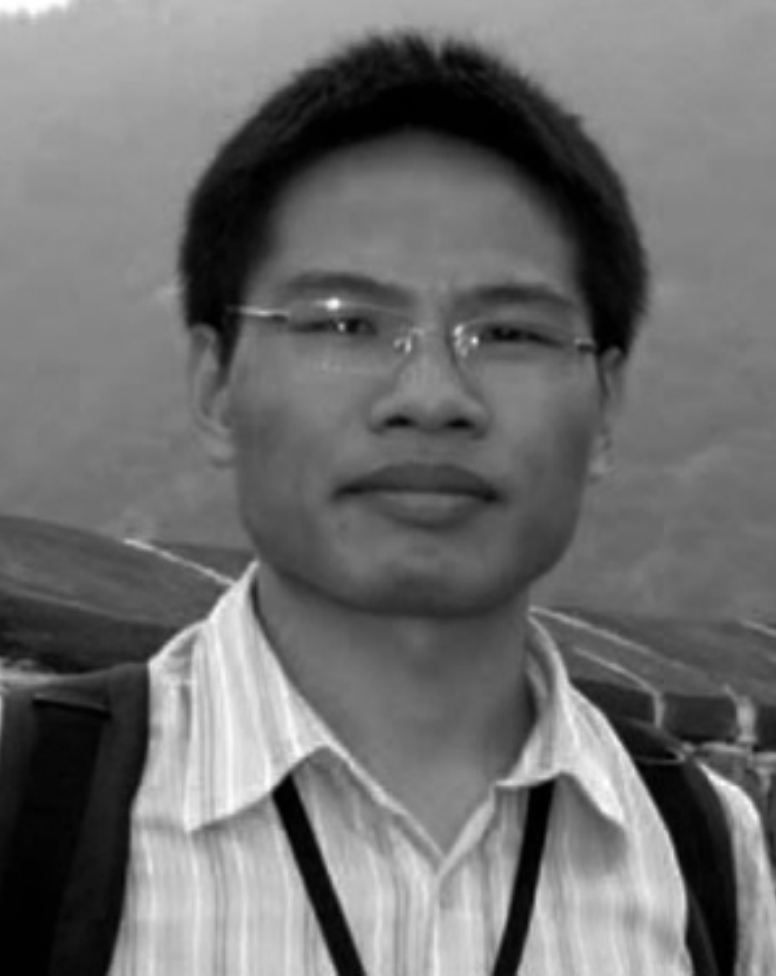}}]{Chang-Dong Wang}
	received the B.S. degree in applied mathematics in 2008, the M.Sc. degree in computer science in 2010, and the Ph.D. degree in computer science in 2013, all from Sun Yat-sen University, Guangzhou, China. He was a visiting student at the University of Illinois at Chicago from January 2012 to November 2012. He is currently an Associate Professor with the School of Data and Computer Science, Sun Yat-sen University, Guangzhou, China. His current research interests include machine learning and data mining. He has published more than 100 scientific papers in international journals and conferences such as IEEE TPAMI, IEEE TKDE, IEEE TNNLS, IEEE TSMC-C, ACM TKDD, Pattern Recognition, SIGKDD, ICDM and SDM. His ICDM 2010 paper won the Honorable Mention for Best Research Paper Award. He was awarded 2015 Chinese Association for Artificial Intelligence (CAAI) Outstanding Dissertation.
\end{IEEEbiography}

\begin{IEEEbiography}[{\includegraphics[width=1in,height=1.25in,clip,keepaspectratio]{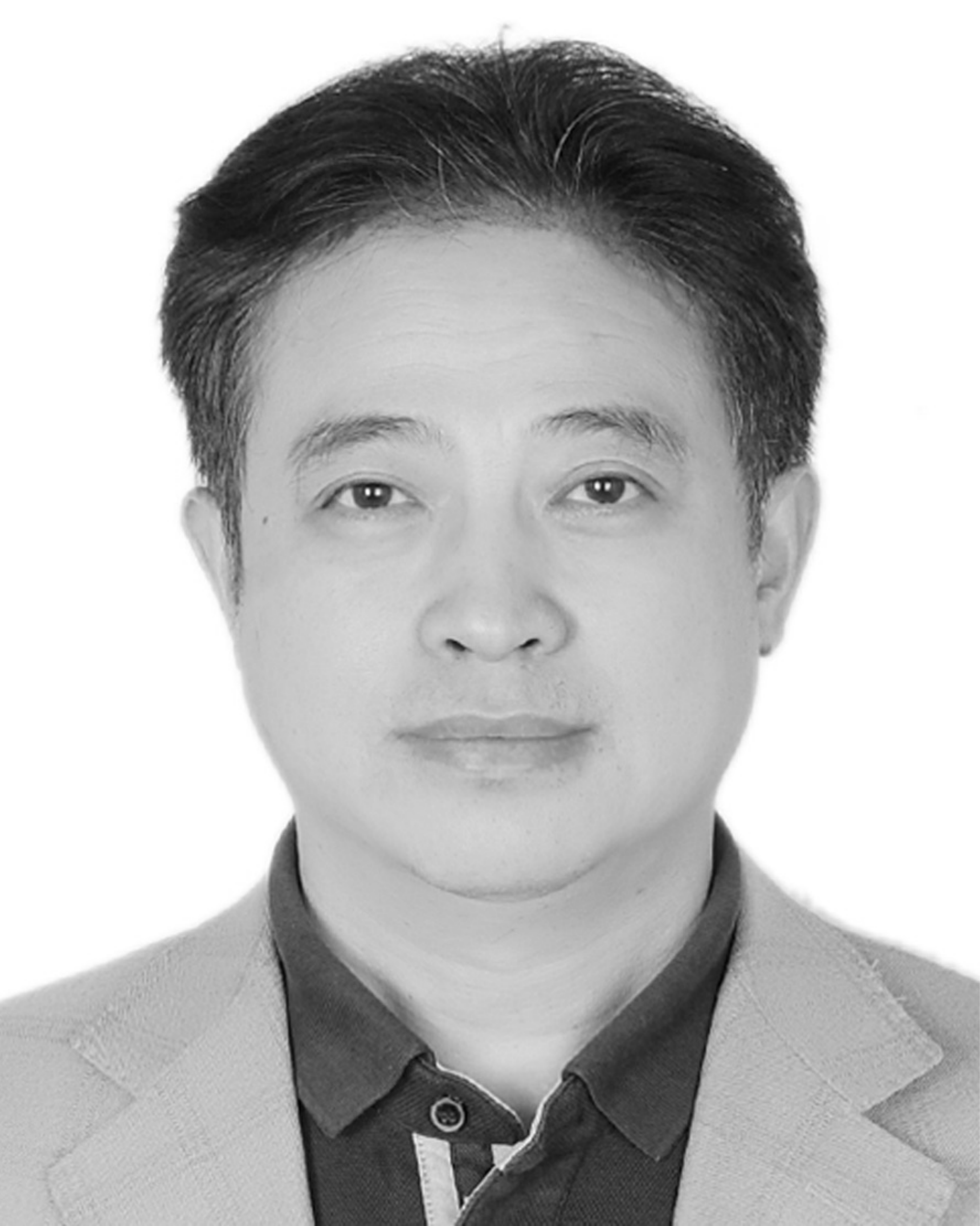}}]{Yong Tang} is the founder of SCHOLAT, a kind of scholar social network. He is now a Professor and Dean of School of Computer Science at South China Normal University. He got his BS and MSc degrees from Wuhan University in 1985 and 1990 respectively, and PhD degree from University of Science and Technology of China in 2001, all in computer science. Before joining South China Normal University in 2009, he was vice Dean of School of Information of Science and Technology at Sun Yat-Sen University. He has published more than 200 papers and books. He has supervised more than 40 PhD students since 2003 and more than 100 Master students since 1996. His main research areas include data and knowledge engineering, social networking and collaborative computing. He currently serves as the director of technical committee on collaborative computing of China Computer Federation (CCF) and the executive vice president of Guangdong Computer Academy. For more information, please visit \url{https://scholat.com/ytang}.
\end{IEEEbiography}

\appendix

In the Appendix, we provide the theoretical details for the convergence analysis in Section \ref{TheoreticalAnalysis}, and report additional experimental results in Section \ref{ExperimentalAnalysis}.

\subsection{Theoretical Details of Convergence Analysis}\label{TheoreticalAnalysis}
In this section, we theoretically prove that the objective function monotonically decreases after each variable is updated in each iteration under the mild assumption that $\textbf{A}^{(v)}$ should be initialized to be symmetric.

\subsubsection{Update $\bm\delta$}
The subproblem that only relates to $\bm\delta$ is
\begin{align}
\label{eq:moDelta}
&\min_{\bm\delta}~\mathcal{L}(\bm\delta)=\sum_{v=1}^V~||\textbf{S}-\delta^{(v)}\textbf{A}^{(v)}||_F^2\notag\\
&~s.t.~\bm\delta^T\textbf{1}=1,\bm\delta\ge0.
\end{align}
Let $\bm\delta_{(t+1)}$ denote the value obtained in the $(t+1)$-th iteration. Then we have Theorem 1.
\begin{theorem}
In iteration $t+1$, $\mathcal{L}(\bm\delta_{(t+1)})\le\mathcal{L}(\bm\delta_{(t)})$ after solving the subproblem \eqref{eq:moDelta}.
\end{theorem}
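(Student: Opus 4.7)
The plan is to exploit the convexity of the subproblem~\eqref{eq:moDelta} so that the update of $\bm\delta$ produces a global minimizer of the subproblem, which in turn is guaranteed to be no worse than the previous iterate $\bm\delta_{(t)}$. Concretely, I would first verify that $\mathcal{L}(\bm\delta)$ is strictly convex in $\bm\delta$: expanding $\mathcal{L}(\bm\delta)=\sum_v[Tr(\textbf{S}\textbf{S}^T) - 2\delta^{(v)} Tr(\textbf{A}^{(v)}\textbf{S}^T) + (\delta^{(v)})^2 Tr(\textbf{A}^{(v)}(\textbf{A}^{(v)})^T)]$ reveals a separable quadratic in the $\delta^{(v)}$ with positive coefficients $q^{(v)}=Tr(\textbf{A}^{(v)}(\textbf{A}^{(v)})^T)>0$ (assuming $\textbf{A}^{(v)}\neq \textbf{0}$). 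The feasible set $\Omega=\{\bm\delta:\bm\delta^T\textbf{1}=1,\bm\delta\ge 0\}$ is a nonempty convex polytope (the probability simplex), so the subproblem is a convex quadratic program.

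Next I would invoke standard convex optimization theory to assert that the KKT conditions~\eqref{eq:3-39a}--\eqref{eq:3-39c} are both necessary and sufficient for global optimality. The derivation in the main text constructs a candidate $\bm\delta^\ast$ parametrized by the scalar $\widetilde{\mu}^\ast$ (the root of $f(\widetilde{\mu})=0$), where $f$ is monotone and piecewise linear, so the Newton iterates converge to its unique root. I would then verify that the produced $\bm\delta^\ast$ and the corresponding Lagrange multipliers $\lambda^\ast, \bm\mu^\ast$ jointly satisfy stationarity, primal feasibility ($\bm\delta^\ast\in\Omega$ via the $(\cdot)_+$ projection and the defining relation for $\widetilde{\mu}^\ast$ that enforces $(\bm\delta^\ast)^T\textbf{1}=1$), dual feasibility ($\bm\mu^\ast\ge 0$), and complementary slackness ($\mu^{(v)\ast}\delta^{(v)\ast}=0$). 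Hence $\bm\delta^\ast$ is the unique global minimizer of~\eqref{eq:moDelta}.

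Finally, by setting $\bm\delta_{(t+1)}=\bm\delta^\ast$ and noting that $\bm\delta_{(t)}\in\Omega$ is a feasible but not necessarily optimal point, global optimality of $\bm\delta^\ast$ immediately yields
\begin{align}
\mathcal{L}(\bm\delta_{(t+1)})=\mathcal{L}(\bm\delta^\ast)\le \mathcal{L}(\bm\delta_{(t)}),
\end{align}
which is the desired monotonicity.

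The main obstacle I anticipate is the rigorous verification that the Newton iteration on $f(\widetilde{\mu})$ actually returns the correct root and thereby yields a feasible $\bm\delta^\ast$. The function $f$ involves a sum of $(\cdot)_+$ truncations and so is only piecewise linear and nondifferentiable at the kink points; I would handle this either by using a subgradient variant of Newton's method or by observing that $f$ is monotone nondecreasing and continuous, so bisection (or Newton applied to a smoothed version) converges to the unique root at which feasibility $(\bm\delta^\ast)^T\textbf{1}=1$ holds. Once this feasibility-plus-KKT certificate is in place, the monotonicity conclusion follows immediately from convex duality without any further computation.
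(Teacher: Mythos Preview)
Your proposal is correct and follows essentially the same route as the paper: establish that the subproblem is a convex quadratic program over the simplex, invoke the sufficiency of the KKT conditions for convex problems to conclude that the updated $\bm\delta_{(t+1)}$ is a global minimizer, and then use feasibility of $\bm\delta_{(t)}$ to obtain the monotonicity. Your additional discussion of the Newton iteration's convergence to the root of $f(\widetilde{\mu})$ goes beyond what the paper verifies (it simply asserts that the KKT point is obtained), so if anything you are being more careful rather than taking a different path.
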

\begin{proof}
According to \cite{COPT}, we can prove that subproblem \eqref{eq:moDelta} is a convex optimization problem. First, the equality constraint function $f(\bm\delta)=\bm\delta^T\textbf{1}-1$ is affine. Second, the inequality constraint functions $f(\delta^{(v)})=-\delta^{(v)}$ are linear functions and therefore also convex (for $v=1,\cdots,V$). Finally, we have
\begin{align}
\mathcal{L}(\bm\delta)=&\sum_v[Tr(\textbf{S}\textbf{S}^T)+(\delta^{(v)})^2Tr(\textbf{A}^{(v)}(\textbf{A}^{(v)})^T)\notag\\
&-2\delta^{(v)}Tr(\textbf{A}^{(v)}\textbf{S}^T)].
\end{align}
Let $Tr(\textbf{A}^{(v)}\textbf{S}^T)=p^{(v)}\ge0$, $Tr[\textbf{A}^{(v)}(\textbf{A}^{(v)})^T]=q^{(v)}>0$ and $Tr(\textbf{S}\textbf{S}^T)=C$. We have
\begin{align}
\mathcal{L}(\bm\delta)=&\sum_v[q^{(v)}(\delta^{(v)})^2-2p^{(v)}\delta^{(v)}+C].
\end{align}
Since $q^{(v)}>0$, we know $f_v(\delta^{(v)})=q^{(v)}(\delta^{(v)})^2-2p^{(v)}\delta^{(v)}+C$ are convex functions (for $v=1,\cdots,V$). Note that convexity is preserved under addition. Therefore, the objective function $\mathcal{L}(\bm\delta)=\sum_vf_v(\delta^{(v)})$ is convex, and the subproblem \eqref{eq:moDelta} is a convex optimization problem.

When the problem is convex, the KKT conditions are sufficient for the points to be optimal \cite{COPT}. In the optimization part, we obtain the point that satisfies the KKT conditions, i.e. the optimal point. Thus, for an arbitrary $\bm\delta$, in iteration $t+1$, we have $\mathcal{L}(\bm\delta_{(t+1)})\le\mathcal{L}(\bm\delta)$, especially $\mathcal{L}(\bm\delta_{(t+1)})\le\mathcal{L}(\bm\delta_{(t)})$.
\end{proof}

\subsubsection{Update $\textbf{W}^{(v)}$}
\begin{lemma}
\label{Le:1}
For an arbitrary matrix $\textbf{A}=(a_{ij})_{m\times d}$ and a diagonal matrix $\textbf{B}=(b_{ii})_{m\times m}$, we have
\begin{align}
	Tr(\textbf{A}^T\textbf{B}\textbf{A})=\sum_{i=1}^m b_{ii}||\textbf{a}_{i\cdot}||_2^2.
\end{align}
\end{lemma}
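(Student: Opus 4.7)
The plan is to establish the identity by direct entry-wise expansion and then collapse the double sum using the definition of the row 2-norm. Since $\textbf{B}$ is diagonal, the product $\textbf{B}\textbf{A}$ has a particularly simple form: its $(i,j)$-th entry is just $b_{ii}a_{ij}$, which I would state first as a small computation.

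Next, I would form $\textbf{A}^T(\textbf{B}\textbf{A})$ and read off its $(k,j)$-th entry as $\sum_{i=1}^m a_{ik} b_{ii} a_{ij}$. Taking the trace means setting $k=j$ and summing, giving
\begin{align}
Tr(\textbf{A}^T\textbf{B}\textbf{A}) = \sum_{j=1}^{d}\sum_{i=1}^{m} b_{ii}\, a_{ij}^2.
\end{align}
Swapping the order of summation (which is legal since the sums are finite) and pulling $b_{ii}$ out of the inner sum yields $\sum_{i=1}^m b_{ii}\sum_{j=1}^d a_{ij}^2$, and recognizing the inner sum as $||\textbf{a}_{i\cdot}||_2^2$ completes the argument.

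There is essentially no obstacle here, as the lemma is a routine algebraic identity; the only care needed is to correctly track indices when transposing and to exploit the diagonality of $\textbf{B}$ so that only terms with matching row/column indices in $\textbf{B}$ survive. I expect the entire proof to fit in a few lines of displayed equations with no additional machinery required. This lemma will then be used downstream to rewrite the $\ell_{2,1}$-norm regularization term via a trace form involving the diagonal matrix $\textbf{D}^{(v)}$, which is the standard device for proving the monotonic decrease of $\mathcal{L}(\textbf{W}^{(v)})$ in Theorem~\ref{the:W}.
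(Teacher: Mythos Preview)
Your proposal is correct and follows essentially the same route as the paper: both proofs expand $Tr(\textbf{A}^T\textbf{B}\textbf{A})$ to the double sum $\sum_{j=1}^d\sum_{i=1}^m b_{ii}a_{ij}^2$, swap the order of summation, and identify the inner sum with $||\textbf{a}_{i\cdot}||_2^2$. The only cosmetic difference is that the paper reaches the double sum via $Tr(\textbf{A}^T\textbf{B}\textbf{A})=\sum_{i=1}^d \textbf{a}_{\cdot i}^T\textbf{B}\textbf{a}_{\cdot i}$ using the column-vector block form, whereas you compute the $(k,j)$-th entry of $\textbf{A}^T\textbf{B}\textbf{A}$ directly; the subsequent steps are identical.
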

\begin{proof}
Since
\begin{align}
\textbf{A}^T\textbf{B}\textbf{A}=\begin{pmatrix}\textbf{a}_{\cdot1}^T \\ \textbf{a}_{\cdot2}^T \\ \vdots \\ \textbf{a}_{\cdot d}^T\end{pmatrix}\textbf{B}\begin{pmatrix}\textbf{a}_{\cdot 1}&\textbf{a}_{\cdot 2}&\cdots&\textbf{a}_{\cdot d}\end{pmatrix},
\end{align}
then we have
\begin{align}
	&Tr(\textbf{A}^T\textbf{B}\textbf{A})=\sum_{i=1}^d\textbf{a}_{\cdot i}^T\textbf{B}\textbf{a}_{\cdot i}\notag\\
	&=\sum_{i=1}^d\begin{pmatrix}a_{1i}&a_{2i}&\cdots&a_{mi}\end{pmatrix}\begin{pmatrix}b_{11}&&&\\&b_{22}&&\\&&\ddots&\\&&&b_{mm}\end{pmatrix}\begin{pmatrix}a_{1i}\\a_{2i}\\\vdots\\a_{mi}\end{pmatrix}\notag\\
	&=\sum_{i=1}^d\sum_{j=1}^m{b_{jj}a_{ji}^2}=\sum_{j=1}^m{b_{jj}\sum_{i=1}^da_{ji}^2}=\sum_{i=1}^m{b_{ii}||\textbf{a}_{i\cdot}||_2^2}.
\end{align}
\end{proof}

The subproblem that is only related to $\textbf{W}^{(v)}$ is
\begin{align}
\label{eq:moW}
\min\limits_{\textbf{W}^{(v)}}\ \mathcal{L}(\textbf{W}^{(v)})=&||(\textbf{W}^{(v)})^T\textbf{X}^{(v)}-\textbf{B}^{(v)}\textbf{H}^T||_F^2+\eta||\textbf{W}^{(v)}||_{2,1}\notag\\
&+\gamma Tr[(\textbf{W}^{(v)})^T\textbf{X}^{(v)}\textbf{L}(\textbf{X}^{(v)})^T\textbf{W}^{(v)}].
\end{align}
Let $\textbf{W}^{(v)}_{(t)}$ and $\textbf{D}^{(v)}_{(t)}$ represent the values obtained in the $t$-th iteration. We can obtain the value of $\textbf{W}^{(v)}_{(t+1)}$ for the $(t+1)$-th iteration as follows:
\begin{align}
\label{eq:upW}
\textbf{W}^{(v)}_{(t+1)}=&[\textbf{X}^{(v)}(\textbf{X}^{(v)})^T+\gamma\textbf{X}^{(v)}\textbf{L}(\textbf{X}^{(v)})^T\notag\\
&+\eta\textbf{D}^{(v)}_{(t)}]^{-1}\textbf{X}^{(v)}\textbf{H}(\textbf{B}^{(v)})^T.
\end{align}

\begin{theorem}
In iteration $t+1$, $\mathcal{L}(\textbf{W}^{(v)}_{(t+1)})\le\mathcal{L}(\textbf{W}^{(v)}_{(t)})$ after updating with Eq. \eqref{eq:upW}.
\end{theorem}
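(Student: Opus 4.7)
The plan is to recognize the update rule \eqref{eq:3-34} as a majorization--minimization (MM) step in which the nonsmooth $\ell_{2,1}$-term is replaced by a smooth quadratic surrogate weighted by the current $\textbf{D}^{(v)}_{(t)}$. Concretely, I would introduce the auxiliary function
\begin{align*}
\widetilde{\mathcal{L}}(\textbf{W}^{(v)};\textbf{W}^{(v)}_{(t)})
=&\,\mathcal{L}_1(\textbf{W}^{(v)})+\gamma\mathcal{L}_3(\textbf{W}^{(v)})\\
&+\eta\,Tr\!\left((\textbf{W}^{(v)})^T\textbf{D}^{(v)}_{(t)}\textbf{W}^{(v)}\right),
\end{align*}
so that, by Lemma~\ref{Le:1}, the last trace equals $\sum_{i}\|\textbf{W}^{(v)}_{i\cdot}\|_2^{2}/(2\|\textbf{W}^{(v)}_{(t),i\cdot}\|_2)$. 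The three ingredients I need are then: (a) $\textbf{W}^{(v)}_{(t+1)}$ given by \eqref{eq:3-34} is the \emph{global} minimizer of $\widetilde{\mathcal{L}}(\,\cdot\,;\textbf{W}^{(v)}_{(t)})$; (b) $\widetilde{\mathcal{L}}$ majorizes $\mathcal{L}$ up to an additive constant, i.e., $\mathcal{L}(\textbf{W})-\widetilde{\mathcal{L}}(\textbf{W};\textbf{W}_{(t)})$ is maximized at $\textbf{W}=\textbf{W}_{(t)}$; (c) chaining (a) and (b) yields the descent inequality.

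For (a), I would differentiate $\widetilde{\mathcal{L}}$ with respect to $\textbf{W}^{(v)}$, set the derivative to zero, and observe that the resulting linear system is exactly the one solved by \eqref{eq:3-34}. Uniqueness follows because the Hessian equals $2[\textbf{X}^{(v)}(\textbf{X}^{(v)})^T+\gamma\textbf{X}^{(v)}\textbf{L}(\textbf{X}^{(v)})^T+\eta\textbf{D}^{(v)}_{(t)}]$, which is positive definite: the graph Laplacian $\textbf{L}=\textbf{P}-\bar{\textbf{S}}$ is positive semidefinite, $\textbf{X}^{(v)}(\textbf{X}^{(v)})^T\succeq 0$, and $\textbf{D}^{(v)}_{(t)}\succ 0$ whenever no row of $\textbf{W}^{(v)}_{(t)}$ vanishes (the standard $\epsilon$-smoothing $d_{ii}=1/(2\sqrt{\|\textbf{W}^{(v)}_{(t),i\cdot}\|_2^2+\epsilon})$ guarantees this in general). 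Hence $\widetilde{\mathcal{L}}(\textbf{W}^{(v)}_{(t+1)};\textbf{W}^{(v)}_{(t)})\le\widetilde{\mathcal{L}}(\textbf{W}^{(v)}_{(t)};\textbf{W}^{(v)}_{(t)})$.

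For (b), the only term where $\mathcal{L}$ and $\widetilde{\mathcal{L}}$ differ is the $\ell_{2,1}$ regularizer versus its quadratic surrogate. Writing $a_{i}=\|\textbf{W}^{(v)}_{i\cdot}\|_2$ and $b_{i}=\|\textbf{W}^{(v)}_{(t),i\cdot}\|_2$, the required bound is the elementary inequality $a_i-\frac{a_i^{2}}{2b_i}\le\frac{b_i}{2}$, which is just a rearrangement of $(a_i-b_i)^2\ge 0$. Summing over $i$ gives
\begin{align*}
\|\textbf{W}^{(v)}\|_{2,1}-Tr\!\left((\textbf{W}^{(v)})^T\textbf{D}^{(v)}_{(t)}\textbf{W}^{(v)}\right)\le \tfrac{1}{2}\|\textbf{W}^{(v)}_{(t)}\|_{2,1},
\end{align*}
with equality at $\textbf{W}^{(v)}=\textbf{W}^{(v)}_{(t)}$. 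Therefore $\mathcal{L}(\textbf{W}^{(v)})\le\widetilde{\mathcal{L}}(\textbf{W}^{(v)};\textbf{W}^{(v)}_{(t)})+c_{t}$ for a constant $c_t$ independent of $\textbf{W}^{(v)}$, with equality at $\textbf{W}^{(v)}_{(t)}$. Combining with (a) gives the telescoping
$\mathcal{L}(\textbf{W}^{(v)}_{(t+1)})\le\widetilde{\mathcal{L}}(\textbf{W}^{(v)}_{(t+1)};\textbf{W}^{(v)}_{(t)})+c_t\le\widetilde{\mathcal{L}}(\textbf{W}^{(v)}_{(t)};\textbf{W}^{(v)}_{(t)})+c_t=\mathcal{L}(\textbf{W}^{(v)}_{(t)})$.

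I expect the main obstacle to be the careful bookkeeping around the $\ell_{2,1}$ majorization rather than any deep technicality: one has to make sure the factor of $2$ in $d_{ii}=1/(2\|\textbf{W}^{(v)}_{(t),i\cdot}\|_2)$ is consistent with the gradient identity $d\mathcal{L}_2/d\textbf{W}^{(v)}=2\textbf{D}^{(v)}\textbf{W}^{(v)}$ so that the surrogate indeed matches the update \eqref{eq:3-34}, and handle the degenerate case of zero rows by the standard $\epsilon$-smoothing argument so that $\textbf{D}^{(v)}_{(t)}$ is well-defined and positive definite throughout.
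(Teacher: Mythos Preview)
Your proposal is correct and matches the paper's own proof essentially step for step: the paper also replaces $\eta\|\textbf{W}^{(v)}\|_{2,1}$ by the quadratic surrogate $\eta\,Tr\bigl((\textbf{W}^{(v)})^T\textbf{D}^{(v)}_{(t)}\textbf{W}^{(v)}\bigr)$, shows that \eqref{eq:3-34} minimizes this surrogate, invokes the scalar inequality $\|\textbf{u}\|_2-\|\textbf{u}\|_2^2/(2\|\textbf{u}_t\|_2)\le\|\textbf{u}_t\|_2/2$ (your $(a_i-b_i)^2\ge 0$) summed over rows via Lemma~\ref{Le:1}, and combines the two inequalities to obtain $\mathcal{L}(\textbf{W}^{(v)}_{(t+1)})\le\mathcal{L}(\textbf{W}^{(v)}_{(t)})$. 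Your additional remarks on positive definiteness of the Hessian and the $\epsilon$-smoothing for zero rows are refinements the paper leaves implicit.
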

\begin{proof}
Let $\overline{\mathcal{L}}_2(\textbf{W}^{(v)})=Tr[(\textbf{W}^{(v)})^T\textbf{D}^{(v)}_{(t)}\textbf{W}^{(v)}]$. Note that $\textbf{D}^{(v)}_{(t)}$ is not related to $\textbf{W}^{(v)}$, but related to $\textbf{W}^{(v)}_{(t)}$. According to \cite{Mcook}, we have
\begin{align}
\frac{d\overline{\mathcal{L}}_2(\textbf{W}^{(v)})}{d\textbf{W}^{(v)}}=2\textbf{D}^{(v)}_{(t)}\textbf{W}^{(v)}.
\end{align}
Thus, we consider the following problem
\begin{align}
\label{eq:moW_n}
\min\limits_{\textbf{W}^{(v)}}\ \overline{\mathcal{L}}(\textbf{W}^{(v)})=&||(\textbf{W}^{(v)})^T\textbf{X}^{(v)}-\textbf{B}^{(v)}\textbf{H}^T||_F^2\notag\\
&+\eta Tr[(\textbf{W}^{(v)})^T\textbf{D}^{(v)}_{(t)}\textbf{W}^{(v)}]\notag\\
&+\gamma Tr[(\textbf{W}^{(v)})^T\textbf{X}^{(v)}\textbf{L}(\textbf{X}^{(v)})^T\textbf{W}^{(v)}].
\end{align}
Let $d\overline{\mathcal{L}}(\textbf{W}^{(v)})/d\textbf{W}^{(v)}=0$, we can know that $\textbf{W}^{(v)}_{(t+1)}$ obtained from Eq. \eqref{eq:upW} is the optimal solution of problem \eqref{eq:moW_n}. Therefore, for an arbitrary $\textbf{W}^{(v)}$, we know that $\overline{\mathcal{L}}(\textbf{W}^{(v)}_{(t+1)})\le\overline{\mathcal{L}}(\textbf{W}^{(v)})$, especially $\overline{\mathcal{L}}(\textbf{W}^{(v)}_{(t+1)})\le\overline{\mathcal{L}}(\textbf{W}^{(v)}_{(t)})$, that is
\begin{align}
\label{eq:17}
&||(\textbf{W}^{(v)}_{(t+1)})^T\textbf{X}^{(v)}-\textbf{B}^{(v)}\textbf{H}^T||_F^2+\eta Tr[(\textbf{W}^{(v)}_{(t+1)})^T\textbf{D}^{(v)}_{(t)}\textbf{W}^{(v)}_{(t+1)}]\notag\\
&+\gamma Tr[(\textbf{W}^{(v)}_{(t+1)})^T\textbf{X}^{(v)}\textbf{L}(\textbf{X}^{(v)})^T\textbf{W}^{(v)}_{(t+1)}]\le\notag\\
&||(\textbf{W}^{(v)}_{(t)})^T\textbf{X}^{(v)}-\textbf{B}^{(v)}\textbf{H}^T||_F^2+\eta Tr[(\textbf{W}^{(v)}_{(t)})^T\textbf{D}^{(v)}_{(t)}\textbf{W}^{(v)}_{(t)}]\notag\\
&+\gamma Tr[(\textbf{W}^{(v)}_{(t)})^T\textbf{X}^{(v)}\textbf{L}(\textbf{X}^{(v)})^T\textbf{W}^{(v)}_{(t)}].
\end{align}
It has been proved in \cite{L21} that for any nonzero vectors \textbf{u}, $\textbf{u}_t\in\mathbb{R}^c$, the following inequality holds
\begin{align}
||\textbf{u}||_2-\frac{||\textbf{u}||_2^2}{2||\textbf{u}_t||_2}\le||\textbf{u}_t||_2-\frac{||\textbf{u}_t||_2^2}{2||\textbf{u}_t||_2}.
\end{align}
We replace $\textbf{u}$ and $\textbf{u}_t$ in the above inequality by $\textbf{w}^{(v)}_{i\cdot,(t+1)}$ and $\textbf{w}^{(v)}_{i\cdot,(t)}$, respectively, and sum them from $i=1$ to $m^{(v)}$, where $\textbf{w}^{(v)}_{i\cdot,(t+1)}$ denotes the $i$-th row vector of $\textbf{W}^{(v)}_{(t+1)}$. Then we have
\begin{align}
&||\textbf{W}^{(v)}_{(t+1)}||_{2,1}-\sum_{i=1}^{m^{(v)}}\frac{||\textbf{w}^{(v)}_{i\cdot,(t+1)}||_2^2}{2||\textbf{w}^{(v)}_{i\cdot,(t)}||_2}\notag\\
\le&||\textbf{W}^{(v)}_{(t)}||_{2,1}-\sum_{i=1}^{m^{(v)}}\frac{||\textbf{w}^{(v)}_{i\cdot,(t)}||_2^2}{2||\textbf{w}^{(v)}_{i\cdot,(t)}||_2}.
\end{align}
Since the diagonal element of $\textbf{D}^{(v)}_{(t)}$ is $d_{ii,(t)}^{(v)}=\frac{1}{2||\textbf{W}^{(v)}_{i\cdot,(t)}||_2}$, according to Lemma \ref{Le:1}, we have
\begin{align}
\label{eq:20}
&||\textbf{W}^{(v)}_{(t+1)}||_{2,1}-Tr[(\textbf{W}^{(v)}_{(t+1)})^T\textbf{D}^{(v)}_{(t)}\textbf{W}^{(v)}_{(t+1)}]\notag\\
\le&||\textbf{W}^{(v)}_{(t)}||_{2,1}-Tr[(\textbf{W}^{(v)}_{(t)})^T\textbf{D}^{(v)}_{(t)}\textbf{W}^{(v)}_{(t)}].
\end{align}
From Eqs. \eqref{eq:17} and \eqref{eq:20}, we have
\begin{align}
&||(\textbf{W}^{(v)}_{(t+1)})^T\textbf{X}^{(v)}-\textbf{B}^{(v)}\textbf{H}^T||_F^2+\eta||\textbf{W}^{(v)}_{(t+1)}||_{2,1}\notag\\
&+\gamma Tr[(\textbf{W}^{(v)}_{(t+1)})^T\textbf{X}^{(v)}\textbf{L}(\textbf{X}^{(v)})^T\textbf{W}^{(v)}_{(t+1)}]\notag\\
\le&||(\textbf{W}^{(v)}_{(t)})^T\textbf{X}^{(v)}-\textbf{B}^{(v)}\textbf{H}^T||_F^2+\eta||\textbf{W}^{(v)}_{(t)}||_{2,1}\notag\\
&+\gamma Tr[(\textbf{W}^{(v)}_{(t)})^T\textbf{X}^{(v)}\textbf{L}(\textbf{X}^{(v)})^T\textbf{W}^{(v)}_{(t)}].
\end{align}
Therefore, we obtain $\mathcal{L}(\textbf{W}^{(v)}_{(t+1)})\le\mathcal{L}(\textbf{W}^{(v)}_{(t)})$.
\end{proof}

\subsubsection{Update $\textbf{B}^{(v)}$}
According to \cite{OMF}, we have the following definition.
\begin{definition}
With $\textbf{Q}\in\mathbb{R}^{m\times n}$ where $n\le m$, Assume $\textbf{X}$ with $rank(\textbf{X})=n$ and $\textbf{B}$ be known real matrices of correct dimensions. The optimization problem
\begin{align}
\min_{\textbf{Q}}||\textbf{Q}\textbf{X}-\textbf{B}||_F^2\quad s.t.~\textbf{Q}^T\textbf{Q}=\textbf{I}_n
\end{align}
is called \emph{orthogonal Procrustes problem (OPP)}.
\end{definition}
The OPP has an analytical solution, which can be derived by using the singular value decomposition (SVD).
This subproblem that is only related to $\textbf{B}^{(v)}$ is an OPP, so the updating $\textbf{B}^{(v)}$ of each iteration can make the value of the objective function decrease monotonically \cite{OMF}.

\subsubsection{Update $\textbf{Z}$}
The subproblem that is simply related to $\textbf{Z}$ is
\begin{align}
\label{eq:moZ}
\min\limits_\textbf{Z}~\mathcal{L}(\textbf{Z})=||\textbf{H}-\textbf{Z}||_F^2\quad s.t.\textbf{Z}\ge0.
\end{align}
In iteration $t+1$, it is obvious that
\begin{align}
\textbf{Z}_{(t+1)}=(z_{ij,(t+1)})_{n\times c},\quad z_{ij,(t+1)}=max(h_{ij},0)
\end{align}
is the optimal solution of subproblem \eqref{eq:moZ}. Thereby, for an arbitrary $\textbf{Z}$, in iteration $t+1$, we know $\mathcal{L}(\textbf{Z}_{(t+1)})\le\mathcal{L}(\textbf{Z})$, especially $\mathcal{L}(\textbf{Z}_{(t+1)})\le\mathcal{L}(\textbf{Z}_{(t)})$.

\subsubsection{Update $\textbf{H}$}
According to the update of $\textbf{H}$ in the main manuscript, similar to updating $\textbf{B}^{(v)}$, the subproblem that is only related to $\textbf{H}$ can also be transformed into an OPP. Thereby, the update of $\textbf{H}$ in each iteration can make the value of the objective function decrease monotonically.

\begin{figure}[!t] %\vskip -0.05in
	\begin{center}
		{\subfigure[{\scriptsize MSRC-v1}]
			{\includegraphics[width=0.24\columnwidth]{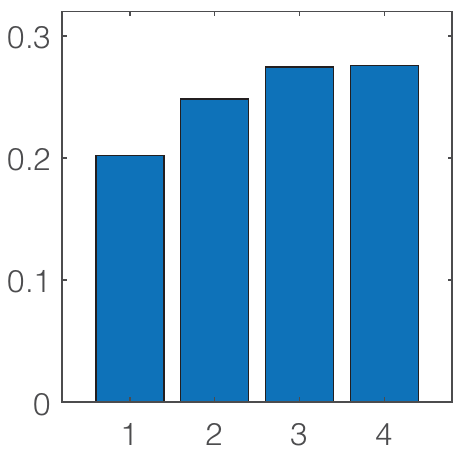}}}
		{\subfigure[{\scriptsize ORL}]
			{\includegraphics[width=0.24\columnwidth]{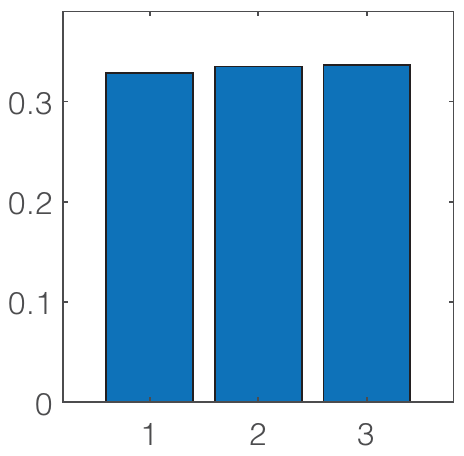}}}
		{\subfigure[{\scriptsize WebKB-Texas}]
			{\includegraphics[width=0.24\columnwidth]{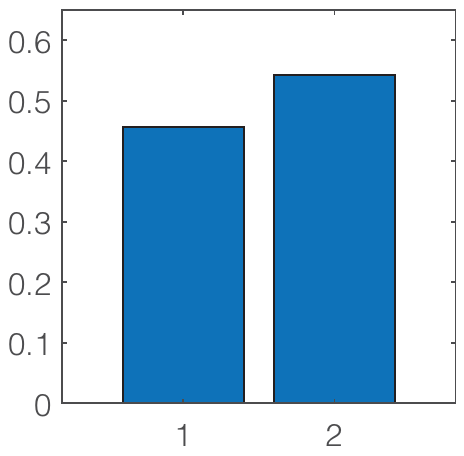}}}
		{\subfigure[{\scriptsize Caltech101-7}]
			{\includegraphics[width=0.24\columnwidth]{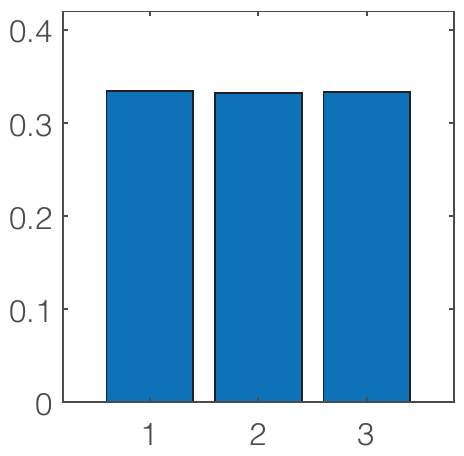}}}\vskip 0.05in
		{\subfigure[{\scriptsize Caltech101-20}]
			{\includegraphics[width=0.24\columnwidth]{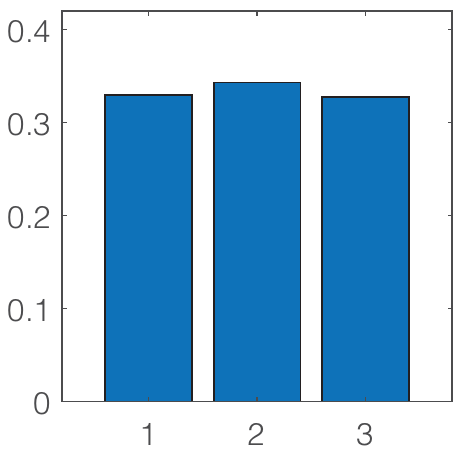}}}
		{\subfigure[{\scriptsize Mfeat}]
			{\includegraphics[width=0.24\columnwidth]{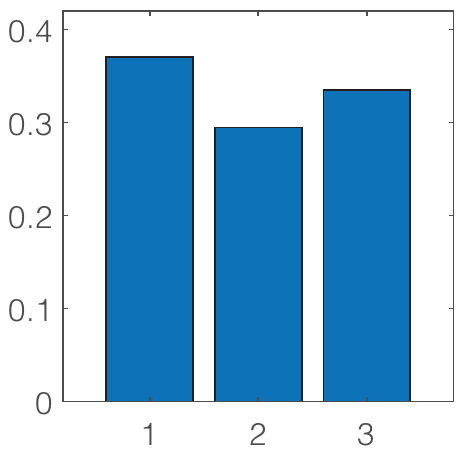}}}
		{\subfigure[{\scriptsize Handwritten}]
			{\includegraphics[width=0.24\columnwidth]{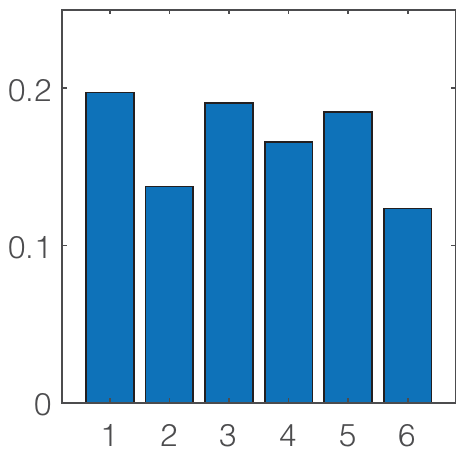}}}
		{\subfigure[{\scriptsize Outdoor-Scene}]
			{\includegraphics[width=0.24\columnwidth]{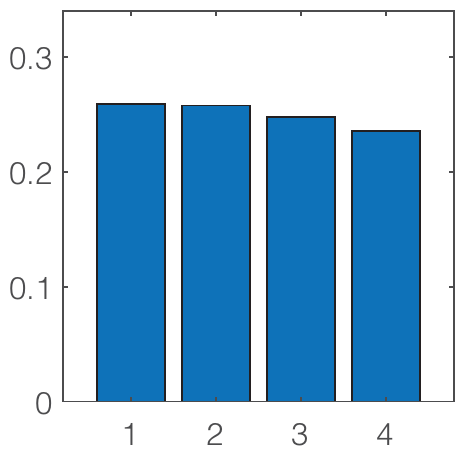}}}
		\caption{The learned view weights on the benchmark datasets.}
		\label{fig:delta}
	\end{center}
\end{figure}

\subsubsection{Update $\textbf{S}$}
The subproblem that is only related to $\textbf{S}$ is
\begin{align}
\label{eq:moS}
\min\limits_\textbf{S}\ &\mathcal{L}(\textbf{S})=\sum\limits_{v=1}^V[\gamma Tr(\textbf{Y}^{(v)}\textbf{L}(\textbf{Y}^{(v)})^T)+\beta||\textbf{S}-\delta^{(v)}\textbf{A}^{(v)}||_F^2]\notag\\
s.t.\ &\textbf{S1}=\textbf{1},\textbf{S}\ge0.
\end{align}
where $\textbf{L}=\textbf{P}-\bar{\textbf{S}}$ and $\bar{\textbf{S}}=(\textbf{S}^T+\textbf{S})/2$. If the affinity matrices $\textbf{A}^{(v)}$, for $v=1,\cdots,V$, are symmetric, then the solution $\textbf{S}$ of the subproblem \eqref{eq:moS} is also symmetric.

\begin{theorem}
If $\textbf{A}^{(v)}$ is symmetric for each $v$, in iteration $t+1$, $\mathcal{L}(\textbf{S}_{(t+1)})\le\mathcal{L}(\textbf{S}_{(t)})$ after solving the subproblem \eqref{eq:moS}.
\end{theorem}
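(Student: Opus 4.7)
My plan is to argue that the closed-form row-wise projection~\eqref{eq:3-nn46} produces the unique global minimizer $\textbf{S}_{(t+1)}$ of a strongly convex surrogate that is \emph{exactly} equivalent to the subproblem~\eqref{eq:moS}, and therefore $\mathcal{L}(\textbf{S}_{(t+1)}) \le \mathcal{L}(\textbf{S}_{(t)})$ follows immediately from the feasibility of $\textbf{S}_{(t)}$ and the global optimality of $\textbf{S}_{(t+1)}$. The key preliminary step is to show that, although $\textbf{L} = \textbf{P} - \bar{\textbf{S}}$ with $\bar{\textbf{S}} = (\textbf{S}^T + \textbf{S})/2$ depends on $\textbf{S}$ through its symmetrization, the contribution $\tfrac{\gamma}{2}\sum_v Tr(\textbf{Y}^{(v)}\textbf{L}(\textbf{Y}^{(v)})^T)$ can be rewritten as a plain linear function of the entries $s_{ij}$. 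The Laplacian quadratic identity gives
\begin{align*}
Tr(\textbf{Y}^{(v)}\textbf{L}(\textbf{Y}^{(v)})^T) = \tfrac{1}{2}\sum_{i,j} \bar{s}_{ij}\, g_{ij}^{(v)},
\end{align*}
and since $g_{ij}^{(v)} = g_{ji}^{(v)}$, relabeling the summation indices inside the $s_{ji}$ half of $\bar{s}_{ij}$ collapses this to $\tfrac{1}{2}\sum_{i,j} s_{ij}\, g_{ij}^{(v)}$.

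Combining this linear contribution with $\beta\|\textbf{S}-\delta^{(v)}\textbf{A}^{(v)}\|_F^2$ and completing the square entry by entry yields the exact equivalence $\mathcal{L}(\textbf{S}) = V\beta\sum_i \|\textbf{s}_{i\cdot} - \textbf{r}_{i\cdot}\|_2^2 + C$, where $\textbf{r}_{i\cdot}$ is the vector introduced just above~\eqref{eq:3-nn46} and $C$ is independent of $\textbf{S}$. This is a sum of $n$ strongly convex row-wise quadratics, and the feasible set $\{\textbf{S}: \textbf{S1}=\textbf{1}, \textbf{S}\ge 0\}$ decouples into $n$ independent probability-simplex constraints. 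The subproblem therefore reduces to $n$ Euclidean projections onto the probability simplex, each of which has a unique global minimizer delivered by the closed-form projection algorithm of [Anew] that underlies~\eqref{eq:3-nn46}.

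Since $\textbf{S}_{(t+1)}$ is, by construction, the unique global minimizer of $\mathcal{L}$ over the entire feasible set, and $\textbf{S}_{(t)}$ produced at the previous iteration is itself feasible, optimality gives $\mathcal{L}(\textbf{S}_{(t+1)}) \le \mathcal{L}(\textbf{S}_{(t)})$. The symmetry hypothesis on each $\textbf{A}^{(v)}$ plays a clarifying role: it ensures that the affinity contribution $\sum_v \delta^{(v)}\textbf{A}^{(v)}$ entering $\textbf{r}_{i\cdot}$ is symmetric, so the row-wise derivation in the main text (performed under the simplifying assumption ``$\bar{\textbf{S}}=\textbf{S}$'') is literally equivalent to the general ``$\bar{\textbf{S}}=(\textbf{S}^T+\textbf{S})/2$'' version via the index-swap identity above, keeping $\mathcal{L}(\textbf{S}_{(t+1)})$ and $\mathcal{L}(\textbf{S}_{(t)})$ comparable through the same reduction.

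The main subtlety will be carefully tracking the dependence of $\textbf{L}$ on the unknown $\textbf{S}$ through $\bar{\textbf{S}}$: a naive differentiation in $\textbf{S}$ would yield an apparently asymmetric linear term and suggest a spurious coupling between $s_{ij}$ and $s_{ji}$, but the index-swap trick enabled by $g_{ij}^{(v)}=g_{ji}^{(v)}$ collapses it into a single clean linear term whose coefficients are exactly those built into $\textbf{r}_{i\cdot}$. Once this reduction is justified, the remaining pieces—strong convexity of the row-wise quadratic and uniqueness of the simplex projection—are standard, and the monotonicity conclusion follows without further work.
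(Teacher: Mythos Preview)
Your proposal is correct and follows essentially the same route as the paper: reduce the $\textbf{S}$-subproblem to $n$ independent convex simplex projections, invoke global optimality of the closed-form update, and conclude $\mathcal{L}(\textbf{S}_{(t+1)})\le\mathcal{L}(\textbf{S}_{(t)})$ from feasibility of $\textbf{S}_{(t)}$. Your treatment is in fact more careful than the paper's terse argument, since you explicitly justify via the index-swap identity $\sum_{i,j}\bar{s}_{ij}g_{ij}^{(v)}=\sum_{i,j}s_{ij}g_{ij}^{(v)}$ that the Laplacian term is genuinely linear in $\textbf{S}$ even with $\bar{\textbf{S}}=(\textbf{S}+\textbf{S}^T)/2$, whereas the paper simply assumes $\bar{\textbf{S}}=\textbf{S}$ in the main-text derivation and then appeals to convexity and KKT in the appendix; one minor note is that your own argument shows the monotonicity does not actually require the symmetry of $\textbf{A}^{(v)}$, so your final paragraph about its ``clarifying role'' is somewhat superfluous.
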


\begin{proof}
According to the update of $\textbf{S}$ in the main manuscript, the above subproblem \eqref{eq:moS} is equivalent to
\begin{equation}
{\rm for~each~}i~\left\{
\begin{aligned}
&\min~||\textbf{s}_{i\cdot}-\textbf{r}_{i\cdot}||_2^2\\
&~s.t.~~\textbf{s}_{i\cdot}\textbf{1}=1,\textbf{s}_{i\cdot}\ge0.
\end{aligned}
\right.
\end{equation}
According to \cite{Anew}, we can obtain the solution that satisfies the KKT conditions. On the one hand, similar to updating $\bm\delta$, the equality constraint function $f(\textbf{s}_{i\cdot})=\textbf{s}_{i\cdot}\textbf{1}-1$ is convex, and the inequality constraint functions $f(s_{ij})=-s_{ij}$ are also convex (for $j=1,\cdots,n$). On the other hand, the objective function $f(\textbf{s}_{i\cdot})=||\textbf{s}_{i\cdot}-\textbf{r}_{i\cdot}||_2^2$ is convex. Thereby, the solution satisfying the KKT conditions is optimal, which indicates that $\mathcal{L}(\textbf{S}_{(t+1)})\le\mathcal{L}(\textbf{S}_{(t)})$.
\end{proof}

\subsection{Additional Experimental Analysis}\label{ExperimentalAnalysis}
In this section, we provide additional experimental analysis w.r.t. (i) the distribution of the learned view weights in Section~\ref{sec:weight_distribution}, (ii) the comparison results via the indicator matrix and the similarity matrix in Section~\ref{sec:comp_indicator}, and (iii) the influence of the number of nearest neighbors in Section~\ref{sec:compK}.

\subsubsection{Distribution of the Learned View Weights}
\label{sec:weight_distribution}
In this section, we illustrate distribution of the view weights learned by our JMVFG approach on the eight benchmark datasets. As mentioned in the main paper, the parameter $\bm\delta=[\delta^{(1)},\delta^{(2)},...,\delta^{(V)}]^T$ is a learnable parameter in JMVFG, whose value can measure the influence of each view and is iteratively learned during the optimization. As shown in Fig.~\ref{fig:delta}, the contributions of multiple views are different, which lead to different learned view weights in the process of minimizing the objective function of our framework.

\begin{figure}[!t] %\vskip -0.1in
	\begin{center}
		{\subfigure[{\scriptsize MSRC-v1}]
			{\includegraphics[width=0.24\columnwidth]{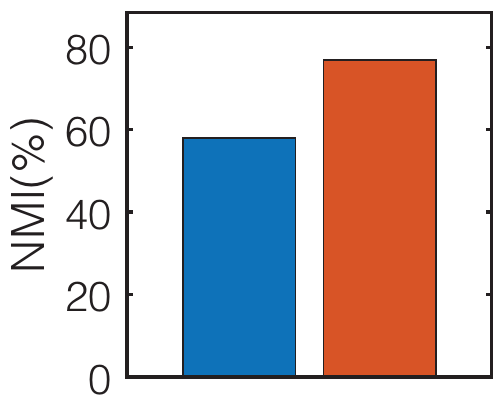}}}
		{\subfigure[{\scriptsize ORL}]
			{\includegraphics[width=0.24\columnwidth]{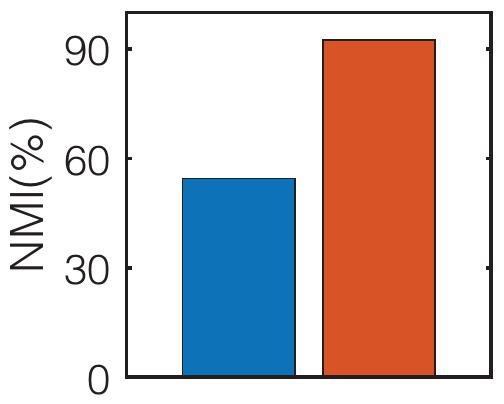}}}
		{\subfigure[{\scriptsize WebKB-Texas}]
			{\includegraphics[width=0.24\columnwidth]{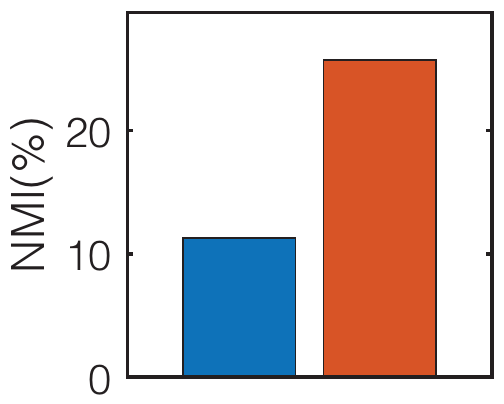}}}
		{\subfigure[{\scriptsize Caltech101-7}]
			{\includegraphics[width=0.24\columnwidth]{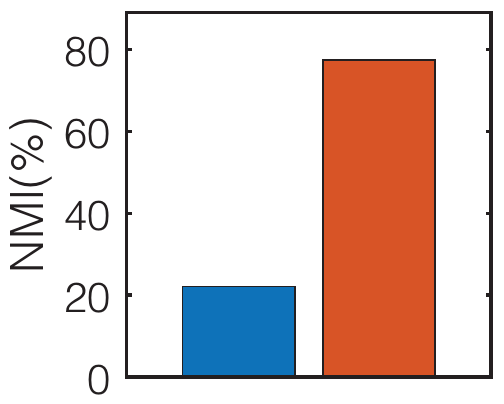}}}\vskip -0.05in
		{\subfigure[{\scriptsize Caltech101-20}]
			{\includegraphics[width=0.24\columnwidth]{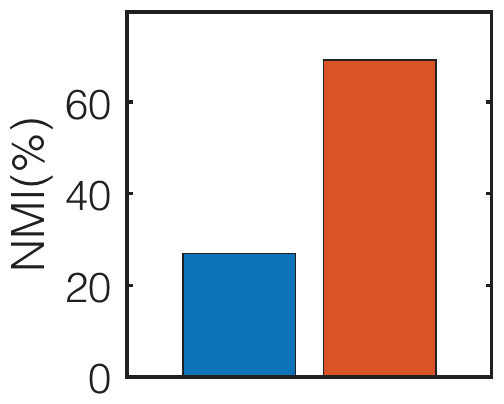}}}
		{\subfigure[{\scriptsize Mfeat}]
			{\includegraphics[width=0.24\columnwidth]{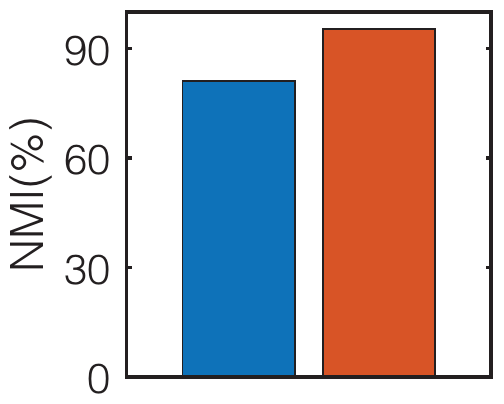}}}
		{\subfigure[{\scriptsize Handwritten}]
			{\includegraphics[width=0.24\columnwidth]{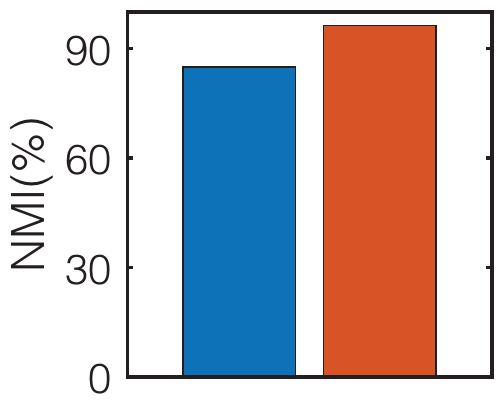}}}
		{\subfigure[{\scriptsize Outdoor-Scene}]
			{\includegraphics[width=0.24\columnwidth]{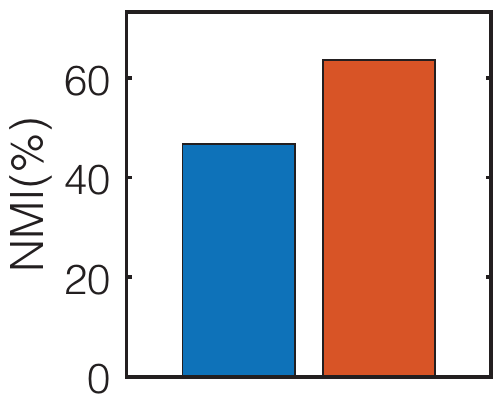}}}
		{\subfigure
			{\includegraphics[width=0.4\columnwidth]{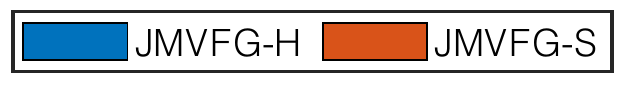}}}\vskip -0.03in
		\caption{The NMI(\%) scores by JMVFG using the indicator matrix against using the similarity matrix.}\vskip -0.1in
		\label{fig:H_vs_S}
	\end{center}
\end{figure}

\begin{figure}[!t] %\vskip -0.05in
	\begin{center}
		{\subfigure[{\scriptsize MSRC-v1}]
			{\includegraphics[width=0.24\columnwidth]{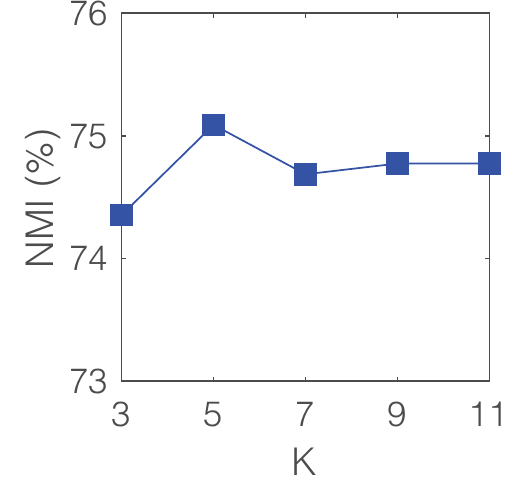}}}
		{\subfigure[{\scriptsize ORL}]
			{\includegraphics[width=0.24\columnwidth]{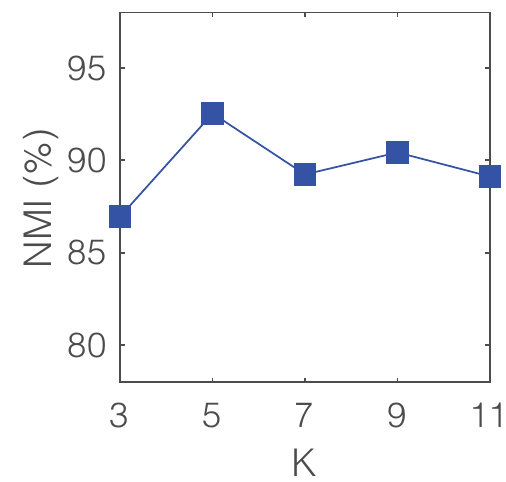}}}
		{\subfigure[{\scriptsize WebKB-Texas}]
			{\includegraphics[width=0.24\columnwidth]{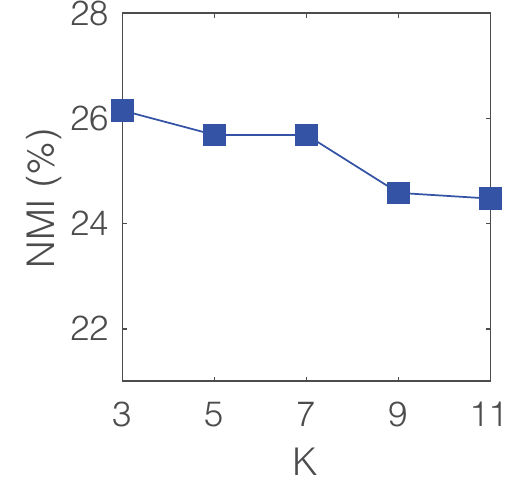}}}
		{\subfigure[{\scriptsize Caltech101-7}]
			{\includegraphics[width=0.24\columnwidth]{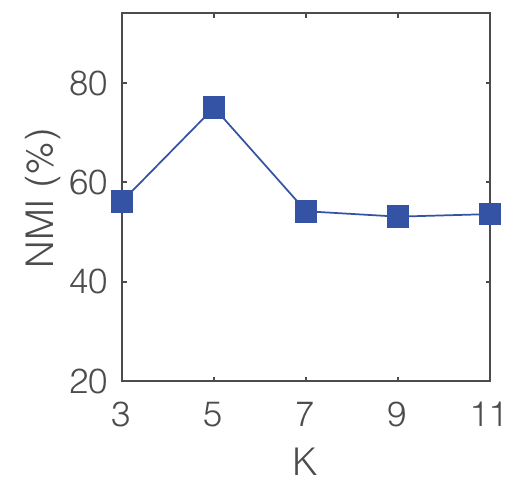}}}\vskip 0.05in
		{\subfigure[{\scriptsize Caltech101-20}]
			{\includegraphics[width=0.24\columnwidth]{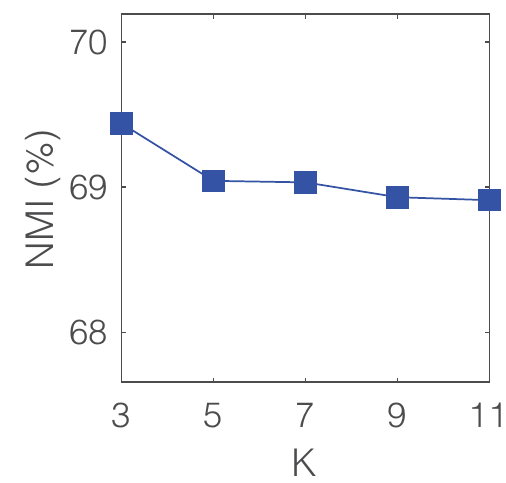}}}
		{\subfigure[{\scriptsize Mfeat}]
			{\includegraphics[width=0.24\columnwidth]{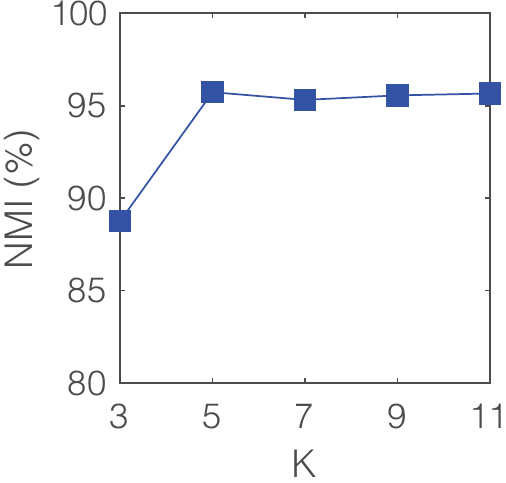}}}
		{\subfigure[{\scriptsize Handwritten}]
			{\includegraphics[width=0.24\columnwidth]{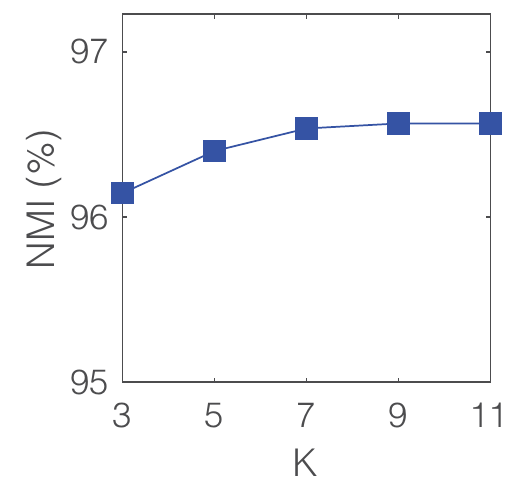}}}
		{\subfigure[{\scriptsize Outdoor-Scene}]
			{\includegraphics[width=0.24\columnwidth]{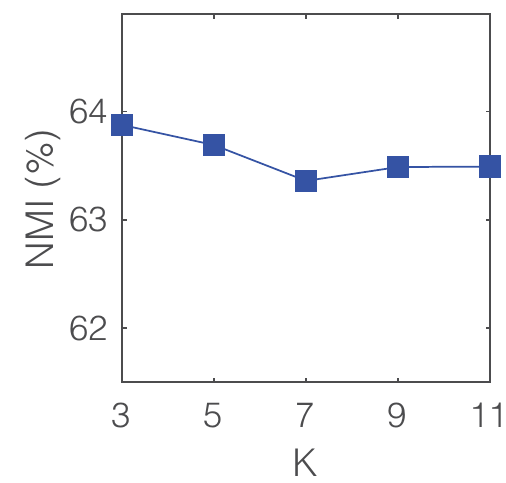}}}\vskip -0.03in
		\caption{Average performance (w.r.t. NMI(\%)) of JMVFG with varying number of nearest neighbors $K$.}\vskip -0.1in
		\label{fig:paraK}
	\end{center}
\end{figure}

\subsubsection{Comparison of the Results via the Indicator Matrix and the Similarity Matrix}
\label{sec:comp_indicator}
In this section, we compare the clustering performances of our JMVFG method using the indicator matrix or the similarity matrix for the multi-view clustering task. Specifically, for the variant using the indicator matrix (denoted as JMVFG-H), we evaluate its performance by using the column index of the largest element in each row of the indicator matrix to form the clustering result. For the variant using the similarity matrix (denoted as JMVFG-S), we evaluate its performance by peforming spectral clustering on the learned similarity matrix. As shown in Fig.~\ref{fig:H_vs_S}, JMVFG-S generally lead to more competitive clustering performance than JMVFG-H on the datasets, probably due to the fact that the spectral clustering (performed on the similarity matrix) has better ability to deal with non-linearly separable data.

\subsubsection{Influence of Number of Nearest Neighbors $K$}
\label{sec:compK}
In this section, we test the influence of number of the nearest neighbors $K$ in our JMVFG method. As the value of $K$ varies from $3$ to $11$, the average performance (w.r.t. NMI) of JMVFG is illustrated in Fig. \ref{fig:paraK}.
As shown in Fig. \ref{fig:paraK}, our JMVFG, our JMVFG method is able to perform stably with varying number of nearest neighbors. Empirically, a moderate value of $K$, e.g., in the range of $[5,10]$, is suggested. In this work, we adopt $K=5$ in the experiments on all the benchmark datasets.

\end{document}